\documentclass{amsart}

\usepackage[utf8]{inputenc}
\usepackage[T1]{fontenc}
\usepackage{amsmath,amsthm,amssymb,mathtools}
\usepackage{mathrsfs}
\usepackage{enumitem}
\usepackage{graphicx,booktabs,caption,placeins,xcolor}
\graphicspath{{figures/qk_geometry/}}
\usepackage[margin=1in]{geometry}
\usepackage{xurl}
\usepackage[colorlinks=true,hyperindex,pagebackref=false,citecolor=cyan,linkcolor=cyan]{hyperref}
\mathtoolsset{showonlyrefs}

\newcommand{\defn}[1]{\emph{#1}}
\newcommand{\RR}{\mathbb R}
\newcommand{\EE}{\mathbb E}
\newcommand{\End}{\operatorname{End}}

\newcommand{\id}{\operatorname{id}}
\newcommand{\tr}{\operatorname{tr}}

\theoremstyle{plain}
\newtheorem{theorem}{Theorem}
\newtheorem{lemma}{Lemma}
\newtheorem{proposition}{Proposition}
\newtheorem{corollary}{Corollary}
\theoremstyle{definition}
\newtheorem{definition}{Definition}
\newtheorem{observation}{Observation}
\newtheorem{example}{Example}
\theoremstyle{remark}
\newtheorem{remark}{Remark}

\definecolor{typeIpluscolor}{HTML}{1f77b4}
\definecolor{typeIIcolor}{HTML}{e07a00}
\definecolor{typeIminuscolor}{HTML}{5b5b5b}
\newcommand{\typeIplus}{{\color{typeIpluscolor}$+$}}
\newcommand{\typeII}{{\color{typeIIcolor}$\circ$}}
\newcommand{\typeIminus}{{\color{typeIminuscolor}$-$}}

\makeatletter
\def\subsection{\@startsection{subsection}{2}%
  \z@{.5\linespacing\@plus.7\linespacing}{.3\linespacing}%
  {\normalfont\bfseries}}
\makeatother

\title{On attention heads and bilinear forms}

\author{Andrew O'Desky}
\date{September 17, 2026.}

\begin{document}

\begin{abstract}
We study the symmetric and antisymmetric parts of bilinear forms in the attention heads of trained large language models. We introduce an orthogonally invariant profile map from real bilinear forms to a three-dimensional simplex and observe that profiles of trained bilinear forms accumulate near profiles of rank-one bilinear forms. We prove that the symmetric part of a bilinear form in an attention head is the sum of a hyperbolic form and a zero form for a Zariski-dense subset of query-key matrices.
\end{abstract}

\maketitle

\section{Introduction}

The stunning success of modern language models relies primarily on computational scale
and the discovery of a new class of functions known as \emph{transformers} \cite{vaswani2017attention}.
Transformers are remarkably versatile
at approximating complicated functions
traditionally requiring sophisticated algorithms or human cognition,
yet their versatility is not well-understood.
Researchers from diverse areas
have proposed different models for interpreting them
but it remains unclear why transformers are so effective.

The novel ingredient in transformers is the attention mechanism.
Language to be processed is modeled as
a sequence of vectors in a real vector space,
and the attention mechanism facilitates real-time interactions
between words which encode context
for later stages of the transformer.
These interactions are computed using
a pretrained \emph{bilinear form} encoded in
the attention weights.
While real bilinear forms are well-understood mathematically,
the relationship between
properties of the bilinear forms in attention heads
and aspects of language processing is poorly understood.
This paper is an attempt to better understand that relationship.
We also give a formulation of
transformers equivalent to the usual construction
which we hope will be inviting to mathematicians
who are interested in transformers.
For readers from the machine learning community,
we have attempted to provide enough background so that
the paper is self-contained.

\subsection{Symmetric bilinear forms in attention heads are balanced}

Any bilinear form $B$ can be uniquely decomposed as $B = S + T$
where $S$ is {symmetric} and $T$ is {antisymmetric}.
In recent work \cite{migliarini2025selfhating}
on the \texttt{gpt2} language model,
Migliarini observed that the symmetric part $S$
of one of its head's bilinear forms had 33 negative eigenvalues among
the top 64 eigenvalues when sorted by absolute magnitude,
and hypothesized that the presence of negative eigenvalues
was responsible for the ``self-suppressing'' behavior of this head.
Migliarini's observation raises many questions about
the distributions of these eigenvalues in general,
foremost being whether this distribution is atypical.
We inspected all $9{,}512$ heads in the fourteen models listed in
\S\ref{sec:experimentsQK}.  Their eigenvalues give a striking
observation.

\begin{observation}\label{obs:balance}
Every attention head $A$ has a
bilinear form whose symmetric part $S$ has precisely $2n$ nonzero
eigenvalues, precisely half of which are negative, where $n=n_A$ is the
head dimension.
\end{observation}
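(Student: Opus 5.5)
This observation is empirical, but it has a mathematical explanation, which the abstract promises: for a Zariski-dense set of query-key matrices, the symmetric part is a hyperbolic form plus a zero form. I will prove that statement. The setup: the bilinear form of a head is $B(x,y)=x^\top W_Q W_K^\top y$ on the residual space $V=\RR^d$, where $W_Q,W_K\in\RR^{d\times n}$ and $n<d$. So $B$ has matrix $M=QK^\top$ of rank at most $n$. The symmetric part is
\begin{equation}
S=\tfrac12(QK^\top+KQ^\top)=\tfrac12\,[\,Q\ K\,]\begin{pmatrix}0&I_n\\ I_n&0\end{pmatrix}[\,Q\ K\,]^\top .
\end{equation}
Here $S$ is the pullback of the standard hyperbolic form $H=\begin{pmatrix}0&I\\ I&0\end{pmatrix}$ on $\RR^{2n}$ along the linear map $P=[\,Q\ K\,]^\top\colon V\to\RR^{2n}$.

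The key steps are as follows.

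\emph{Step 1.} When $P$ is surjective, i.e.\ the $d\times 2n$ matrix $[\,Q\ K\,]$ has full column rank $2n$ (which needs $2n\le d$), the radical of $S$ is exactly $\ker P$. This is because $H$ is nondegenerate: if $S(x,\cdot)=0$, then $H(Px,Py)=0$ for all $y$, so $H(Px,\cdot)=0$ on $\RR^{2n}$, hence $Px=0$.

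\emph{Step 2.} Choose a complement $U$ of $\ker P$. Then $P|_U\colon U\to\RR^{2n}$ is an isometry from $(U,S|_U)$ to $(\RR^{2n},H)$. So $S\cong H\oplus 0_{d-2n}$.

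\emph{Step 3.} By Sylvester's law of inertia, and since $H$ has signature $(n,n)$ (its eigenvalues are $\pm1$, each with multiplicity $n$), $S$ has exactly $2n$ nonzero eigenvalues, $n$ of them negative. Orthogonal diagonalization of the symmetric matrix $S$ gives eigenvalue signs that agree with the inertia.

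\emph{Step 4.} The set of $(Q,K)$ where $[\,Q\ K\,]$ has rank $2n$ is the complement of the common zero locus of the $2n\times 2n$ minors. That locus is a proper algebraic subset, since some $(Q,K)$ with independent columns exist. Its complement is therefore Zariski-open and nonempty, hence Zariski-dense in $\RR^{d\times n}\times\RR^{d\times n}$.

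The main obstacle is the link to the observation as stated. The observation says ``every head'', which can only be verified empirically: trained weights are not guaranteed to be generic. The theorem explains why the observed count is $2n$ with exactly $n$ negative eigenvalues, namely that this is what happens generically. Two further points need care.

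First, numerical rank. The experiments must use a tolerance, and I would state the result as holding exactly whenever $[\,Q\ K\,]$ has rank $2n$.

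Second, conventions. The result assumes the convention that the bilinear form is $QK^\top$ on the residual stream, possibly with layer-norm scalings absorbed into $Q$ and $K$; these preserve the factorized form. It also assumes $2n\le d$, which holds for all standard models (e.g.\ $n=64$, $d=768$ for \texttt{gpt2}). If $2n>d$, the same argument applied to $P$ injective instead gives a nondegenerate form of signature determined by restricting $H$ to the image of $P$, and the clean count fails. I would note this as the hypothesis under which the statement holds.
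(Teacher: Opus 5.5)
Your proof is correct, and its core coincides with the paper's. Both express $2S$ as the pullback of the standard hyperbolic form on $\RR^{2n}$ along the stacked map. Both then use nondegeneracy of that form to identify the radical of $S$ with the kernel of the stacked map, so that $S$ has rank $2n$ under surjectivity.

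The two arguments differ in how this rank is split as $n+n$. The paper invokes Lemma~\ref{lem:rank-balance}: since $K^TQ$ has rank at most $n$, its kernel is an $S$-isotropic subspace of dimension at least $N-n$. This caps both $n_+$ and $n_-$ at $n$, and $n_++n_-=2n$ then forces equality. You instead observe that the stacked map, restricted to a complement $U$ of its kernel, is a linear isomorphism carrying $S|_U$ to one half of the hyperbolic form. It is therefore an isometry only up to that harmless factor, which your write-up omits. Hence $S$ is congruent to a scaled hyperbolic form plus a zero form, and Sylvester's law gives the signature at once.

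Your route is more direct and produces the isomorphism class stated in the abstract without an auxiliary lemma. The paper's route isolates the bound $n_\pm\le n$, which holds for every head whether or not the stacked map is surjective. Your Zariski-density argument via the $2n\times 2n$ minors is fine and supplies a step the paper asserts without proof.

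One small correction to your closing aside. When $2n>d$ and $P$ is injective, $S$ is the restriction of the hyperbolic form to $\operatorname{im}P$, and such a restriction can be degenerate. Nondegeneracy in that case therefore holds only generically, not always.
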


In the language of bilinear forms, the symmetric bilinear form in
every attention head $A$ has signature $(n_A,n_A,N_A-2n_A)$
where $N_A$ is the embedding dimension.
This observation is explained by the following theorem
which proves this is a linear-algebraic consequence of
the standard query--key construction for attention heads.
Let $C$ be a real vector space of dimension $N$ and let $\ell$ be a positive integer.
Let $H$ be a real vector space of dimension $n$.
For linear maps $Q, K \colon C \to H$,
let $A\colon C^\ell \to C^\ell$ denote the length $\ell$ attention head on $C$
constructed from $Q$ and $K$ as query and key transformations.

\begin{theorem}[Theorem~\ref{thm:heads-have-balanced-attention}]\label{thm:balancedIntro}
Let $S$ denote the symmetric component of the bilinear form
of $A$.
If the stacked map
\[
    \binom{Q}{K} \colon C \to H\oplus H,
    \qquad
    x \mapsto (Qx, Kx)
\]
is surjective, then the signature of $S$ is $(n,n,N-2n)$.
In particular, if the head dimension is
no more than half the embedding dimension 
then there is a Zariski-dense subset of query and key transformations 
whose attention head has this property.
\end{theorem}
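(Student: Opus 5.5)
The bilinear form of $A$ is $B(x,y)=\langle Qx,Ky\rangle_H$ for a fixed inner product on $H$ (any fixed scaling such as $1/\sqrt n$ does not affect the signature). Its symmetric part is
\[
S(x,y)=\tfrac12\bigl(\langle Qx,Ky\rangle+\langle Kx,Qy\rangle\bigr).
\]
The plan is to write $S$ as the pullback of a fixed nondegenerate form on $H\oplus H$ along the stacked map $M=\binom{Q}{K}$. Define on $H\oplus H$ the symmetric form
\[
\beta\bigl((u,v),(u',v')\bigr)=\tfrac12\bigl(\langle u,v'\rangle+\langle v,u'\rangle\bigr).
\]
Then $S(x,y)=\beta(Mx,My)$ by direct substitution. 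The form $\beta$ is the standard hyperbolic form on $H\oplus H$, and its signature is $(n,n,0)$. To see this, change variables to $p=(u+v)/\sqrt2$ and $q=(u-v)/\sqrt2$. Then $\beta\bigl((u,v),(u,v)\bigr)=\langle u,v\rangle=\tfrac12(|p|^2-|q|^2)$, with $p,q$ ranging independently over $H$.

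Next I use the surjectivity of $M$. Choose a complement $W$ of $\ker M$ in $C$, so that $M|_W\colon W\to H\oplus H$ is an isomorphism and $\dim\ker M=N-2n$. For $x\in\ker M$ we have $S(x,y)=\beta(0,My)=0$ for all $y$, so $\ker M$ lies in the radical of $S$. The restriction $S|_W$ is isometric to $\beta$ via $M|_W$, so it is nondegenerate of signature $(n,n)$. Since $C=W\oplus\ker M$ and $\ker M$ is $S$-orthogonal to everything, $S$ is the orthogonal sum of a form of signature $(n,n,0)$ and a zero form of dimension $N-2n$. By Sylvester's law of inertia, the signature of $S$ is $(n,n,N-2n)$. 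This also gives the hyperbolic-plus-zero decomposition claimed in the abstract.

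For the Zariski-density statement, identify pairs $(Q,K)$ with $\operatorname{Hom}(C,H\oplus H)\cong\RR^{2nN}$. Surjectivity of $M$ means it has rank $2n$. This requires $2n\le N$, which is the hypothesis, and it is the nonvanishing of at least one $2n\times 2n$ minor of the matrix of $M$. The set of surjective $M$ is therefore a nonempty Zariski-open subset: it is nonempty because $2n\le N$ lets us take $M$ to be a coordinate projection. A nonempty Zariski-open subset of an irreducible affine space is Zariski-dense.

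The main obstacle is only bookkeeping: matching the paper's precise definition of ``the bilinear form of $A$'' (its scaling, and whether it is $\langle Qx,Ky\rangle$ or its transpose) to the form $B$ used here. Neither choice changes $S$ up to a positive scalar, so the signature is unaffected.
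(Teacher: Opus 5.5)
Your proof is correct. It starts the same way as the paper's: $S$ is the pullback of the hyperbolic form on $H\oplus H$ along the stacked map. The final step is different, though.

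The paper uses only that this form is nondegenerate. That gives $\operatorname{rad} S=\ker M$, hence $\operatorname{rank} S=2n$. The even split then comes from a separate lemma: the symmetric part of any $L$ with $\operatorname{rank} L\le n$ (here $L=K^TQ$) has $n_+\le n$ and $n_-\le n$. The reason is that its quadratic form vanishes on $\ker L$, which has dimension at least $N-n$.

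You instead compute the signature of $\beta$ outright, via $p=(u+v)/\sqrt2$ and $q=(u-v)/\sqrt2$. You then transport it through the isometry $M|_W$ and the $S$-orthogonal splitting $C=W\oplus\ker M$. This is closer to the argument the paper uses later in the appendix, where $S=\tfrac12 M^TJM$ and Sylvester's law is applied to $G^{1/2}JG^{1/2}$, which is congruent to $J$.

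Each route has an advantage. Yours is more self-contained and gives the isometry class itself (a hyperbolic form plus a zero form), not just the three integers. The paper's separates the roles of the hypotheses: low rank caps $n_\pm$ for every head, degenerate or not, and surjectivity only supplies the rank. That is the point of the remark after the lemma. Your setup yields that cap as well, since a pullback $M^*\beta$ along any linear map has at most as many positive (or negative) directions as $\beta$.

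Your Zariski-density argument via the $2n\times 2n$ minors is correct, and it supplies a step the paper leaves implicit.
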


The theorem can also be applied to RoPE and ALiBi attention 
to obtain the same conclusion. 

\subsection{Profiles of bilinear forms in language models}

We next turn to properties which are learned rather than structural.
To measure these we introduce a map on
real bilinear forms valued in the $3$-simplex
which is invariant under positive scaling and orthogonal changes of coordinates
(Definition~\ref{defn:profileMap}):
\[
    \pi \colon M_N(\RR) \smallsetminus \{0\} \longrightarrow \Delta_3,
    \qquad
    L=S+T \longmapsto (a, b, c, d), \qquad a,b,c \geq 0,\,\,\, a + b + c \leq 1.
\]
We call $\pi(L)$ the \defn{profile} of $L$.
The purpose of $\pi$ is to give a qualitative description of bilinear forms.
The fibers of $\pi$ over the faces of $\Delta_3$ can be explicitly described
(Theorem~\ref{thm:simplex-faces}), 
and the profiles of rank-one forms fill out 
a one-parameter family $\Theta \subset \Delta_3$. 

\begin{figure}[!htbp]
\centering
\includegraphics[width=0.9\textwidth]{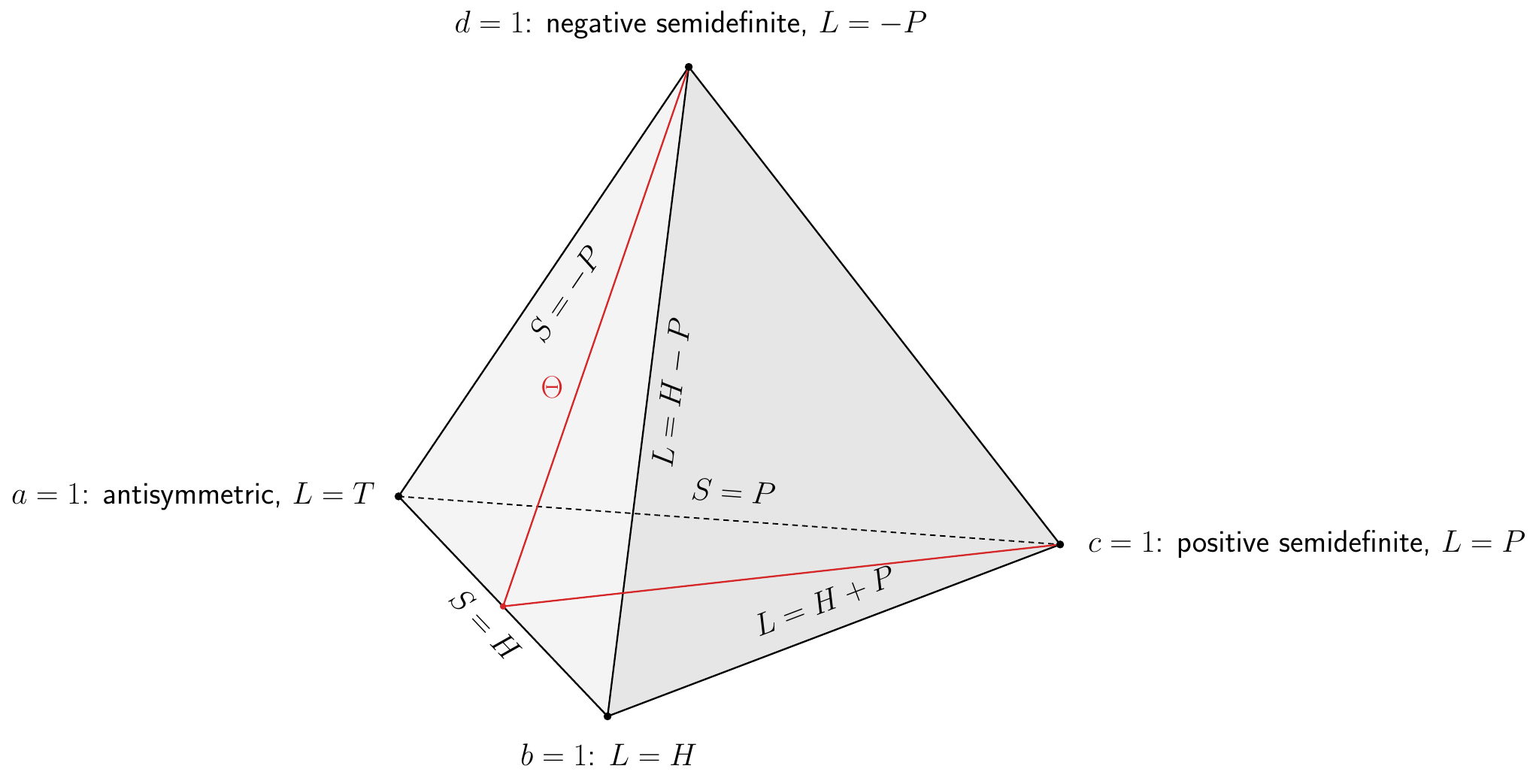}
\caption{The simplex $\Delta_3$ of profiles of nonzero bilinear forms.
Red segments comprise $\Theta = \{a=b,\;cd=0\}$. 
$H$ denotes a symmetric matrix whose spectrum is symmetric
about zero and $P$ a positive semidefinite matrix.
The front-right facet $a=0$ corresponds to symmetric forms,
the bottom facet $d = 0$ to $S = H + P$,
and the left facet to $S = H - P$.
Fibers of $\pi$ over the relative interiors of
the back facet $\{b = 0\}$
and the top-right edge are empty.}
\label{fig:simplex-labels}
\end{figure}

We examined the profiles of bilinear forms
from fourteen language models.
For large rank, the image of the profile map fills up most of the simplex.
Nonetheless, the profiles of trained bilinear forms from language models
tend to accumulate near profiles of rank-one forms.

\begin{observation}\label{obs:head-profiles}
In every one of the fourteen models except \texttt{opt-350m},
the profiles of their bilinear forms accumulate near
the one-parameter family
$\Theta = \{a = b,\; cd = 0\} \subset \Delta_3$
of profiles of rank-one forms.
\end{observation}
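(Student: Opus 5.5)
Because this statement is an empirical observation about trained checkpoints, the plan has two parts. One part is a mathematical verification that $\Theta$ is exactly the locus of rank-one profiles. The other is a measurement protocol that makes ``accumulate near $\Theta$'' precise and checkable for each of the fourteen models.

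\emph{Theoretical part.} First I would confirm that $\pi$ maps the rank-one forms onto $\Theta=\{a=b,\;cd=0\}$. Write $L=uv^{\top}$ with $u,v\neq 0$. Using orthogonal invariance and positive scaling invariance of $\pi$ (Definition~\ref{defn:profileMap}), I would normalise $|u|=|v|=1$. This leaves a single parameter $t=\langle u,v\rangle\in[-1,1]$. Then $S=\tfrac12(uv^{\top}+vu^{\top})$ has exactly two nonzero eigenvalues, $\tfrac12(t\pm 1)$, and $T=\tfrac12(uv^{\top}-vu^{\top})$ has $\|T\|^2=\tfrac12(1-t^2)$. Substituting these spectral quantities into the coordinates of $\pi$ should give $a=b$ identically. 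The sign of $t$ determines which of $c$ and $d$ is nonzero, with both vanishing at $t=0$; this gives $cd=0$. As $t$ sweeps $[-1,1]$ the image traces the two red segments of Figure~\ref{fig:simplex-labels}, meeting at the $t=0$ point. I would cross-check this against the face description in Theorem~\ref{thm:simplex-faces}. In particular, $t=\pm1$ gives $L=\pm uu^{\top}$, a semidefinite symmetric form, which lies on the facet $a=0$.

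\emph{Empirical part.} I would extract, for every one of the $9{,}512$ heads, the bilinear form $L=Q^{\top}K$ (or the per-head analogue for RoPE/ALiBi models, as in \S\ref{sec:experimentsQK}) and compute $\pi(L)$. Here $\pi$ depends only on the spectra of $S$ and on $\|T\|$. By Theorem~\ref{thm:balancedIntro}, the nonzero spectrum of $S$ lives in a space of dimension at most $2n$. So I would compute it from a $2n\times 2n$ compression, using an orthonormal basis of the span of the rows of $Q$ and $K$, rather than from the full $N\times N$ matrix.

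To quantify accumulation, I would define $\delta(L)=\operatorname{dist}(\pi(L),\Theta)$ in the Euclidean metric on $\Delta_3$. For each model I would report the empirical distribution of $\delta$. I would compare it with a null distribution obtained from random Gaussian $Q,K$ of the same shapes, and also with the trained heads' own spectra randomly rotated, which preserves singular values but scrambles the alignment between $Q$ and $K$. The claim is then that for each model except \texttt{opt-350m}, the trained distribution of $\delta$ is concentrated near $0$ (say median well below the null median, with a one-sided Kolmogorov--Smirnov test). For \texttt{opt-350m} I would check that it fails.

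As a mechanistic explanation rather than a proof, I would also check that the stable rank $\|L\|_F^2/\|L\|_{\mathrm{op}}^2$ of trained heads is small. By continuity of $\pi$ away from $0$, a dominant singular direction $\sigma_1u_1v_1^{\top}$ forces $\pi(L)$ close to $\pi(u_1v_1^{\top})\in\Theta$.

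\emph{Main obstacle.} The main obstacle is making ``near'' nonarbitrary. For large rank the image of $\pi$ fills most of $\Delta_3$, so I must show the clustering is not an artifact of how $\pi$ pushes forward generic measures. I would first try the rotated-spectrum null, since it isolates the learned alignment between $Q$ and $K$. If that null already clusters near $\Theta$, I would instead bound $\delta$ quantitatively in terms of $\sigma_2/\sigma_1$ and test that relationship across heads.
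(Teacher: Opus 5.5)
Your theoretical part is correct and is essentially Proposition~\ref{prop:rankOneLocus}. You diagonalize $S$ on $\operatorname{span}\{u,v\}$ using $u\pm v$ instead of writing the $2\times2$ block in the basis $e,f$. The problems are in the empirical part, which is where all the content of the observation lies; the paper itself supports it only by the profile plots (Figures~\ref{fig:abcd-simplex-intro} and~\ref{fig:abcd-simplex-scatter}), not by a proof. First, your null comparison points the wrong way. As the paper remarks, profiles of random forms accumulate tightly at $\Theta_\circ=(\tfrac12,\tfrac12,0,0)$, which lies \emph{on} $\Theta$. For Gaussian $Q,K$, replacing $K$ by $-K$ preserves the law and sends $L\mapsto-L$, which exchanges $c$ and $d$. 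Meanwhile $a$ has mean $(N-1)/(2N)$ (Proposition~\ref{prop:expected-symmetric-energy-low-rank-qk}), so concentration gives $a\approx b\approx\tfrac12$ and $c,d\approx0$. Your rotated-spectrum null has the same two features: a Haar rotation of $C$ makes $L\mapsto-L$ a symmetry of the ensemble and makes $\operatorname{tr}(L^2)/\|L\|^2=O(1/N)$. So it should land at $\Theta_\circ$ as well. Under either null $\delta\approx0$. The criterion ``median of $\delta$ well below the null median'' is therefore not something trained heads can meet, and your test would likely reject the observation for every model rather than only \texttt{opt-350m}. What is notable is that trained profiles travel far from $\Theta_\circ$ along the branches $\Theta_\pm$ ($71.5\%$ of heads lie outside $\mathscr N$). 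Yet they stay near $\Theta$ instead of entering the large region of $\Delta_3$ that $\pi$ reaches at high rank. A meaningful statistic must measure distance transverse to $\Theta$ relative to displacement along it, e.g.\ against nulls matched in $c+d$.

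Second, \S\ref{sec:effectiveRank} rules out both your fallback (bounding $\delta$ by $\sigma_2/\sigma_1$) and your proposed mechanism (small stable rank, then continuity of $\pi$ near $\sigma_1u_1v_1^T$). Since $\sigma_1^4\le\sum_i\sigma_i^4$, one has $\sigma_1^2/\|L\|^2\le\mathrm{PR}^{-1/2}$ for the participation ratio $\mathrm{PR}$, whose median is $48.5$. So the top rank-one piece carries at most about $14\%$ of $\|L\|^2$ at a typical head, and continuity of $\pi$ gives nothing. The paper says explicitly that accumulation is not explained by proximity to low-rank forms. Its partial explanation is by spectral shape instead. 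By Theorem~\ref{thm:accumulation}, proportional sorted lobes $\lambda_-\approx\rho\lambda_+$ (Observation~\ref{obs:pairing}) force $\pi(S)$ toward $p(\Theta)$, i.e.\ $c_Sd_S\approx0$. Lemma~\ref{lem:profile-scale} then transfers this to $cd\approx0$ for $L$. The other condition, $a\approx b$, i.e.\ $\|T\|^2\approx2\langle\lambda_+,\lambda_-\rangle$, is not covered by that theorem. It would follow if the sorted list $\sigma(T)$ were the entrywise geometric mean $\bigl(\sqrt{(\lambda_+)_j(\lambda_-)_j}\bigr)_j$ of the lobes, since $\|T\|^2=2\sum_j\sigma_j(T)^2$. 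This holds exactly in rank one and is suggested at trained heads by Observations~\ref{obs:modes} and~\ref{obs:sigma-lambda}. Those relations, not proximity to rank one, are what your protocol should test.
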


\begin{figure}[!htbp]
\centering
\includegraphics[width=\textwidth]{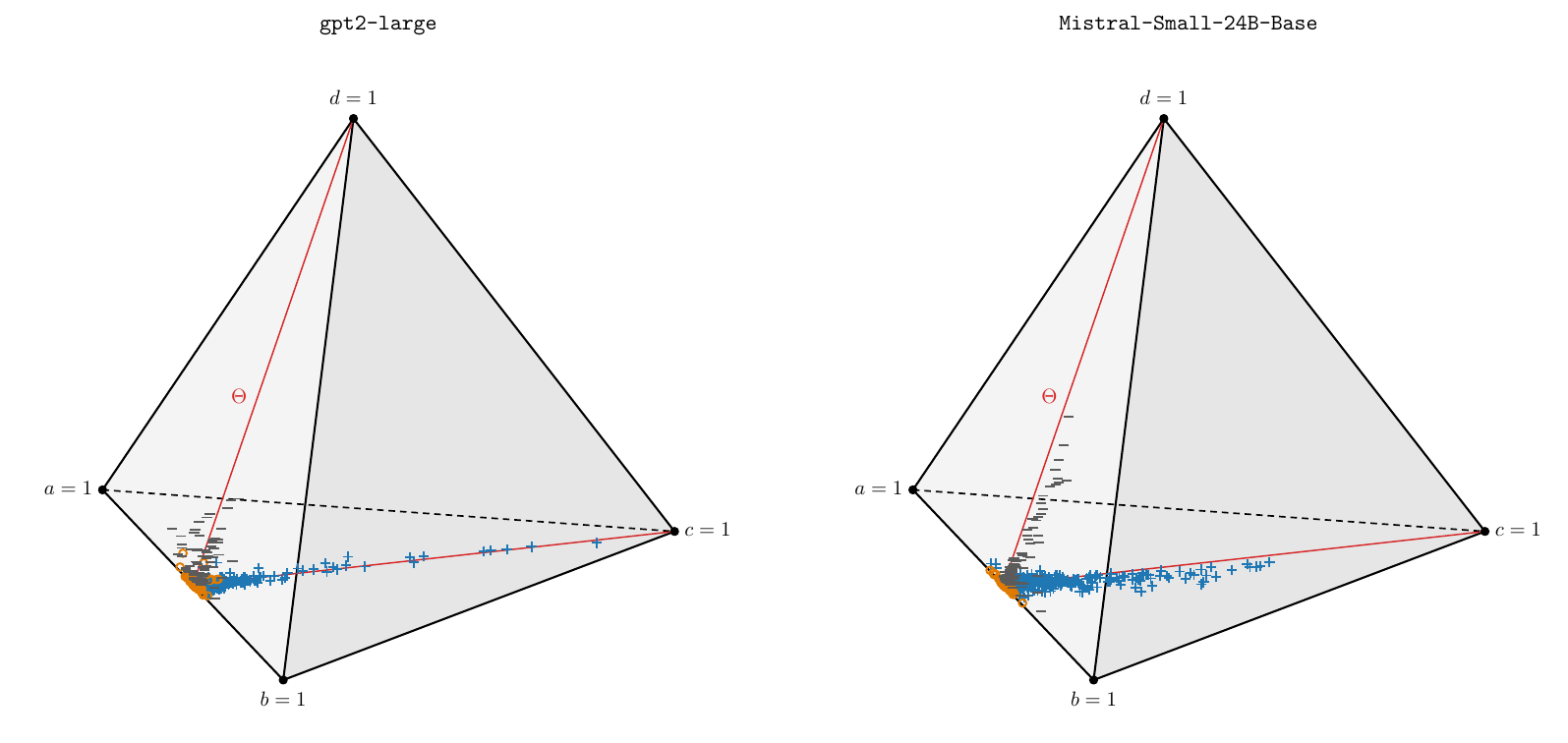}
\caption{Profiles of the bilinear forms
of \texttt{gpt2-large} and of \texttt{Mistral-Small-24B-Base}.
The left model uses QKV attention while the right uses RoPE,
showing that accumulation occurs for both types of positional encoding.
Markers
(\typeIplus{} Type~$\mathrm{I}_+$, \typeII{} Type~$\mathrm{II}$,
\typeIminus{} Type~$\mathrm{I}_-$)
indicate parity.}
\label{fig:abcd-simplex-intro}
\end{figure}

Note that profiles of \emph{randomly} initialized bilinear forms accumulate
tightly at the mid-point $(\tfrac12,\tfrac12,0,0) \in \Theta$.
Trained bilinear forms generally have large effective rank (see \S\ref{sec:effectiveRank}) so accumulation near $\Theta$ 
is not explained by proximity to low-rank forms. 
In short we do not have 
a complete explanation of Observation~\ref{obs:head-profiles}.
The face $\Delta = \{a=0\}$ contains the profiles of symmetric forms.
Let $p \colon \Delta_3 \to \Delta$ denote the projection
    $p(a,b,c,d)=(0,a+b,c,d)$.
As a partial explanation,
we prove that $p(\Theta)$
is the limiting set of profiles in the large-dimensional limit
for any random ensemble of symmetric matrices
with proportional positive and negative eigenvalue distributions.

\begin{theorem}[Theorem~\ref{thm:accumulation}]\label{thm:accumulationIntro}
Let $X$ be a positive random variable with finite, nonzero second
moment.  Fix $u,v>0$ and set $\rho=u/v$.
For each $n$ let $N_n \geq 2n$ and let $S_n \in M_{N_n}(\RR)$
be a symmetric matrix with $n$ positive eigenvalues sampled from $vX$, $n$
negative eigenvalues sampled from $-uX$, and any remaining eigenvalues
equal to zero.
Then the profile $\pi(S_n)$ converges almost surely as $n \to \infty$ to
\[
    \frac{1}{1+\rho^2}
    \bigl(0,\,
    2\rho,\,
    \max(1-\rho,0)^2,\,
    \max(\rho-1,0)^2\bigr) \in p(\Theta).
\]
\end{theorem}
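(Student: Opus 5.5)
The plan is to reduce the profile of a symmetric matrix to explicit averages of order statistics, and then pass to the limit with the strong law of large numbers plus a Wasserstein-type coupling argument. I do not have Definition~\ref{defn:profileMap} in front of me, so the first step is an assumption: that for symmetric $S$ we have $a=0$, and that $b,c,d$ come from pairing eigenvalues. The pairing I have in mind sorts the positive eigenvalues $\lambda_1^+\ge\lambda_2^+\ge\cdots$ and the absolute values of the negative eigenvalues $\mu_1\ge\mu_2\ge\cdots$, padding with zeros. It then splits
\[
(\lambda_k^+)^2+\mu_k^2 = 2\lambda_k^+\mu_k + \max(\lambda_k^+-\mu_k,0)^2+\max(\mu_k-\lambda_k^+,0)^2 .
\]
This is the decomposition $S=H\pm P$ underlying Figure~\ref{fig:simplex-labels}. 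It gives $b=2\sum_k\lambda_k^+\mu_k/\|S\|^2$ with $c,d$ the normalized sums of the two excess terms, and the target formula $(2uv,(v-u)_+^2,(u-v)_+^2)/(u^2+v^2)$ is consistent with it. The zero eigenvalues contribute nothing, so $N_n$ plays no role.

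Next, write the positive eigenvalues as $vX_{(k)}$ and the negative ones as $-uY_{(k)}$. Here $X_{(1)}\ge\cdots\ge X_{(n)}$ and $Y_{(1)}\ge\cdots\ge Y_{(n)}$ are the sorted samples of two i.i.d.\ samples of size $n$ from the law of $X$. Then $\lambda_k^+-\mu_k=vX_{(k)}-uY_{(k)}$, and dividing by $n$ gives
\[
\tfrac1n\|S_n\|^2=v^2\tfrac1n\sum X_i^2+u^2\tfrac1n\sum Y_i^2 \longrightarrow (u^2+v^2)\,\EE X^2
\]
almost surely, by the strong law. The assumption that $\EE X^2$ is finite and nonzero makes this denominator legitimate.

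The key step, and the main obstacle, is showing that
\[
\tfrac1n\sum_k (X_{(k)}-Y_{(k)})^2\to 0\quad\text{almost surely.}
\]
First I would observe that pairing sorted samples is the optimal coupling on the line. Hence this quantity equals $W_2^2(\mu_n,\nu_n)$, where $\mu_n,\nu_n$ are the two empirical measures. I then plan to invoke the standard fact that, almost surely, empirical measures converge to the underlying law in $W_2$ whenever the second moment is finite; this holds because weak convergence together with convergence of second moments, both supplied by the strong law, is equivalent to $W_2$-convergence. The triangle inequality then gives $W_2(\mu_n,\nu_n)\to0$. Combined with polarization,
\[
\tfrac1n\sum_k X_{(k)}Y_{(k)}=\tfrac12\Bigl(\tfrac1n\sum X_i^2+\tfrac1n\sum Y_i^2-\tfrac1n\sum(X_{(k)}-Y_{(k)})^2\Bigr)\to \EE X^2 .
\]

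Finally I assemble the limits. The cross term gives $b\to 2uv\,\EE X^2/((u^2+v^2)\EE X^2)=2\rho/(1+\rho^2)$. For the excess terms, write
\[
vX_{(k)}-uY_{(k)}=(v-u)X_{(k)}+u(X_{(k)}-Y_{(k)}),
\]
where the second summand is negligible in $\ell^2$ average by the key step. Cauchy--Schwarz then gives
\[
\tfrac1n\sum_k\max(vX_{(k)}-uY_{(k)},0)^2\to (v-u)_+^2\,\EE X^2,
\]
using that $t\mapsto\max(t,0)$ is $1$-Lipschitz. Symmetrically $d$ tends to $(u-v)_+^2\,\EE X^2$. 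Dividing by $(u^2+v^2)\EE X^2$ and setting $\rho=u/v$ yields the stated limit point. Since its $c$- and $d$-coordinates have product zero, it lies in $p(\Theta)$.
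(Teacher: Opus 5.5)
Your proposal is correct and follows essentially the same route as the paper. Your key step $\frac1n\sum_k (X_{(k)}-Y_{(k)})^2\to 0$ is exactly the content of Lemma~\ref{lem:proportional-lobes}, namely $\|\alpha_n-\alpha_n'\|^2/n\to0$, and your assembly by normalizing with $\|S_n\|^2$ and using the $1$-Lipschitz positive part matches the paper's. The one difference is that you cite the standard characterization of $W_2$-convergence (weak convergence plus convergence of second moments, with the monotone coupling optimal on the line), whereas the paper proves this one-dimensional fact directly: the empirical quantile step functions converge pointwise at continuity points of $G$, and this is upgraded to $L^2$ convergence via the strong law for second moments and Fatou's lemma.
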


The hypothesis about proportional positive and negative eigenvalue distributions
is approximately satisfied by trained bilinear forms (Observation~\ref{obs:pairing}).

\subsection{Eigenvalues of trained bilinear forms}\label{subsec:parity}

Eigenvalue distributions of the bilinear forms
in the \texttt{Qwen3} models
were investigated by Darveshi~\cite{darveshi2025spectral};
they found that the absolute values of the eigenvalues
are well-modelled by a {gamma distribution}.
We performed the same experiment for the symmetric parts.
Theorem~\ref{thm:balancedIntro} suggests
the distribution should be {multimodal}:
nonzero eigenvalues occur with equal frequency
on either side of zero so the distribution should have at least two peaks.
For $\nu \geq 0$, let $\gamma_\nu^+$ denote
a gamma-like probability distribution on $[0,\infty)$ 
with mode $\nu$, and 
for $\mu \leq 0$ let $\gamma_\mu^-$ denote 
the reflection across $0$ of a gamma-like probability distribution, 
supported on $(-\infty,0]$ with mode $\mu$. 

\begin{observation}[Figure~\ref{fig:mean-spectrum}]\label{obs:bimodal-gamma}
Eigenvalues of the symmetric part $S$ of trained bilinear forms
have a bimodal distribution
well-modelled by a sum of two gamma-like distributions 
with positive and negative modes: 
\begin{equation}\label{eqn:bimodalGamma}
\lambda(S) \sim \tfrac{1}{2}(\gamma_\mu^- + \gamma_\nu^+),
\qquad \mu \leq 0 \leq \nu.
\end{equation}
For $550$ of the $9{,}512$ heads ($5.8\%$), one or both fitted
modes were zero. 
\end{observation}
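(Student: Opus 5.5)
Since this statement is empirical rather than deductive, the plan is to establish it by a structured computation across all $9{,}512$ heads, with Theorem~\ref{thm:balancedIntro} supplying the part that can be proved.

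\textbf{Step 1: extract and diagonalize.} For each head $A$ in the fourteen models of \S\ref{sec:experimentsQK}, I will form the bilinear form $L = Q^{\top}K$, with per-layer scaling absorbed. I will then take its symmetric part $S = \tfrac12(L + L^{\top})$ and compute the spectrum $\lambda(S)$ with a symmetric eigensolver. Because $S$ has rank at most $2n$, it is cheaper and numerically cleaner to diagonalize the $2n\times 2n$ compression. If the stacked map $x \mapsto (Qx,Kx)$ is surjective, then $S = \tfrac12\bigl(Q^{\top}K + K^{\top}Q\bigr)$ is the pullback along $\binom{Q}{K}$ of the form $\tfrac12\begin{pmatrix}0 & I\\ I & 0\end{pmatrix}$ on $H\oplus H$. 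Its nonzero eigenvalues therefore coincide with those of $G^{1/2}JG^{1/2}$, where $G$ is the Gram matrix of the stacked map and $J = \tfrac12\begin{pmatrix}0 & I\\ I & 0\end{pmatrix}$. Theorem~\ref{thm:balancedIntro} then gives exactly $n$ positive and $n$ negative nonzero eigenvalues. This justifies the mixture weights $\tfrac12,\tfrac12$ in \eqref{eqn:bimodalGamma} as a theorem rather than a fit. In the computation I will discard the $N-2n$ numerically zero eigenvalues using a relative threshold, and confirm surjectivity of the stacked map, i.e.\ full rank $2n$, for every head.

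\textbf{Step 2: fit each sign separately.} Normalizing each head's spectrum by, say, its Frobenius norm, I will fit a gamma-like family to the $n$ positive eigenvalues, giving $\gamma_\nu^+$. I will fit the same family to the absolute values of the $n$ negative eigenvalues and reflect, giving $\gamma_\mu^-$. The fitting method is maximum likelihood for a shifted/shape–scale gamma. The mode is $\nu = (k-1)\theta$ when the shape $k>1$, and $0$ otherwise, so a zero fitted mode corresponds exactly to a shape parameter $k \le 1$.

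\textbf{Step 3: quantify fit and bimodality.} I will assess goodness of fit per head, using a Kolmogorov--Smirnov statistic or a QQ-plot deviation. I will compare against a unimodal symmetric alternative, for example a Laplace or a single gamma on $|\lambda|$, by likelihood ratio or AIC. I will also aggregate mean spectra per model as in Figure~\ref{fig:mean-spectrum}. Finally, I will count heads in which either fitted shape satisfies $k\le 1$; this count should reproduce the $550$ heads ($5.8\%$).

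\textbf{Main obstacle.} I expect the main difficulty to be the definition of ``well-modelled'' and the robustness of the mode estimate. With only $n$ samples per side (e.g.\ $n=64$), MLE of the shape parameter is noisy. The classification ``mode equals zero'' is also sensitive to the zero-eigenvalue threshold and to the small eigenvalues near the origin. I would first handle this with bootstrap confidence intervals for $k$, and by reporting the $550$ count together with its stability under varying thresholds.
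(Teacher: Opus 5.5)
Your plan is essentially the procedure behind this observation: compute the spectrum of $S$ for every head, use Theorem~\ref{thm:heads-have-balanced-attention} to account for the equal weights $\tfrac12,\tfrac12$, fit a gamma-like law to each lobe separately, and count the heads whose fits have zero mode. Your $2n\times 2n$ reduction is exactly Lemma~\ref{lem:gram-form}.

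The difference is in how you validate the fits. The paper measures fit by the root-mean-square gap between a $20$-bin histogram and the fitted density, after rescaling each sample to unit mean. It then benchmarks the gamma against lognormal and Weibull fits and finds all three comparable, which is why it says ``gamma-like.'' Your KS/QQ criterion and the AIC comparison against a unimodal alternative test bimodality directly, which the paper never does formally.

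Two small points.
\begin{enumerate}
\item The equivalence ``zero mode $\iff k\le 1$'' holds only when the location is fixed at $0$. For a freely shifted gamma with $k\le 1$, the mode equals the location parameter, not $0$. You should therefore fix the location at $0$, as the support $[0,\infty)$ of $\gamma^{\pm}$ in the paper indicates.
\item In \S\ref{subsec:spectral-clusters} the paper describes its $550$ excluded heads as those for which at least one of \emph{three} fits has zero mode: the two lobes of $S$ and $\sigma(T)$. A count over the two lobes alone is therefore a priori only at most $550$. If your number comes out lower, the missing $\sigma(T)$ fits are the likely reason, not an error in your pipeline.
\end{enumerate}
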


\begin{figure}[!htbp]
\centering
\includegraphics[width=0.5\textwidth]{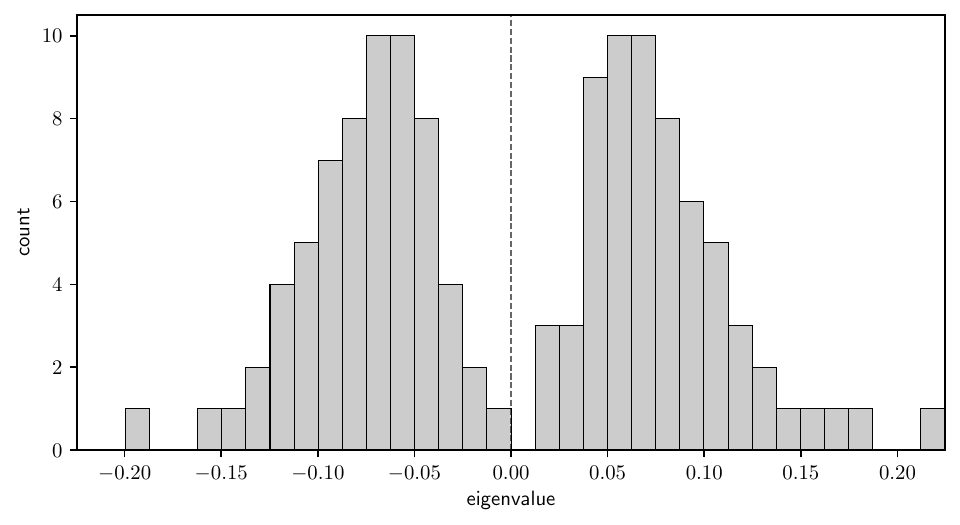}
\caption{The mean eigenvalue distribution of symmetric parts for \texttt{gpt2}.}
\label{fig:mean-spectrum}
\end{figure}

We observed that the modes $\mu,\nu$ of the bimodal eigenvalue distributions
predict which branch of $\Theta$ is nearest to the profile.
The midpoint $(\tfrac12,\tfrac12,0,0)$ is denoted $\Theta_\circ$,
the line segment from $\Theta_\circ$ to $c=1$ is denoted $\Theta_+$,
and the line segment from $\Theta_\circ$ to $d=1$ is denoted $\Theta_-$.
We call a head Type $\mathrm{I_+}$ if its profile is near $\Theta_+$,
Type $\mathrm{I_-}$ if the profile is near $\Theta_-$,
and otherwise call it Type $\mathrm{II}$ (see \S\ref{subsec:trained-types} for details).
We looked at the histogram of the mean sorted spectrum of
the ten most extreme heads in each cluster
(Figure~\ref{fig:spectra-cluster-barycenters})
and observed the following relationships between modes and the three profile types:
\begin{enumerate}
\item Type $\mathrm{I_+}$ (positive-definite-like): $|\mu| \ll \nu$;
\item Type $\mathrm{I_-}$ (negative-definite-like): $\nu \ll |\mu|$;
\item Type $\mathrm{II}$ (symmetric spectrum): $|\mu + \nu| \ll \min(|\mu|, \nu)$.
\end{enumerate}

\begin{figure}[!htb]
\centering
\includegraphics[width=\linewidth]{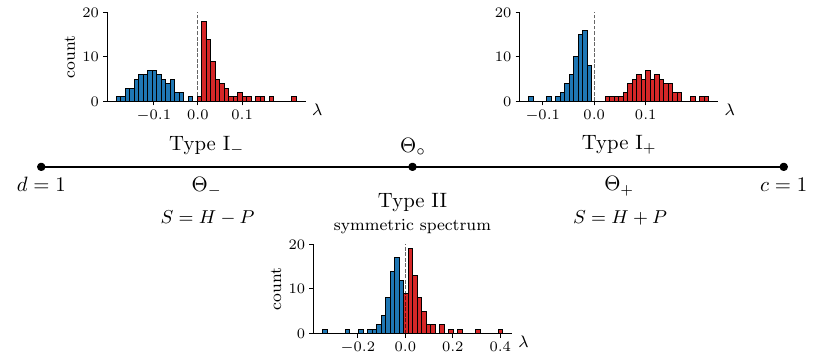}
\caption{Measured eigenvalue distributions in \texttt{gpt2}, attached to their
branches or midpoint of $\Theta$. Each histogram shows the mean of the sorted, unit-length spectra of
the ten most extreme heads in one cluster. Here $H$ has spectrum symmetric
about zero and $P$ is positive semidefinite.}
\label{fig:spectra-cluster-barycenters}
\label{fig:accumulation-family-intro}
\end{figure}

\subsubsection{Alignment with experimental classifiers}

To see whether there were other discerning features,
we performed principal component analysis
to obtain low-dimensional projections
of the eigenvalue distributions of the symmetric parts
of the trained attention heads:
sorting each distribution and rescaling it to unit length gives a point
of $\RR^{2n}$, and the leading coordinates of the singular value
decomposition of the resulting matrix project these to three dimensions.
The projection is plotted in Figure~\ref{fig:spectra-cluster-scatter3d}
for the $144$ heads of the \texttt{gpt2} model.
The results show that parity is well-aligned with
the experimentally obtained classifiers, 
so it may be helpful for classifying head behavior.
A second set of head classifiers following a different approach with moments
is described in Appendix~\ref{app:moments}.

\begin{figure}[!htbp]
\centering
\includegraphics[width=\textwidth]{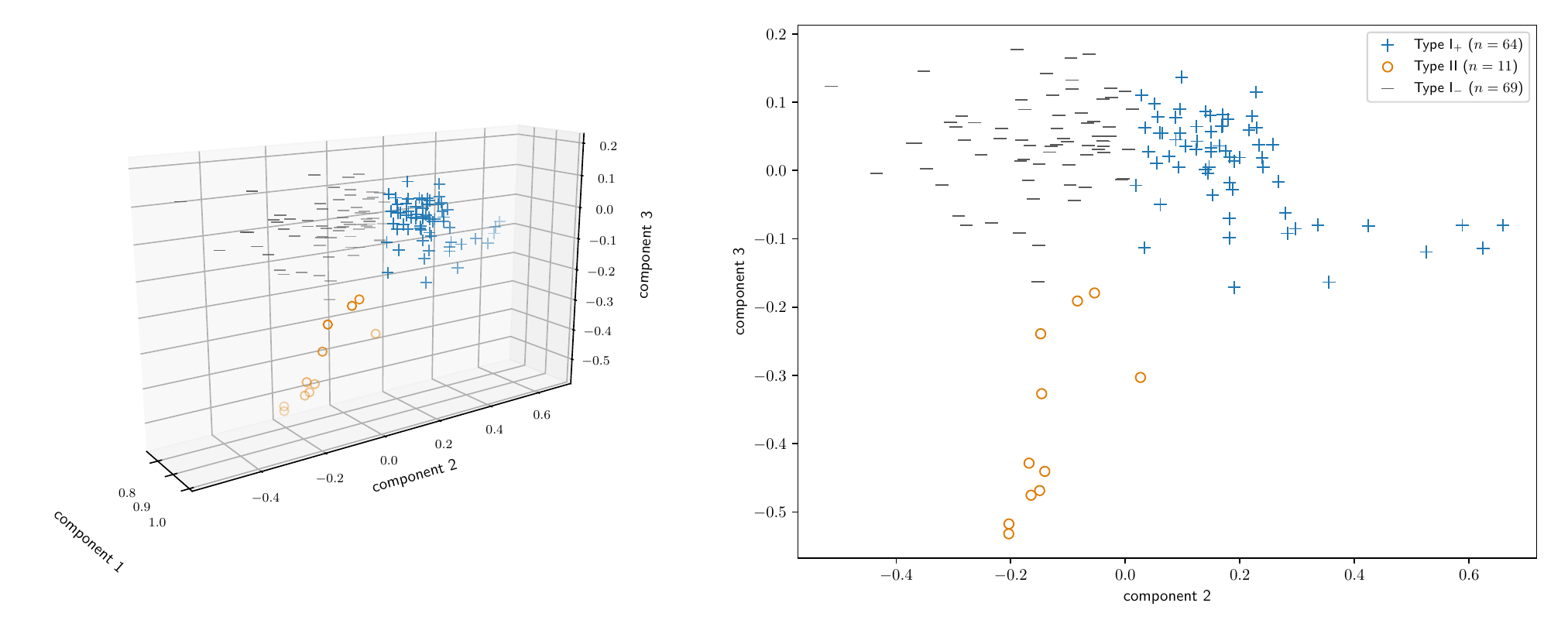}
\caption{Projection of \texttt{gpt2}'s $144$ symmetric part
eigenvalue distributions to the first three
principal components (left), and to the second and third components
(right).}
\label{fig:spectra-cluster-scatter3d}
\end{figure}

\subsection*{Notation}

Let $C$ be a finite-dimensional real vector space of dimension $N$.
Let $k$ be a nonnegative integer.
Let
\[
    \Delta_{k}
    =
    \left\{
        (p_0,\ldots,p_{k})\in \RR^{k+1}_{\geq 0}:
        \sum_{j=0}^k p_j=1
    \right\}
\]
be the standard $k$-simplex.
The identity map will be denoted $\id$.
The set of linear operators on $C$ will be denoted $\End(C)$.
The dual space of $C$,
i.e.\ the vector space of linear maps $C \to \RR$,
will be denoted $C^\vee$.
In what follows we make use of the canonical isomorphism
of vector spaces $C^\ell \cong C \otimes \RR^\ell$,
in particular writing $U \otimes V \in \End(C^\ell)$
for $U \in \End(C)$, $V \in \End(\RR^\ell)$.
The set of $N \times N$ real matrices is denoted by $M_N(\RR)$.
For a real matrix $L$ let $\|L\| = \sqrt{\mathrm{tr}(L^T L)}$
denote the Frobenius norm.

\section{Bilinear attention}\label{sec:bilinearAttention}

In this section we give an equivalent definition for attention
different from what is found in the literature.
We restrict to decoder-only transformers
and ignore layer normalization.

\subsection{Bilinear forms}\label{subsec:bilinear-forms}

We recall some basic facts about real bilinear forms.
A bilinear form on a real vector space $C$ is a function
$$B \colon C \times C \to \RR$$
which is linear in each argument separately,
i.e. for all $x,y,z \in C$ and $\lambda \in \RR$ we have
\begin{align*}
B(x+y,z) &= B(x,z) + B(y,z),\qquad B(\lambda x,y) = \lambda B(x,y),\\
B(x,y+z) &= B(x,y) + B(x,z),\qquad B(x,\lambda y) = \lambda B(x,y).
\end{align*}
If $B(x,y) = B(y,x)$ (resp. $-B(y,x)$) for all $x,y \in C$ then $B$ is called \defn{symmetric}
(resp. \defn{antisymmetric}).
Every bilinear form $B$ can be uniquely expressed
as a sum $B = S+T$
of a symmetric bilinear form $S$ and an antisymmetric bilinear form $T$,
where
$$S(x,y) = \tfrac12 \big(B(x,y) + B(y,x)\big),\qquad
T(x,y) = \tfrac12 \big(B(x,y) - B(y,x)\big).$$
Given a basis $e_1,\ldots,e_N$ of $C$,
the \defn{Gram matrix} of $B$ is the $N \times N$ matrix $L$
with entries $L_{ij} = B(e_i,e_j)$.
The \defn{radical} of $B$ is the subspace
$\{x \in C : B(x,y) = 0 \text{ for all } y \in C\}$.
A subspace $W\subset C$ is \defn{isotropic} for $B$ if
$B(x,y)=0$ for all $x,y \in W$.
A nondegenerate antisymmetric bilinear form is called
\defn{symplectic}.

\subsubsection{Sylvester's Law of Inertia}

If $B$ and $B'$ are bilinear forms and
there is an invertible linear map $f \colon C \to C$ such that
$B'(x,y) = B(f(x),f(y))$ for all $x,y \in C$ then $f$ is called an \defn{isomorphism}
between $B$ and $B'$.
Sylvester's Law of Inertia says that
a symmetric bilinear form $B$ on $C$
is determined up to isomorphism
by three non-negative integers $(n_+,n_-,n_0)$.
These are computed from the Gram matrix $L$ of $B$,
which is symmetric since $B$ is:
$n_+$ (resp. $n_-$) is the number of positive (resp. negative)
eigenvalues of $L$ counted with multiplicity, and
$n_0$ is the dimension of the kernel of $L$.
The tuple $(n_+,n_-,n_0)$ is called the \defn{signature} of $B$;
by Sylvester's law it is independent of the chosen basis.
The form $B$ is \defn{positive semidefinite} if $B(x,x) \geq 0$ for
every $x$ and \defn{negative
semidefinite} if $-B$ is positive semidefinite; it is
\defn{semidefinite} if it is one or the other.
If $n_+ = n_-$ then we say $B$ is \defn{balanced}.
A nondegenerate balanced form is called \defn{hyperbolic}.
Hyperbolicity requires equal numbers of positive and negative eigenvalues,
but does not require their magnitudes to agree. Symmetry of the spectrum
about zero is stronger than balance and depends on the chosen inner product.
By Sylvester's law, there is exactly one hyperbolic form on
$\RR^{2n}$ up to isomorphism, namely the form with Gram matrix
\[
    \begin{pmatrix} 0 & I \\ I & 0 \end{pmatrix} \in M_{2n}(\RR)
\]
where $I$ denotes the $n \times n$ identity matrix.

\subsection{Attention functions}

First we define a very general class of attention functions.

\begin{definition}
A smooth function $a\colon C^k\to C$ is an \defn{attention function}
if there are a smooth function
\[
    \omega=(\omega_{1},\ldots,\omega_{k})
    \colon C^k\to \Delta_{k-1},
\]
a smooth convex function $\phi \colon \RR^k \to \RR$ and
a polynomial map $\sigma \colon C^k \to \RR^k$
such that
$$
\omega = (\nabla \phi) \circ \sigma
\qquad \text{and} \qquad
    a(x_1,\ldots,x_k)
    =
    \sum_{j=1}^k
    \omega_{j}(x_1,\ldots,x_k)x_j.
$$
\end{definition}

The components of $\omega$
are called the \emph{attention weights},
$\sigma$ is the \emph{scoring function},
and $\phi$ we call the \emph{potential function};
with an abuse of terminology, we regard
$\omega,\sigma,\phi$ as part of the data of the attention function.

For the next three sections,
let $H$ be a real inner product space of dimension $n$ 
called the \defn{head space},
and let $Q,K \colon C \to H$ be independent linear maps
called the \emph{query and key maps}.

\subsection{QKV attention}\label{sec:QKVattention}

We explain how to recover
the definition of attention from \cite{vaswani2017attention}. 
First we construct the scoring function.
To encode positional information,
vectors $p_j \in C$ are fixed
for each $j = 1,\ldots,k$.
The score of $x_k$ on $x_j$ is
the $j$-th component of $\sigma$, given by
\[
\sigma_j(x_1,\ldots,x_k)
=
\left\langle Q(x_k+p_k),\, K(x_j+p_j)\right\rangle
/\sqrt{n}.
\]
Expanding, we get
\begin{equation}\label{eqn:scoreQKV}
\sigma_j(x_1,\ldots,x_k)
=
\frac{1}{\sqrt{n}}
\Big(
\langle Q x_k, K x_j\rangle
+
\left\langle Q x_k, K p_j\right\rangle
+
\left\langle Q p_k, K x_j\right\rangle
+
\langle Q p_k, K p_j\rangle
\Big),
\end{equation}
a polynomial function on $C^k$.
The potential is given by\footnote{To see that $\phi$ is convex,
first set $\omega_j=e^{s_j}/\sum_i e^{s_i}$ and $\omega = (\omega_1,\ldots,\omega_k)^T$.
Then
\[
    \frac{\partial \phi}{\partial s_j}=\omega_j,
    \qquad
    \frac{\partial^2 \phi}{\partial s_i\partial s_j}
    = \omega_i\delta_{ij}-\omega_i\omega_j,
\]
so $\operatorname{Hess}\phi=\operatorname{diag}(\omega)-\omega\omega^T$. For any
$v\in\RR^k$, Cauchy--Schwarz applied to $(\sqrt{\omega_i})_i$ and
$(\sqrt{\omega_i}\,v_i)_i$ gives
\[
    \Big(\sum_i \omega_iv_i\Big)^2
    =
    \Big(\sum_i \sqrt{\omega_i}\cdot\sqrt{\omega_i}\,v_i\Big)^2
    \leq
    \Big(\sum_i \omega_i\Big)\Big(\sum_i \omega_iv_i^2\Big)
    =
    \sum_i \omega_iv_i^2.
\]
Therefore
\[
    v^T\operatorname{Hess}\phi\,v
    =
    \sum_i \omega_iv_i^2-\Big(\sum_i \omega_iv_i\Big)^2
    \geq 0,
\]
so $\phi$ is convex by the Hessian criterion for convexity.}
\begin{equation}\label{eqn:softmaxPotential}
    \phi(s_1,\ldots,s_k)
    =
    \log(e^{s_1}+\cdots+e^{s_k}).
\end{equation}
The attention weights $\omega$ formed from $\phi$ and $\sigma$
recover QKV attention.\footnote{Strictly speaking we have recovered ``QK attention'' with trivial value map $V = \id$. In our formulation of attention, value maps come at a later stage; see \S\ref{sec:transformers}.}

\subsection{RoPE attention}\label{sec:RoPE}

QKV attention encodes positional information in the linear
and constant terms of the scores \eqref{eqn:scoreQKV} and the bilinear
term is \emph{independent} of position.
Rotary position embedding (RoPE)
\cite{su2024roformer}
is a variant of QKV attention with no linear or constant terms
with all positional information in the bilinear terms.
Assume $n = 2m$ is even and
let $R \colon H \to H$ be an {orthogonal} transformation
whose eigenvalues are
\[
    e^{\pm i \omega q^{r-1}},
    \qquad
    r = 1, \ldots, m,
\]
for some fixed $\omega \in (0, \pi)$ and $0 < q < 1$ 
(in \cite{su2024roformer} one takes $\omega = 1$ and
$q = 10000^{-1/m}$).
In RoPE attention,
the score of $x_k$ on $x_j$ is defined by
\begin{equation}\label{eqn:ropeScore}
    \sigma_j(x_1,\ldots,x_k)
    =
\langle Kx_j,\;R^{k-j}Qx_k\rangle/\sqrt{n}.
\end{equation}

\subsection{ALiBi attention}\label{sec:ALiBi}

Attention with linear biases (ALiBi) \cite{press2022train}
encodes positional information in the constant terms of the scores.
Fix a positive scalar $\mu$ for each head.
In ALiBi attention, the score of $x_k$ on $x_j$ is defined by
\begin{equation}\label{eqn:alibiScore}
    \sigma_j(x_1,\ldots,x_k)
    =
\langle Kx_j,\;Qx_k\rangle/\sqrt{n}
    -\mu(k-j).
\end{equation}
The scalar $\mu$ is fixed before training, and the term $-\mu(k-j)$
penalizes attention to more distant inputs.
There are no linear terms in the input vectors, and the bilinear term
is independent of position.

\subsection{Biaffine attention}

We have seen that QKV attention encodes position in
the linear and constant terms of the score and
uses the same bilinear form at different distances;
RoPE attention has no linear nor constant terms and
encodes positional information
by using distinct bilinear forms at different distances;
ALiBi uses the same bilinear form at every distance and encodes
position entirely in the constant terms.
The bilinear, linear, and constant parts of a score are separated
in Lemma~\ref{lem:biaffine-decomposition} below.
By lifting the constraints on the bilinear terms in RoPE
and allowing for lower-order terms
we obtain the following more general class of attention functions.

\begin{definition}
Let $a \colon C^k \to C$ be an attention function with scoring function
    $\sigma\colon C^k \to \RR^k$ of the form
$$
\sigma(x_1,\ldots,x_{k}) =
\big(P_{k-1}(x_1,x_k),
P_{k-2}(x_2,x_k),
\ldots,
P_1(x_{k-1},x_k),
P_0(x_k,x_k)
\big)
$$
for polynomials $P_j \colon C^{2} \to \RR$.
By an abuse of terminology, 
if the polynomials $P_j$ all have bidegree at most $(1,1)$
then $a$ is called \defn{biaffine}.
If the biaffine polynomials are bilinear forms then
$a$ is \defn{bilinear}.
\end{definition}

The following elementary lemma shows that the bilinear, linear, and
constant components of a biaffine score are well-defined; we use it to
attach a bilinear form to each attention head in
\S\ref{sec:experimentsQK}.

\begin{lemma}\label{lem:biaffine-decomposition}
Every polynomial
\(P:C\times C\to \RR\) of bidegree at most \((1,1)\) admits a unique
decomposition
\[
    P(u,v)=B(u,v)+L(u)+R(v)+D,
\]
where \(B\) is bilinear, \(L\) and \(R\) are linear, and \(D\in\RR\).
The components are given by
\[
    D=P(0,0), \,\,\,
    L(u)=P(u,0)-P(0,0),\,\,\,
    R(v)=P(0,v)-P(0,0),
\]
and
    $B(u,v)=P(u,v)-P(u,0)-P(0,v)+P(0,0)$.
\end{lemma}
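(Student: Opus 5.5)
The plan is to prove existence and uniqueness separately. For existence, we use the explicit formulas given in the statement. For uniqueness, we evaluate a putative decomposition at the points $(0,0)$, $(u,0)$ and $(0,v)$.

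First we make precise what ``bidegree at most $(1,1)$'' means. We fix a basis $e_1,\ldots,e_N$ of $C$ with coordinates $u_i, v_j$. Then $P$ is a polynomial in which every monomial has degree at most one in the $u$-variables and at most one in the $v$-variables. Hence
\[
    P(u,v)=\sum_{i,j} c_{ij}u_iv_j+\sum_i a_iu_i+\sum_j b_jv_j+d .
\]
Setting $B(u,v)=\sum c_{ij}u_iv_j$, $L(u)=\sum a_iu_i$, $R(v)=\sum b_jv_j$ and $D=d$ gives a decomposition of the required form, which proves existence. One can also check directly that the stated formulas produce forms of the right type. For instance, $P(u,0)-P(0,0)=\sum a_iu_i$ is linear, and the same kind of substitution shows that $P(u,v)-P(u,0)-P(0,v)+P(0,0)=\sum c_{ij}u_iv_j$ is bilinear.

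For uniqueness, suppose $P=B+L+R+D$ is any such decomposition. We use that bilinear forms vanish whenever either argument is zero, and linear forms vanish at zero. Evaluating at $(0,0)$ gives $D=P(0,0)$. Evaluating at $(u,0)$ gives $P(u,0)=L(u)+D$, so $L(u)=P(u,0)-P(0,0)$. Symmetrically, evaluating at $(0,v)$ gives $R(v)=P(0,v)-P(0,0)$. Finally, $B=P-L-R-D$, and substituting the three previous formulas yields exactly $B(u,v)=P(u,v)-P(u,0)-P(0,v)+P(0,0)$. Thus every component is determined by $P$, which proves uniqueness and the explicit formulas at the same time.

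There is no real obstacle here. The only point needing care is the interpretation of ``bidegree at most $(1,1)$'' as a polynomial spanned by monomials of the form $u_iv_j$, $u_i$, $v_j$ and $1$. Once that is fixed, the existence step is just grouping monomials, and the rest is substitution.
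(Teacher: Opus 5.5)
Your proof is correct: expanding $P$ in coordinates into the monomials $u_iv_j$, $u_i$, $v_j$, $1$ gives existence, and evaluating an arbitrary decomposition at $(0,0)$, $(u,0)$, $(0,v)$ determines $D$, $L$, $R$ and then $B$, which proves uniqueness and the stated formulas at once. The paper calls the lemma elementary and gives no proof, so your argument is the routine verification it leaves to the reader.
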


\subsubsection{RoPE-like attention functions}\label{sec:compositional}

In \cite[\S3.4.1]{su2024roformer} there is an argument given to justify
their choice of bilinear form
(cf.\ \eqref{eqn:ropeScore} with $d = k - j$)
$$P_d(x,y) = \langle K x, R^d Q y\rangle/\sqrt{n}.$$
We give another derivation of their formula and generalize it
to a larger class of bilinear attention functions.
Let $a\colon H^k \to H$ be a bilinear attention function
with scoring function $\sigma \colon H^k \to \RR^k$ given by
\[
    \sigma(u_1, \ldots, u_k)
    = \big(B_{k-1}(u_1, u_k),\,
    B_{k-2}(u_2, u_k),\,
    \ldots,\,
    B_0(u_k, u_k)\big)
\]
for bilinear forms $B_{k-1}, \ldots, B_0$ on $H \times H$.
Each bilinear form $B_j$ determines a map
$L_j \colon H \to H^\vee$ defined by
$L_j(v)(u) = B_j(u,v)$.
Assume that $B_0$ is nondegenerate.
Define the linear operator
$$T_d = L_0^{-1} L_d \in \End(H).$$

\begin{definition}
If $T_{d+e} = T_d T_e$ for all integers $d,e\geq 0$ with $d+e < k$
then $a$ is called \defn{RoPE-like}.
\end{definition}

The next proposition classifies RoPE-like bilinear attention functions.

\begin{proposition}
A bilinear attention function $a$ as above with nondegenerate $B_0$
is RoPE-like
if and only if
there exists a linear map $S \in \End(H)$
such that $T_d = S^d$ for all $0 \leq d < k$.
\end{proposition}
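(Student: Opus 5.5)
The proposition is essentially an unwinding of the multiplicative relation $T_{d+e}=T_dT_e$, so the plan is to prove the two directions separately by elementary algebra. First I will record what $T_0$ is. Since $T_0 = L_0^{-1}L_0 = \id$, we have $S^0=\id=T_0$ for any choice of $S$, so the case $d=0$ never causes trouble. I will also note that when $k=1$ the condition $d+e<k$ forces $d=e=0$, and both sides of the equivalence hold trivially (take $S=\id$). So we may assume $k\ge 2$.

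\emph{($\Leftarrow$)} Suppose $T_d=S^d$ for all $0\le d<k$. Take $d,e\ge 0$ with $d+e<k$. Then each of $d$, $e$, and $d+e$ lies in $[0,k)$, so
\[
T_{d+e}=S^{d+e}=S^dS^e=T_dT_e .
\]
Hence $a$ is RoPE-like.

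\emph{($\Rightarrow$)} Suppose $a$ is RoPE-like and $k\ge 2$. Set $S := T_1 = L_0^{-1}L_1\in\End(H)$, which is well defined because $B_0$ is nondegenerate, so $L_0$ is invertible. I will show $T_d=S^d$ by induction on $d$ for $0\le d<k$.
\begin{itemize}
\item The cases $d=0$ and $d=1$ hold by the observations above and the definition of $S$.
\item For the inductive step, let $2\le d<k$ and apply the RoPE-like identity with the pair $(d-1,1)$. This is allowed because $(d-1)+1=d<k$. It gives
\[
T_d=T_{d-1}T_1=S^{d-1}S=S^d .
\]
\end{itemize}

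I do not expect a genuine obstacle here. The only points needing care are checking that every index used stays in the admissible range $d+e<k$, and using the nondegeneracy of $B_0$ so that $L_0^{-1}$, and hence $T_d$ and $S$, make sense. If one prefers a formula in terms of the bilinear forms themselves, unwinding the definition $L_j(v)(u)=B_j(u,v)$ translates the conclusion into
\[
B_d(u,v)=B_0(u,S^dv),
\]
which matches the RoPE formula with $S$ playing the role of $R$, up to the placement of $Q$ and $K$.
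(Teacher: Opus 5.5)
Your proof is correct and follows essentially the same route as the paper: the paper rewrites the RoPE-like identity as $L_{d+e} = L_d L_0^{-1} L_e$, takes $e=1$ to get $L_{d+1} = L_d S$ with $S = T_1$, and inducts to $L_d = L_0 S^d$. That is your induction $T_d = T_{d-1}T_1$ multiplied through by $L_0$. Your explicit treatment of $T_0=\id$, the case $k=1$ (where $T_1$ is not defined), and the admissible index ranges is slightly more careful than the paper's version.
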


Note: when $S$ exists it is unique and given by
$S = T_1= L_0^{-1} L_1$.

\begin{proof}
Let $a$ be a RoPE-like bilinear attention function.
Then $L_{d+e} = L_d L_0^{-1} L_e$
for all $d,e \geq 0$ such that $d+e < k$; in particular, this shows that
$L_{d+1} = L_d S$.
By induction, we conclude that
$L_{d} = L_0 S^d$ and therefore $T_d = S^d$.
The other direction is trivial.
\end{proof}

For RoPE itself, the head-space forms are
$B_d(u,v) = \langle u, R^d v\rangle/\sqrt{n}$.
The form $B_0$ is a scaled inner product, so it is nondegenerate,
and $L_d=L_0R^d$. Thus $T_d=R^d$, and these forms define
a RoPE-like attention function on $H$ with $S=R$.
The scores on $C$ in \S\ref{sec:RoPE} are recovered by setting
    $P_d(x,y)=B_d(Kx,Qy)$.

\subsection{Rank reduction}\label{sec:rankreduction}

It is crucial to parametrize through
families of {low-rank} attention functions.
Let $E$ be a real finite-dimensional vector space. 
Let $F \colon C^k \to E^k$ be a polynomial mapping. 
We have a family of attention functions on $C^k$
given by ``pulling back along $F$''
\begin{align}\label{eqn:subfamily}
\left\{\text{attention functions $E^k \to E$}\right\}
&\to
\left\{\text{attention functions $C^k \to C$}\right\} \\
a=(\sigma,\phi) &\mapsto F^\ast a =(\sigma \circ F,\phi).
\end{align}

\subsubsection{The standard construction}\label{sec:standardConstruction}

Set $E = H \oplus H$,  
let $\mu\geq 0$ and let $R \in \End(H)$ be an orthogonal transformation. 
Set 
$$
\sigma((q_1,k_1),\ldots,(q_k,k_k)) = 
(\langle R^{k-1} q_k, k_1 \rangle, \langle R^{k-2}q_k, k_2\rangle, \ldots,
\langle q_k, k_k \rangle)/\sqrt{n} + c,
\qquad c=-\mu(k-1,k-2,\ldots,0).
$$
This defines a polynomial map $\sigma \colon E^k \to \RR^k$. 
Let 
    $\binom{Q}{K} \colon C \to E: 
    x \mapsto (Qx, Kx)$ 
be a linear map. 
Let $p \in C^k$ and let $F \colon C^k \to E^k$ be given by 
$$
F(x_1,\ldots,x_k) = 
((Qy_1,Ky_1),(Qy_2,Ky_2),\ldots,(Qy_k,Ky_k)),
\qquad y = x+p.
$$
The pullback of $\sigma$ by $F$ gives a family of attention functions on $C^k$ 
including QKV, RoPE, and ALiBi attention functions as special cases. 

\subsection{Potentials}

To orient the reader we give some examples of potential functions.

\medskip\noindent\emph{Softmax potential.}
Let
    $\phi(s_1,\ldots,s_k)
    =
    \log(e^{s_1}+\cdots+e^{s_k})$.
The attention function is given by
\[
a(x_1,\ldots,x_k) = \omega_1 x_1 + \cdots + \omega_k x_k
\quad\text{where}\quad
    \omega_{j}
    =
    \frac{
        e^{\sigma_j(x)}
    }{
        \sum_{i=1}^k e^{\sigma_i(x)}
    }.
\]

\medskip\noindent\emph{Linear potential.}
As a degenerate case we take a linear potential function $\phi$.
Then $\omega = \nabla \phi \circ \sigma = \nabla \phi$
is independent of the scoring function.
In order for the weights to be simplex-valued,
the potential must be given by
$\phi(s) = p_0 \cdot s$
for some $p_0 \in \Delta_{k-1}$.
Upon identifying $C \otimes \RR^k = C^k$
the attention function is given by
$$a(x_1,\ldots,x_k) = (\id \otimes\, \phi)(x_1,\ldots,x_k).$$
We give two examples of linear potentials.
Let
   $\phi
    =
    \tfrac{1}{k}(s_1+\cdots+s_k)$.
The associated attention function is
\[
    a(x_1,\ldots,x_k)
    =
    \tfrac1k(x_1+\cdots+x_k).
\]
Next fix $m\in\{1,\ldots,k\}$ and let
    $\phi(s_1,\ldots,s_k)=s_m$.
Then $\nabla\phi=e_m$, the $m$-th standard basis vector, so
\[
    a(x_1,\ldots,x_k)=x_m.
\]

\begin{remark}[Max potential]\label{rmk:hardmax}
Softmax potential is a smoothed form of the max function:
for $\tau > 0$ let
\[
    \phi_\tau(s_1,\ldots,s_k)
    =
    \tau\log\!\left(e^{s_1/\tau}+\cdots+e^{s_k/\tau}\right).
\]
This potential is smooth and convex with gradient
\[
    (\nabla\phi_\tau(s))_j
    =
    \frac{e^{s_j/\tau}}{\sum_i e^{s_i/\tau}}.
\]
At $\tau=1$ this is softmax, and as $\tau \to 0^+$, $\phi_\tau$ converges pointwise to the max potential
\[
    \phi_0(s_1,\ldots,s_k)
    =
    \max(s_1,\ldots,s_k).
\]
On the open subset where $\phi_0$ is differentiable,
the attention function outputs the token
at the position with the largest score.
While we have not attempted to do so,
it would be natural and straightforward to relax
the definition of attention functions to include such mildly nondifferentiable potentials.
\end{remark}

\subsection{Attention heads}\label{sec:attentionblock}

\begin{definition}
For each integer $k = 2,\ldots,\ell$
let $a_k \colon C^k \to C$ be an attention function.
The \emph{(causal) attention head $A \colon C^\ell \to C^\ell$ associated to $a_2,\ldots,a_\ell$} is
\begin{equation}\label{eqn:fullattn}
    A(x_1,\ldots,x_\ell)
    = \left(x_1,a_2(x_1,x_2),\ldots,a_\ell(x_1,\ldots,x_\ell)\right).
\end{equation}
(Causality is enforced by the independence of $a_k$ from $x_j$ for $j > k$.)
\end{definition}

\begin{example}
We use the softmax potential
$\phi(s_1,\ldots,s_k)=\log(e^{s_1}+\cdots+e^{s_k})$.
The length two ($k=2$) bilinear attention head is
$$
A(x,y) =
\big(
x, y + (x-y) \sigma
\big)
\quad\text{where}\quad
0 < \sigma = \frac{e^{b}}{1 + e^{b}} < 1,
\quad b = B_1(x,y).
$$
The length three ($k=3$) bilinear attention head is
$$
A(x,y,z) =
\big(
x,\, y + (x-y)\sigma,\, z + (x-z)\sigma_1 + (y-z)\sigma_2
\big)
$$
where $\sigma = e^{b}/(1+e^{b})$, $b = B_1(x,y)$ as above, and
$$
\sigma_1
=
\frac{e^{b_2}}{1+e^{b_1}+e^{b_2}},
\qquad
\sigma_2
=
\frac{e^{b_1}}{1+e^{b_1}+e^{b_2}},
\qquad
b_1 = B_1(y,z),
\quad
b_2 = B_2(x,z).
$$
\end{example}

\subsection{Transformers}\label{sec:transformers}

Transformers are a type of piecewise-smooth function
$T \colon C^\ell \to C^\ell$
operating on sequences of elements in $C$.
Let $A_1,\ldots,A_h\colon C^\ell \to C^\ell$ be attention heads
and let $V_1,\ldots,V_h \colon C \to C$ be linear maps, called \defn{value maps}.
The \defn{attention block associated to $\{(A_i,V_i)\}_{i=1}^h$}
is
$$A = \sum_{i=1}^h (V_i \otimes \id) \circ A_i.$$

Let $A\colon C^\ell \to C^\ell$ be an attention block
and let $M \colon C \to C$ be a piecewise smooth function.
The \emph{attention component}
$T_A  \colon C^{\ell} \to C^{\ell}$
is defined as
$$
T_{A}= \id \otimes \id + A,
$$
or explicitly, if $a_1,\ldots,a_\ell$ are the components of $A$, then
$$T_A(x_1,\ldots,x_\ell) =
(x_1 + a_1(x_1),
x_2 + a_2(x_1, x_2),
\ldots,
x_\ell + a_\ell(x_1,\ldots,x_\ell)).$$
The \defn{neural network component}
$T_M  \colon C^{ \ell} \to C^{ \ell}$
is defined as
$$
T_M = \id \otimes \id + M\otimes \id,
$$
or explicitly,
$$
T_M(y_1,\ldots,y_\ell) = (y_1 + M(y_1), y_2 + M(y_2), \ldots, y_\ell + M(y_\ell)).
$$

\begin{definition}
The \defn{transformer $T \colon C^{ \ell} \to C^{ \ell}$
associated with $A,M$}
is the function given by
$$
T = T_M \circ T_A = (\id \otimes \id + M\otimes \id) \circ (\id \otimes \id + A).
$$
Note that the neural network $M$ acts in parallel (component-wise)
while cross-term interactions only arise from the attention block.
\end{definition}

\subsubsection{Transformers are unipotent}

The presence of the so-called ``residual connection'' is
a significant design decision which has the effect of
making transformers near-identity or ``unipotent''. 
(In our formulation, the residual connection corresponds 
to the $\id \otimes \id$ summand in $T_A$ and $T_M$.)
The next proposition makes this precise with a basic estimate.

\begin{proposition}\label{prop:near-identity}
Fix a norm $\|\cdot\|$ on $C$, and for
$x = (x_1, \ldots, x_\ell) \in C^\ell$ set
$\|x\|_\infty = \max_k \|x_k\|$.
Let $T$ be the transformer associated to an attention block
$A = \sum_{i=1}^h (V_i \otimes \id) \circ A_i$ and a neural network
component $M$, and suppose that $\|M(y)\| \leq L\, \|y\|$ for all
$y \in C$.
Set $V = \sum_{i=1}^h \|V_i\|$, where $\|V_i\|$
denotes the operator norm.
Then for all $x \in C^\ell$,
\[
    \|T(x) - x\|_\infty
    \leq
    \bigl( L + V + LV \bigr)\, \|x\|_\infty .
\]
\end{proposition}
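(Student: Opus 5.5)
We bound the two residual layers separately and then compose them. Write $y = T_A(x)$, so that $T(x) = T_M(y) = y + (M\otimes\id)(y)$. Then
\[
    T(x) - x = (y - x) + (M\otimes \id)(y),
\]
and componentwise $\|T(x)_k - x_k\| \leq \|y_k - x_k\| + \|M(y_k)\| \leq \|y_k-x_k\| + L\|y_k\|$. It therefore suffices to show
\[
    \|y - x\|_\infty \leq V\|x\|_\infty \qquad\text{and}\qquad \|y\|_\infty \leq (1+V)\|x\|_\infty .
\]
Combining these gives $\|T(x)-x\|_\infty \leq V\|x\|_\infty + L(1+V)\|x\|_\infty = (L+V+LV)\|x\|_\infty$, which is the claimed estimate.

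The key step is the bound on $y - x = A(x)$, and for this we use the convex-combination structure of attention heads. Each head $A_i$ has components $x_1$ (the first component) and $a_k(x_1,\ldots,x_k) = \sum_{j\le k}\omega_j x_j$ with $\omega \in \Delta_{k-1}$. Since the weights are nonnegative and sum to one, the triangle inequality gives $\|a_k(x)\| \leq \sum_j \omega_j\|x_j\| \leq \|x\|_\infty$. Hence $\|A_i(x)\|_\infty \leq \|x\|_\infty$; the first component $x_1$ trivially satisfies this bound. Applying $V_i\otimes\id$ acts componentwise by $V_i$, so each component is multiplied in norm by at most the operator norm $\|V_i\|$. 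Summing over $i$ gives $\|A(x)\|_\infty \leq \sum_i \|V_i\|\,\|x\|_\infty = V\|x\|_\infty$. The bound on $\|y\|_\infty$ then follows from $y = x + A(x)$ and the triangle inequality.

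The only point requiring care is that the attention weights lie in the simplex, so each $a_k$ is a convex combination of the inputs. This is built into the definition of an attention function, since $\omega$ takes values in $\Delta_{k-1}$, and the bound holds for any potential. Note also that the hypothesis $\|M(y)\|\le L\|y\|$ is applied at $y_k$ rather than at $x_k$, which is why the factor $(1+V)$ enters the final constant.
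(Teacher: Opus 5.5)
Your proposal is correct and follows the paper's proof: you use the simplex-valued weights to get $\|A(x)_k\|\le V\|x\|_\infty$, then bound the $k$-th component $A(x)_k + M\bigl(x_k + A(x)_k\bigr)$ of $T(x)-x$ by $V\|x\|_\infty + L(1+V)\|x\|_\infty$. The constant comes from exactly the same steps as in the paper.
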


\begin{proof}
Since the attention weights take values in the simplex
$\Delta_{k-1}$, every attention function satisfies
\[
    \|a(x_1, \ldots, x_k)\|
    = \Bigl\| \sum_{j=1}^k \omega_j x_j \Bigr\|
    \leq \sum_{j=1}^k \omega_j \|x_j\|
    \leq \max_{j} \|x_j\|,
\]
so every component of an attention head $A_i$ has norm at most
$\|x\|_\infty$, the first component being $x_1$ itself.
The components of the attention block therefore satisfy
\[
    \|A(x)_k\|
    = \Bigl\| \sum_{i=1}^h V_i \bigl( A_i(x)_k \bigr) \Bigr\|
    \leq \sum_{i=1}^h \|V_i\| \, \|x\|_\infty
    = V \|x\|_\infty .
\]
Writing $T = T_M \circ T_A$, the $k$-th component of $T(x) - x$ is
\[
    A(x)_k + M\bigl( x_k + A(x)_k \bigr),
\]
whose norm is at most
\[
    V \|x\|_\infty + L \bigl( \|x\|_\infty + V \|x\|_\infty \bigr)
    = \bigl( (1+L)(1+V) - 1 \bigr) \|x\|_\infty .
    \qedhere
\]
\end{proof}

\begin{example}
For a fully connected neural network $M(y) = W_2\, \rho(W_1 y)$ whose
activation $\rho$ is $1$-Lipschitz and fixes $0$ (e.g.~RELU), the hypothesis
holds with $L = \|W_2\| \, \|W_1\|$.
\end{example}

\subsubsection{Standard construction (multiple heads)}

The standard construction (\S\ref{sec:standardConstruction}) 
extends naturally to multiple heads. 
For each $i = 1,\ldots,h$ let $A_i \colon C^\ell \to C^\ell$ 
be an attention head constructed from attention functions 
using the standard construction, 
and let $V_i \colon C \to C$ be a linear map 
factoring through $H$.\footnote{This map is typically written $V_i = W_{O,i} W_{V_i}$ where
$W_{V_i}\colon C \to H$ and
$W_{O,i} \colon H\to C$ is the ``output map''.}
The \defn{multi-head attention block associated to
$\{(A_i,V_i)\}_{i=1}^h$} is the attention block
$A=\sum_{i=1}^h (V_i \otimes \id) \circ A_i$.

\subsection{Parameter counts}

We compare the counts of trainable parameters
for the variants of attention introduced above.
The potential function \(\phi\) is regarded as fixed.
The positional vectors \(p_j\) in QKV attention
are also regarded as fixed,
as are the orthogonal transformation in RoPE attention and the
$\mu$ parameter in ALiBi attention.
For biaffine and bilinear attention we assume the scoring
polynomials depend only on the distance between inputs:
the attention functions $a_2,\ldots,a_\ell$ of the head share a
single list of polynomials $P_0,\ldots,P_{\ell-1}$, with $a_k$
using $P_{k-1},\ldots,P_0$.
A biaffine attention head on $C^\ell$ is then determined by $\ell$
biaffine maps.
The value maps enter the attention block in the same way for every variant 
so we have excluded their parameters. 

\begin{table}[h]
\centering
\renewcommand{\arraystretch}{2}
\setlength{\tabcolsep}{10pt}
\begin{tabular}{@{}lc@{}}
Model & Scoring parameters ($V=\id$) \\
\midrule
QKV/RoPE/ALiBi attention
    & \(2N^2\) \\
Biaffine attention
    & \(\ell(N+1)^2\) \\
Bilinear attention
    & \(\ell N^2\) \\
Multi-head QKV/RoPE/ALiBi attention
    & \(2N^2\) \\
Multi-head bilinear
    & \(\dfrac{\ell N^2}{h}\) \\[1em]
\end{tabular}
\caption{Trainable scoring parameter counts in the attention blocks;
value map and positional encoding parameters are excluded. Here \(N=\dim C\),
\(\ell\) is the sequence length, and \(h\) is the number of heads.
The last row uses fixed coordinate projections as described above.}
\label{tab:parameter-counts}
\end{table}

\subsection{Bilinear attention recovers QKV attention}\label{sec:homogenization}

Although QKV attention introduces linear and constant terms
due to the positional embedding vectors, it can be realized as bilinear attention
using a basic construction in projective geometry called homogenization.

\begin{proposition}
\label{prop:qkv-homogenizes-to-bilinear}
Let $\widehat C=C\oplus\RR e$ and let
$\iota\colon C\to\widehat C$, $x\mapsto x+e$.
For every single-head QKV transformer $T\colon C^\ell\to C^\ell$
there is a single-head bilinear transformer
$\widehat T\colon\widehat C^\ell\to\widehat C^\ell$
with the same potential such that
    $\widehat T(\iota x_1,\ldots,\iota x_\ell)
    =
    \iota^\ell\bigl(T(x_1,\ldots,x_\ell)\bigr)$
for all $(x_1,\ldots,x_\ell)\in C^\ell$.
\end{proposition}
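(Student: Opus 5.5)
We realize $T$ on $\widehat C$ by homogenizing the affine data, one component at a time. Write $T = T_M\circ T_A$ with $A=(V\otimes\id)\circ A_1$, where $A_1$ is the QKV head built from $Q,K$, positional vectors $p_j$, and a potential $\phi$. Each component $a_k$ of $A_1$ has scores $\sigma_j(x_1,\ldots,x_k)=P_{k-j}(x_j,x_k)$, where
\[
P_{k-j}(u,v)=\langle Q(v+p_k),K(u+p_j)\rangle/\sqrt n
\]
is biaffine. By Lemma~\ref{lem:biaffine-decomposition}, $P_{k-j}=B+L+R+D$. Two features of QKV attention matter here. First, the polynomial depends on the absolute positions $j,k$ rather than only on $k-j$. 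Second, the definition of a bilinear attention function allows a separate bilinear form in each slot of each $a_k$. So we homogenize each score separately.

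For a biaffine $P$ on $C\times C$, define a bilinear form on $\widehat C$ by
\[
\widehat P(u+se,\,v+te)=B(u,v)+tL(u)+sR(v)+stD.
\]
Then $\widehat P(\iota u,\iota v)=P(u,v)$. This gives bilinear scoring forms $\widehat P^{(k)}_{j}$ for each attention function $\widehat a_k$ on $\widehat C^k$. The weights $\widehat\omega=\nabla\phi\circ\widehat\sigma$ agree with $\omega$ on inputs of the form $\iota x$.

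Next we compare the outputs. We have $\widehat a_k(\iota x)=\sum_j\omega_j(x_j+e)=a_k(x)+e$, using $\sum_j\omega_j=1$. So $\widehat A_1\circ\iota^\ell=\iota^\ell\circ A_1$. This is the key point, and it is where the simplex-valuedness of the weights is used.

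The residual structure is the main obstacle. Since $T_A=\id\otimes\id+A$, composing with $\iota$ would give $x+e+V(a+e)$, which doubles the $e$-component. We therefore define $\widehat V(u+se)=Vu$, so that $\widehat V$ kills $e$. Then
\[
\widehat T_A(\iota x)_k=x_k+e+Va_k(x)=\iota\bigl(T_A(x)_k\bigr).
\]
For the MLP we set $\widehat M(y+se)=M(y)$, or more precisely $M$ evaluated after projecting away $e$. This is piecewise smooth, and it gives $\widehat T_M(\iota y)=\iota(y+M(y))$. Composing the two identities yields $\widehat T\circ\iota^\ell=\iota^\ell\circ T$.

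It remains to check that $\widehat T$ is a single-head bilinear transformer with the same $\phi$. This holds because each $\widehat a_k$ has bilinear scoring forms and potential $\phi$, by construction.
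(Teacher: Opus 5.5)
Your proposal is correct and follows the paper's proof essentially step for step. You homogenize each position-dependent biaffine QKV score into a bilinear form on $\widehat C$; you do this abstractly via Lemma~\ref{lem:biaffine-decomposition}, while the paper writes out the four inner-product terms of $\widehat B_{k-j,j}$ explicitly. You then use $\sum_j\omega_j=1$ to get $\widehat a_k(\iota x_1,\ldots,\iota x_k)=\iota\bigl(a_k(x_1,\ldots,x_k)\bigr)$, and extend $V$ and $M$ by ignoring the $e$-coordinate so that the residual connections carry exactly one copy of $e$.
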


\begin{proof}
We will show that every QKV transformer on \(C\)
is realized on the affine chart \(\iota(C)=C+e\subset \widehat C\) by a bilinear
transformer on \(\widehat C\).
For a QKV head with query and key maps $Q, K \colon C \to H$
and fixed positional encoding vectors \(p_1,\ldots,p_\ell \in C\),
the score of the \(k\)-th input $x_k$ on the \(j\)-th input $x_j$ is
\[
\begin{aligned}
    s_{kj}(x_j,x_k)
    &=
        \frac{
        \left\langle Q(x_k+p_k),K(x_j+p_j)\right\rangle
    }{\sqrt{n}}\\
    &=
    \frac{1}{\sqrt{n}}
    \Big(
        \langle Qx_k,Kx_j\rangle
        +
        \langle Qx_k,Kp_j\rangle
        +
        \langle Qp_k,Kx_j\rangle
        +
        \langle Qp_k,Kp_j\rangle
    \Big).
\end{aligned}
\]
Define a bilinear form
    $\widehat B_{k-j,j}$ on $\widehat C$ by
\[
\begin{aligned}
    \widehat B_{k-j,j}(u+\alpha e,v+\beta e)
    \coloneqq
    \frac{1}{\sqrt{n}}
    \Big(
        \langle Qv,Ku\rangle
        +
        \alpha \langle Qv,Kp_j\rangle
        +
        \beta \langle Qp_k,Ku\rangle
        +
        \alpha\beta \langle Qp_k,Kp_j\rangle
    \Big).
\end{aligned}
\]
Then
    $\widehat B_{k-j,j}(\iota x_j,\iota x_k)
    =
    s_{kj}(x_j,x_k)$
so the bilinear scores of the homogenized attention head agree
with the original QKV scores on the affine chart \(C+e\).  Since the
score vectors are equal, the attention weights are equal.
If the original attention function is
    $a_k(x_1,\ldots,x_k)
    =
    \sum_{j=1}^k \omega_j x_j$
then the homogenized bilinear attention function satisfies
\[
\begin{aligned}
    \widehat a_k(\iota x_1,\ldots,\iota x_k)
    &=
    \sum_{j=1}^k \omega_j \iota(x_j) \\
    &=
    \sum_{j=1}^k \omega_j(x_j+e) \\
    &=
    \left(\sum_{j=1}^k \omega_jx_j\right)
    +
    \left(\sum_{j=1}^k\omega_j\right)e \\
    &=
    a_k(x_1,\ldots,x_k)+e \\
    &=
    \iota(a_k(x_1,\ldots,x_k)).
\end{aligned}
\]

Extend the value map and the neural network by ignoring the new coordinate:
    $\widehat V(u+\alpha e)\coloneqq V(u), \widehat M(u+\alpha e)\coloneqq M(u)$.
For any \(x,z\in C\) we have
    $\iota(x)+\widehat V(\iota z)
    =
    \iota(x+Vz)$
and
    $\iota(x)+\widehat M(\iota x)
    =
    \iota(x+M(x))$
so the transformer satisfies
    $\widehat T(\iota x_1,\ldots,\iota x_\ell)
    =
    \iota^\ell\bigl(T(x_1,\ldots,x_\ell)\bigr)$.
\end{proof}

\FloatBarrier

\section{Spectral experiments on bilinear forms}\label{sec:experimentsQK}

In this section we inspect the bilinear forms in the attention heads of
open-source large language models.

\subsection{Models}

We analyze fourteen publicly available pretrained language models,
comprising $9{,}512$ attention heads.  The models have head dimensions
$64$, $80$, or $128$ and use learned absolute positional embeddings,
ALiBi, or rotary position embeddings (RoPE):
\begin{itemize}
    \item \texttt{distilgpt2}~\cite{sanh2019distilbert}: distilled
    \texttt{gpt2}, $6$ layers, $12$ heads per layer ($72$ heads),
    $N=768$, and $n=64$.
    \item \texttt{gpt2}~\cite{radford2019language}: GPT-2 small,
    $12$ layers, $12$ heads per layer ($144$ heads), $N=768$, and $n=64$.
    \item \texttt{gpt2-medium}~\cite{radford2019language}:
    $24$ layers, $16$ heads per layer ($384$ heads), $N=1024$, and $n=64$.
    \item \texttt{gpt2-large}~\cite{radford2019language}:
    $36$ layers, $20$ heads per layer ($720$ heads), $N=1280$, and $n=64$.
    \item \texttt{opt-125m}~\cite{zhang2022opt}: OPT decoder-only,
    $12$ layers, $12$ heads per layer ($144$ heads), $N=768$, and $n=64$.
    \item \texttt{opt-350m}~\cite{zhang2022opt}:
    $24$ layers, $16$ heads per layer ($384$ heads), $N=1024$, and $n=64$.
    \item \texttt{opt-1.3b}~\cite{zhang2022opt}:
    $24$ layers, $32$ heads per layer ($768$ heads), $N=2048$, and $n=64$.
    \item \texttt{opt-2.7b}~\cite{zhang2022opt}:
    $32$ layers, $32$ heads per layer ($1{,}024$ heads), $N=2560$, and $n=80$.
    \item \texttt{opt-6.7b}~\cite{zhang2022opt}:
    $32$ layers, $32$ heads per layer ($1{,}024$ heads), $N=4096$, and $n=128$.
    \item \texttt{opt-13b}~\cite{zhang2022opt}:
    $40$ layers, $40$ heads per layer ($1{,}600$ heads), $N=5120$, and $n=128$.
    \item \texttt{bloom-3b}~\cite{bigscience2022bloom}: $30$ layers, $32$ heads per layer
    ($960$ heads), $N=2560$, $n=80$, and ALiBi positional encoding.
    \item \texttt{bloom-7b1}~\cite{bigscience2022bloom}: $30$ layers, $32$ heads per layer
    ($960$ heads), $N=4096$, $n=128$, and ALiBi positional encoding.
    \item \texttt{pythia-70m-deduped}~\cite{biderman2023pythia}: GPT-NeoX
    family, $6$ layers, $8$ heads per layer ($48$ heads), $N=512$,
    $n=64$, and RoPE on the first $25\%$ of each head's dimensions.
    \item \texttt{Mistral-Small-24B-Base-2501}~\cite{mistralai2025small3}: $40$ layers,
    $32$ query heads per layer ($1{,}280$ heads), $8$ key--value heads,
    $N=5120$, $n=128$, and RoPE.
\end{itemize}

As we saw in \S\ref{sec:QKVattention},
QKV attention is the special case of biaffine attention
where the bilinear components
(in the sense of Lemma~\ref{lem:biaffine-decomposition})
of the scoring function all equal the same bilinear form.
In particular, a QKV attention head $A$
is associated with a {single} bilinear form $B_A$ on $C^2$.
In the notation of \cite{vaswani2017attention}
where $C = \RR^N$ and $H = \RR^n$
and $Q,K \in M_{n\times N}(\RR)$,
this bilinear form is given by
$B_A (x,y)= x^T K^T Q y.$
We write $L_A$ for the Gram matrix $K^T Q$.
For \texttt{pythia-70m-deduped} and
\texttt{Mistral-Small-24B-Base-2501}, which use RoPE attention, the
form $L_A = K^T Q$ reported in the cross-model comparisons is the
base form at relative position $d=0$.

\subsection{Symmetry}\label{subsec:low-rank-bilinear-forms}

For each head $A$
	we asked whether the matrix $L=L_A$ was biased
	towards either symmetry or skew-symmetry.
Let $\|L\|$ denote the Frobenius norm.
Define the \defn{symmetric energy}
\[
    \mathscr{S}(A) := \frac{\|\frac{1}{2}(L + L^T)\|^2}{\|L\|^2}.
\]
The expected value of the symmetric energy for
a product $L = K^T Q$ where $K,Q$ are $n \times N$ random
matrices with iid standard normal entries is $1/2 + 1/(2N)$
(Proposition~\ref{prop:expected-symmetric-energy-low-rank-qk}).

\begin{observation}\label{obs:symmetric-bias}
Attention heads show a bias for symmetric forms: their symmetric
energies $\mathscr{S}$ lie
predominantly above the random baseline
(Figure~\ref{fig:qk-multimodel}); the per-model share of heads above it
ranges from $76\%$ to $99\%$.
\end{observation}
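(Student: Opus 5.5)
Observation~\ref{obs:symmetric-bias} is an empirical statement, so the plan is a verification procedure rather than a deduction: fix the random baseline exactly, compute $\mathscr S(A)$ for every head, and compare. The baseline comparison rests on one algebraic identity. With $S=\tfrac12(L+L^T)$ and $T=\tfrac12(L-L^T)$ we have $\langle S,T\rangle=0$ in the Frobenius inner product, so $\|L\|^2=\|S\|^2+\|T\|^2$ and $\|S\|^2-\|T\|^2=\tr(L^2)$. Therefore
\[
    \mathscr S(A)=\tfrac12+\frac{\tr(L^2)}{2\|L\|^2}.
\]
So a head lies above the baseline exactly when $\tr(L^2)/\|L\|^2>1/N$. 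For $L=K^TQ$ the cyclic property gives $\tr(L^2)=\tr\bigl((QK^T)^2\bigr)$, a trace of an $n\times n$ matrix. This makes the check cheap even for $N=5120$.

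The first step is to confirm the baseline $1/2+1/(2N)$ of Proposition~\ref{prop:expected-symmetric-energy-low-rank-qk}, which I take as given. As a sanity check on its form, one can compute the numerator and denominator in expectation. Put $M=QK^T$, so $M_{ab}=\sum_i Q_{ai}K_{bi}$. For $a\neq b$ we get $\EE[M_{ab}M_{ba}]=0$, while $\EE[M_{aa}^2]=N$; hence $\EE\tr(L^2)=nN$. Similarly $\EE\|K^TQ\|^2=nN^2$. The ratio of these expectations is $1/N$, which reproduces the constant. The main subtlety here, and the place I expect the proposition itself to need care, is that $\EE$ of a ratio is not the ratio of expectations. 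One either uses the rotational invariance of the Gaussian ensemble to decouple $\|L\|$ from the normalized direction, or settles for the concentration statement that the ratio is $1/N+O(N^{-1}\,n^{-1/2})$. Either is adequate for a threshold comparison.

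The second step is computational. For each of the fourteen models, extract $Q$ and $K$ per head: use the query--key group shared by the head in grouped-query models such as Mistral, and the $d=0$ base form for RoPE, as stipulated. Then evaluate $\mathscr S(A)$ through the formula above and record the fraction of heads exceeding $1/2+1/(2N_{\text{model}})$. The claim is that this fraction lies in $[0.76,0.99]$ for every model, and that the distribution of $\mathscr S$ is visibly shifted above the baseline (Figure~\ref{fig:qk-multimodel}).

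The main obstacle is implementation fidelity rather than mathematics. Checkpoints store fused or transposed projections, such as GPT-2's \texttt{c\_attn} Conv1D layout, OPT's separate biases, and NeoX's interleaved QKV with partial rotary dimensions. The head slices must therefore be cut so that $L_A=K^TQ$ really is the bilinear term of Lemma~\ref{lem:biaffine-decomposition}. Layer-norm gains are ignored, consistent with \S\ref{sec:bilinearAttention}. I would validate the slicing on one head per model by recomputing the attention logits from the extracted $Q,K$ and comparing them against the framework's output.
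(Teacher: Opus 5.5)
Your verification plan matches the paper's. The observation is the count of heads with $\mathscr S(A)>(N+1)/(2N)$. It uses the identity $\mathscr S=\tfrac12+\tr(L^2)/(2\|L\|^2)$ of Lemma~\ref{lemma:symmetric-skew-energy-baseline} and the exact expectation from Proposition~\ref{prop:expected-symmetric-energy-low-rank-qk}. The paper proves that expectation by your first option: invariance of the law of $K^TQ$, plus a sign flip of one column of $K$ to get $\EE[X_{ij}X_{ji}]=0$ for $i\neq j$.

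Your alternative route is false and should be dropped. The ratio $\tr(L^2)/\|L\|^2$ does not concentrate at $1/N$ on the scale $N^{-1}n^{-1/2}$. With $M=QK^T$, write $\tr(L^2)=\sum_a M_{aa}^2+2\sum_{a<b}M_{ab}M_{ba}$. For $a\neq b$, the entries $M_{ab}$ and $M_{ba}$ use disjoint rows of $Q$ and $K$, so they are independent with second moment $N$. The summands are pairwise uncorrelated, either by independence of disjoint rows or by flipping the sign of a single row of $Q$. One gets $\operatorname{Var}\tr(L^2)=2n^2N^2+6nN$. So the standard deviation of $\tr(L^2)$ is about $\sqrt2\,nN$, the same order as its mean $nN$, while $\|L\|^2$ concentrates at $nN^2$.

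Hence for a Gaussian product, $\mathscr S$ fluctuates by about $1/(\sqrt2\,N)$, independently of $n$, around a baseline only $1/(2N)$ above $\tfrac12$. Random products therefore straddle the baseline, with roughly half of them above it. The count above $(N+1)/(2N)$ needs no concentration, so your procedure stands. But the $76$--$99\%$ shares indicate a bias only relative to that roughly even split for random products. That is why Figure~\ref{fig:qk-multimodel} shows each model beside independently sampled products of the same dimensions, rather than treating the baseline as a sharp threshold.
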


This computation corroborates the findings
of \cite{saponati2025underlying}.
Some heads are almost entirely symmetric.

\subsection{Pairing score}\label{sec:pairingScore}

Write $S=\tfrac12(L+L^T)$. Let $\lambda_+$ (resp. $\lambda_-$) denote
the vector of positive (resp. absolute values of negative) eigenvalues of $S$
sorted in decreasing order.
By appending zeros as needed, we will assume
$\lambda_+$ and $\lambda_-$ have the same length.
Define the \defn{pairing score}
\[
    \mathscr{P}(A) := 1 - \frac{\|\lambda_+-\lambda_-\|}{\|S\|}.
\]
The pairing score is one if and only if
the set of eigenvalues is symmetric about zero.
It is determined by the inner product of the two lists, and is
bounded in terms of the ratio of their norms.

\begin{lemma}\label{lemma:pairing-bound}
Let $S$ be a symmetric bilinear form with
$\lambda_+ \neq 0$, and let $\rho = \|\lambda_-\| / \|\lambda_+\|$.
Then
\[
    \mathscr{P}
    = 1 - \sqrt{1 - \frac{2 \langle \lambda_+, \lambda_- \rangle}{\|S\|^2}}
    \;\leq\;
    1 - \frac{|1 - \rho|}{\sqrt{1 + \rho^2}},
\]
with equality if and only if $\lambda_- = \rho\, \lambda_+$.
\end{lemma}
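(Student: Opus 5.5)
The proof is a direct computation from the definitions. Write the eigenvalues of $S$ as $\lambda_+$ (positive ones) and $-\lambda_-$ (negative ones), padding both lists with zeros to a common length. Zero eigenvalues contribute nothing, so
\[
\|S\|^2=\|\lambda_+\|^2+\|\lambda_-\|^2 .
\]
Expanding the square gives
\[
\|\lambda_+-\lambda_-\|^2=\|\lambda_+\|^2+\|\lambda_-\|^2-2\langle\lambda_+,\lambda_-\rangle=\|S\|^2-2\langle\lambda_+,\lambda_-\rangle .
\]
Dividing by $\|S\|^2$ and taking square roots yields the stated identity for $\mathscr{P}$. The hypothesis $\lambda_+\neq 0$ ensures $\|S\|\neq 0$, so the division is legitimate.

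For the inequality, I will bound the inner product by Cauchy--Schwarz:
\[
\langle\lambda_+,\lambda_-\rangle\le\|\lambda_+\|\,\|\lambda_-\|=\rho\|\lambda_+\|^2 .
\]
Since $\|S\|^2=(1+\rho^2)\|\lambda_+\|^2$, this gives
\[
1-\frac{2\langle\lambda_+,\lambda_-\rangle}{\|S\|^2}\ \ge\ 1-\frac{2\rho}{1+\rho^2}=\frac{(1-\rho)^2}{1+\rho^2}.
\]
The map $t\mapsto 1-\sqrt t$ is decreasing, so
\[
\mathscr{P}\le 1-\frac{|1-\rho|}{\sqrt{1+\rho^2}} .
\]

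For the equality case, note that every step except Cauchy--Schwarz is an identity or strictly monotone. Equality therefore holds if and only if equality holds in Cauchy--Schwarz with a nonnegative inner product. That means $\lambda_-=c\lambda_+$ for some $c\ge 0$, which is possible since $\lambda_+\neq0$. Taking norms gives $c=\rho$. Conversely, if $\lambda_-=\rho\lambda_+$ then Cauchy--Schwarz is an equality.

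The only point needing care is the equality case when $\rho=0$, i.e.\ $\lambda_-=0$. There the inner product is $0=\|\lambda_+\|\,\|\lambda_-\|$, so equality holds and $\lambda_-=0\cdot\lambda_+$, consistent with the claim. No other step is an obstacle; this is routine.
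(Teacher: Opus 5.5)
Your proof is correct and is essentially the paper's argument. The paper uses the reverse triangle inequality $\|\lambda_+-\lambda_-\|\geq\bigl|\|\lambda_+\|-\|\lambda_-\|\bigr|$, with equality iff one vector is a nonnegative multiple of the other; after expanding $\|\lambda_+-\lambda_-\|^2$ this is exactly your Cauchy--Schwarz bound $\langle\lambda_+,\lambda_-\rangle\leq\|\lambda_+\|\,\|\lambda_-\|$, with the same equality case.
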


\begin{proof}
Since $\|S\|^2 = \|\lambda_+\|^2 + \|\lambda_-\|^2$, we have
$\|\lambda_+ - \lambda_-\|^2 = \|S\|^2 - 2 \langle \lambda_+, \lambda_- \rangle$,
which gives the equality.
By the triangle inequality,
$\|\lambda_+ - \lambda_-\| \geq \bigl| \|\lambda_+\| - \|\lambda_-\| \bigr|
= |1 - \rho|\, \|\lambda_+\|$, with equality if and only if one of
$\lambda_\pm$ is a nonnegative multiple of the other, that is,
$\lambda_- = \rho\, \lambda_+$.
Dividing by
$\|S\|^2 = (1 + \rho^2)\, \|\lambda_+\|^2$
gives the inequality.
\end{proof}

\begin{observation}\label{obs:pairing}
The positive eigenvalues and the absolute values of the negative
eigenvalues of each symmetric part, listed in decreasing order, are
approximately proportional.
Set $\rho = \|\lambda_-\| / \|\lambda_+\|$.
The relative error
\[
    \frac{\|\lambda_- - \rho\,\lambda_+\|}{\|\lambda_-\|}
\]
has median $0.13$ over the heads, with $80\%$ between $0.05$ and
$0.33$; the per-model median lies between $0.08$ and $0.18$.
Consistently, the pairing score is close to the bound of
Lemma~\ref{lemma:pairing-bound}, whose equality case is proportionality:
the per-model median gap between that bound and $\mathscr{P}$ lies
between $0.017$ and $0.041$.
\end{observation}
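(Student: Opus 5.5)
Observation~\ref{obs:pairing} is an empirical statement, so the plan is a reproducible computation over all $9{,}512$ heads rather than a deduction. For each head $A$ I would extract $Q,K\in M_{n\times N}(\RR)$ from the checkpoint, form $S=\tfrac12(L_A+L_A^T)$, compute the sorted lists $\lambda_\pm$, and evaluate three quantities:
\[
\rho=\|\lambda_-\|/\|\lambda_+\|,\qquad \frac{\|\lambda_--\rho\lambda_+\|}{\|\lambda_-\|},\qquad \mathscr P(A).
\]
The extraction must follow each architecture's layout.
\begin{itemize}
\item Fused $c\_attn$ weights (GPT-2) and separate projections (OPT, BLOOM) are handled by slicing per head.
\item Projection biases are discarded: by Lemma~\ref{lem:biaffine-decomposition} they only affect the linear and constant parts of the score.
\item For RoPE models the base form at $d=0$ is used.
\item For \texttt{Mistral} the $32$ query heads are paired with the $8$ shared key heads.
\item For \texttt{pythia} the partial-rotary layout is respected.
\end{itemize}

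To compute the spectrum stably and cheaply, I would not diagonalize the $N\times N$ matrix. Write $W=\binom{Q}{K}\in M_{2n\times N}(\RR)$ and
\[
J=\begin{pmatrix}0&I\\ I&0\end{pmatrix},
\]
so that $S=\tfrac12 W^TJW$. The nonzero eigenvalues of $S$ coincide with those of the symmetric $2n\times 2n$ matrix $\tfrac12 G^{1/2}JG^{1/2}$, where $G=WW^T$. I would compute these in float64.

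This reduction also gives a built-in consistency check: by Theorem~\ref{thm:balancedIntro}, whenever $W$ has full rank there are exactly $n$ positive and $n$ negative eigenvalues. So $\lambda_+$ and $\lambda_-$ already have equal length $n$, and no padding or zero-thresholding is needed. A head where this fails would be flagged and inspected separately.

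The statistics themselves are direct. From the sorted lists I compute $\|S\|^2=\|\lambda_+\|^2+\|\lambda_-\|^2$, the ratio $\rho$, the relative error $\|\lambda_--\rho\lambda_+\|/\|\lambda_-\|$, the pairing score $\mathscr P(A)$, and the bound $1-|1-\rho|/\sqrt{1+\rho^2}$ from Lemma~\ref{lemma:pairing-bound}. Two identities serve as checks:
\begin{itemize}
\item The gap between the bound and $\mathscr P$ is nonnegative by that lemma, so it can be checked head by head.
\item Evaluating $\mathscr P$ via $\mathscr P=1-\sqrt{1-2\langle\lambda_+,\lambda_-\rangle/\|S\|^2}$ must agree with the definition to rounding error.
\end{itemize}
Finally I would aggregate the results:
\begin{itemize}
\item the global median and the $10$th and $90$th percentiles of the relative error, expected to be $0.13$, $0.05$ and $0.33$;
\item per-model medians of the relative error, expected to lie in $[0.08,0.18]$;
\item per-model median gaps between the bound and $\mathscr P$, expected to lie in $[0.017,0.041]$.
\end{itemize}

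I expect the main obstacle to be correctness of the weight extraction rather than the linear algebra. The risks are transposition conventions (Conv1D versus Linear), head-slicing order in fused and grouped-query layouts, RoPE's interleaved versus half-split pairing, and scaling factors such as $1/\sqrt n$. Scaling is harmless, since every quantity above is scale-invariant, but the other errors would silently corrupt the spectra.

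I would guard against these errors in two ways. First, recompute the attention scores on random token inputs from the extracted $Q,K$ and compare them with the model's own pre-softmax logits. Second, confirm the balanced signature $(n,n,N-2n)$ predicted by Theorem~\ref{thm:balancedIntro} for every head. Only after both checks pass would the reported quantiles be taken as establishing the observation.
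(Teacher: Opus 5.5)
Your plan is correct and is essentially what the paper does: Observation~\ref{obs:pairing} is a measurement rather than a theorem, obtained from the sorted spectra of $S=\tfrac12(K^TQ+Q^TK)$ over the heads (your $2n\times 2n$ reduction via $G^{1/2}JG^{1/2}$ is the one in Lemma~\ref{lem:gram-form}), together with the bound of Lemma~\ref{lemma:pairing-bound}. One caveat on your safeguards: the balanced-signature check is automatic whenever $G=WW^T$ is positive definite, because $G^{1/2}JG^{1/2}$ is then congruent to $J$, so it cannot detect mis-sliced or mis-paired query/key blocks; only your recomputation of the pre-softmax logits guards against those errors.
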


Figure~\ref{fig:qk-random-baseline} compares the pairing scores of
trained heads with a Gaussian QK baseline, and
Figure~\ref{fig:pairing-histogram} shows their distributions by model.
The per-model median pairing score lies between $0.77$ and $0.87$.

\subsection{The parity of trained heads}
\label{subsec:trained-types}

The midpoint $(\tfrac12,\tfrac12,0,0)$ is denoted $\Theta_\circ$,
the line segment from $\Theta_\circ$ to $c=1$ is denoted $\Theta_+$,
and the line segment from $\Theta_\circ$ to $d=1$ is denoted $\Theta_-$.
For a bilinear form $L$ with nonzero symmetric part $S$, write
$\pi(S)=(0,b_S,c_S,d_S)$.
We use the neighborhood
\[
    \mathscr N = \{(0,b,c,d)\in\Delta : c+d\leq\tfrac1{50}\}
\]
of the vertex $(0,1,0,0)$ in the face $\Delta=\{a=0\}$.
We call $L$ \defn{Type~$\mathrm{II}$} if $\pi(S)\in\mathscr N$.
Outside $\mathscr N$, we call it \defn{Type~$\mathrm{I}_+$}
if $c_S>d_S$, and \defn{Type~$\mathrm{I}_-$} otherwise.
Figure~\ref{fig:parity-by-layer} shows how these proportions vary across
layers in each model, using the same partition as Table~\ref{tab:head-types}.

\begin{table}[!htbp]
\centering
\small
\begin{tabular}{lrrrr}
\toprule
model & heads & Type~$\mathrm{I}_+$ & Type~$\mathrm{II}$ & Type~$\mathrm{I}_-$ \\
\midrule
\texttt{distilgpt2} & $72$ & $34.7$ & $26.4$ & $38.9$ \\
\texttt{gpt2} & $144$ & $36.8$ & $34.0$ & $29.2$ \\
\texttt{gpt2-medium} & $384$ & $54.2$ & $28.6$ & $17.2$ \\
\texttt{gpt2-large} & $720$ & $64.3$ & $22.6$ & $13.1$ \\
\texttt{opt-125m} & $144$ & $79.2$ & $16.7$ & $4.2$ \\
\texttt{opt-350m} & $384$ & $78.9$ & $20.8$ & $0.3$ \\
\texttt{opt-1.3b} & $768$ & $72.3$ & $17.6$ & $10.2$ \\
\texttt{opt-2.7b} & $1{,}024$ & $71.4$ & $14.8$ & $13.8$ \\
\texttt{opt-6.7b} & $1{,}024$ & $78.7$ & $15.9$ & $5.4$ \\
\texttt{opt-13b} & $1{,}600$ & $75.7$ & $6.0$ & $18.3$ \\
\texttt{bloom-3b} & $960$ & $29.7$ & $49.9$ & $20.4$ \\
\texttt{bloom-7b1} & $960$ & $23.2$ & $55.2$ & $21.6$ \\
\texttt{pythia-70m-deduped} & $48$ & $18.8$ & $50.0$ & $31.2$ \\
\texttt{Mistral-Small-24B-Base} & $1{,}280$ & $36.8$ & $53.8$ & $9.5$ \\
\midrule
all & $9{,}512$ & $57.4$ & $28.5$ & $14.1$ \\
\bottomrule
\end{tabular}
\caption{Percentage of the heads by parity, classified using $\pi(S)$, with tolerance $c_S+d_S\leq\tfrac1{50}$ for Type~$\mathrm{II}$.}
\label{tab:head-types}
\end{table}

\subsection{Eigenvalue distributions of trained heads}
\label{subsec:spectral-clusters}

Whereas earlier experiments used coarse measures on eigenvalues,
in this section we attempt to model the distribution itself.
Darveshi~\cite{darveshi2025spectral} observed that the absolute values of
the eigenvalue distributions of
the bilinear forms in the \texttt{Qwen3} family are well-modeled by
a gamma distribution.
We corroborate their findings for the fourteen models in our survey.
We also examined the symmetric and antisymmetric parts separately.
For the symmetric parts, the distribution is a bimodal
sum of gamma-like distributions
(Observation~\ref{obs:bimodal-gamma}).
For the absolute-values and the antisymmetric parts,
we made the following observation.

\begin{observation}\label{obs:full-gamma}
The absolute values of the nonzero
eigenvalues of the bilinear forms, and the positive imaginary parts
$\sigma(T)$ of the eigenvalues of their antisymmetric parts, are
unimodal and well-modeled by a gamma distribution.
\end{observation}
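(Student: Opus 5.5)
Since Observation~\ref{obs:full-gamma} is an empirical claim rather than a theorem, what I would "prove" is that it survives a careful measurement protocol over all $9{,}512$ heads of \S\ref{sec:experimentsQK}. For each head $A$ I will form $L=L_A=K^TQ$ (for the RoPE models, the base form at $d=0$) and its antisymmetric part $T=\tfrac12(L-L^T)$. I will then extract two samples.

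\begin{enumerate}[label=(\roman*)]
\item The moduli $|\lambda|$ of the nonzero eigenvalues of $L$. There are at most $n$ of them since $\operatorname{rank}L\le n$. The zero eigenvalues are discarded using a relative threshold, say $10^{-6}\|L\|$.
\item The positive imaginary parts $\sigma(T)$ of the eigenvalues of $T$. These come in pairs $\pm i\sigma$, and $\operatorname{rank}T\le 2n$, so there are at most $n$ of them.
\end{enumerate}

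To make heads comparable and to remove the arbitrary scale of $Q,K$, each sample is normalized to unit $\ell^2$ norm, just as was done for the sorted spectra of symmetric parts.

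The fitting step will be maximum-likelihood estimation of a two-parameter gamma law (shape $\alpha$, scale $\theta$, with location fixed at $0$) on each normalized sample. For each fit I will record three goodness-of-fit diagnostics:
\begin{itemize}
\item the Kolmogorov--Smirnov distance between the empirical and fitted CDFs;
\item a likelihood-ratio or AIC comparison against competing unimodal families (log-normal, Weibull, half-normal);
\item a comparison against a two-component gamma mixture, to detect hidden bimodality.
\end{itemize}
Unimodality will be checked separately with Hartigan's dip test on each sample. I will also run the test on the pooled per-model mean of sorted spectra, as in Figure~\ref{fig:mean-spectrum}. "Well-modeled" will then mean three things:
\begin{itemize}
\item the median KS distance is small, comparable to the sampling fluctuation of a true gamma sample of size $n$, which I calibrate by simulation;
\item the mixture model is not preferred by AIC for the bulk of heads;
\item the dip test does not reject unimodality for the bulk of heads.
\end{itemize}

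As a control I will repeat everything on Gaussian random $Q,K$ with the same $(n,N)$. This separates structure forced by the construction from structure acquired in training. The control is also the theoretical sanity check. For random $K^TQ$ the nonzero spectrum is governed by products of Ginibre-type matrices, and the moduli are already unimodal, so unimodality alone is not surprising. Gamma shape parameters that depart clearly from the random baseline are the learned content.

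The contrast with Observation~\ref{obs:bimodal-gamma} should be explained rather than merely recorded. For symmetric parts, Theorem~\ref{thm:balancedIntro} forces $n$ eigenvalues on each side of zero, so the signed distribution is bimodal. Taking moduli, or taking imaginary parts of conjugate pairs, folds the two sides together. Since the two sides are approximately proportional (Observation~\ref{obs:pairing}), the folded distribution should be unimodal.

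I expect the main obstacle to be the small-sample regime. With only $n\in\{64,80,128\}$ points per head, per-head KS and dip tests have low power, so "well-modeled" can become vacuous. I would address this first by aggregating: I would compare the distribution of per-head KS statistics against the simulated null distribution for gamma samples of size $n$ with the fitted parameters. A second difficulty is deciding which eigenvalues count as nonzero in low precision (for example, fp16 weights for the large models). I would handle this by computing in float64 and checking that the conclusions are stable as the threshold varies over several orders of magnitude.
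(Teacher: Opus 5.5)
Your plan is a genuinely different and much more stringent route than the paper's. The paper's evidence is descriptive. For each head it fits a gamma, a lognormal and a Weibull law to $|\lambda(L)|$, to the two lobes of $\lambda(S)$, and to $\sigma(T)$. It then reports the root-mean-square gap between a $20$-bin histogram (sample rescaled to unit mean) and the fitted density; the median error lies between $0.12$ and $0.21$ for every family and quantity.

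There is no dip test, no mixture comparison and no Gaussian control for this observation. Gamma is also explicitly \emph{not} singled out: the three families fit about equally well, the lognormal is closest on $\sigma(T)$ in most models, and gamma and Weibull are each closest about equally often on $|\lambda(L)|$. So ``well-modeled by a gamma distribution'' in the paper means ``approximated about as well as by other standard unimodal families,'' and the paper lists the observation as unexplained.

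What your route buys is a calibrated test of unimodality and, through the random control, a separation of structural from learned features, which the paper does not attempt. The paper's route buys a family-neutral measure of approximation quality rather than a significance test. Expect your AIC comparison to favour the lognormal on $\sigma(T)$. Also, once you pool KS statistics over all $9{,}512$ heads to gain power, any systematic departure from an exact gamma law will be detected. Your operational notion of ``well-modeled'' is therefore stronger than what the paper's evidence supports, and your protocol could reject gamma even where the paper's descriptive claim holds.

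Two parts of the proposal need fixing. First, your explanation of the contrast with Observation~\ref{obs:bimodal-gamma} does not concern the quantities in this observation. The moduli of the complex eigenvalues of $L=S+T$ are not obtained by folding the spectrum of $S$. The eigenvalues $\pm i\sigma$ of $T$ are exactly symmetric for every real antisymmetric matrix. So the approximate proportionality of the lobes of $S$ (Observation~\ref{obs:pairing}) is irrelevant to both. Even for $|\lambda(S)|$ the heuristic is unreliable: a mixture of two scaled copies of a unimodal law need not be unimodal when the scales differ greatly, as they do for Type~$\mathrm{I}_\pm$ heads. Drop that paragraph.

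Second, your simulated null treats each sample as $n$ i.i.d.\ gamma draws. However, the complex eigenvalues of the real matrix $L$ come in conjugate pairs of equal modulus, so $|\lambda(L)|$ contains duplicated values. Keep one eigenvalue per pair (say those with $\operatorname{Im}\lambda\geq 0$), as you already do for $T$. Eigenvalues of a single matrix are also not independent draws, so the KS calibration needs a null that reflects this. Finally, the nonzero spectrum of $L=K^TQ$ is the spectrum of the $n\times n$ matrix $QK^T$ when the latter is invertible. Computing it that way removes the need for a zero threshold.
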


\subsubsection{Gamma vs.~lognormal vs.~Weibull}

We compared the gamma distribution's fit with two other families.
For each head we fitted a gamma, a lognormal
and a Weibull distribution to each of the following quantities: the
magnitudes $|\lambda(L)|$ of the nonzero eigenvalues of the full form,
the two lobes $\lambda_+(S)$ and $\lambda_-(S)$ of the spectrum of the
symmetric part, and the positive imaginary parts $\sigma(T)$ of the
eigenvalues of the antisymmetric part.
Figure~\ref{fig:goodness-of-fit} reports, for each fit, the average
error between the observed histogram and the fitted density.

\begin{figure}[!htbp]
\centering
\includegraphics[width=0.7\textwidth]{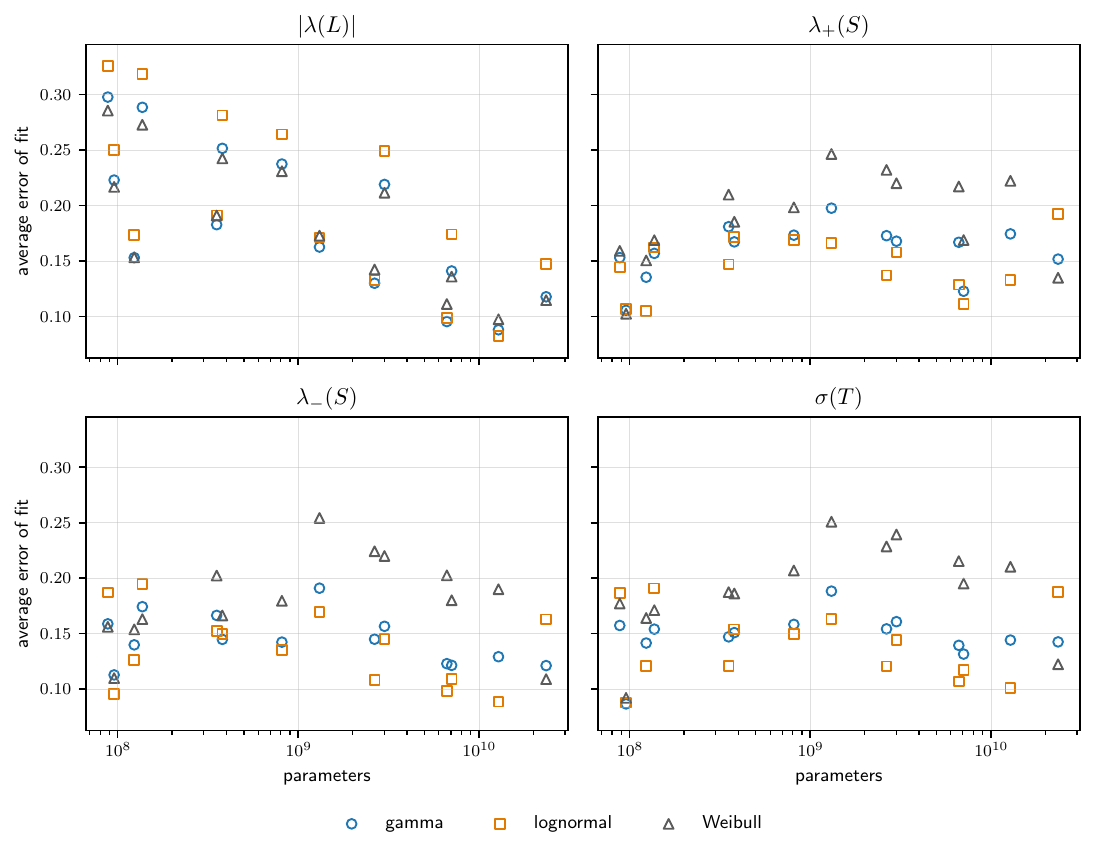}
\caption{Average error of each fitted family over the heads of a model,
against the size of the model; lower is a closer fit.  The error is the
root mean square difference between the heights of a $20$-bin histogram
of the sample and the fitted density, after rescaling the sample to unit mean.}
\label{fig:goodness-of-fit}
\end{figure}

The three families fit about equally well.  Over all heads, the
median error is between $0.12$ and $0.21$ for every family and every
quantity.
The lognormal is closest on the two lobes and on $\sigma(T)$
in a majority of models, while on $|\lambda(L)|$ the
gamma and the Weibull are closest about equally often.

\subsubsection{Relations between \texorpdfstring{$\lambda_\pm$ and $\sigma$}{lambda+/- and sigma}}

We observed that the eigenvalue distributions of $S$ and $T$ are closely related.
To explain this we use the gamma fit in the observations below
because these relations can be naturally described using
its shape and mode parameters.
Let $k_T$ and $\tau$
denote the shape and mode of $\sigma(T)$, and $k_\pm$ and $\mu, \nu$
the shapes and the negative and positive modes of the two lobes 
of the eigenvalue distribution of $S$.
For the comparisons of modes and shapes, we retain only heads for
which all three fitted distributions have positive modes.
(This excludes $550$ of the $9{,}512$ bilinear forms because at least
one fitted distribution has mode zero, leaving $8{,}962$ forms.)

\begin{observation}\label{obs:modes}
The fitted mode of $\sigma(T)$ is close
to the geometric mean of the two lobe modes of $S$: the median of
$\tau / \sqrt{|\mu| \nu}$ over the retained heads of each model
lies between $1.01$ and $1.05$.
\end{observation}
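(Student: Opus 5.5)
Since this statement is an empirical observation, not a theorem, the "proof" I would give has two parts: a computational verification of the reported numbers, and a structural heuristic explaining why $\tau\approx\sqrt{|\mu|\nu}$ should hold. The verification is mechanical. For each of the $9{,}512$ heads, form $L_A=K^TQ$ (for the RoPE models, the $d=0$ base form) and split it as $S=\tfrac12(L+L^T)$, $T=\tfrac12(L-L^T)$. Compute the spectrum of $S$, separate the two lobes $\lambda_+(S)$ and $\lambda_-(S)$, and compute the positive imaginary parts $\sigma(T)$. Fit gamma distributions to each of the three samples and record the fitted modes $\nu$, $|\mu|$, $\tau$. Discard the $550$ heads with a zero mode, as announced, and report the per-model median of $\tau/\sqrt{|\mu|\nu}$. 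This step can fail only through fitting artifacts, so I would also rerun it with the lognormal and Weibull fits to check that the conclusion does not depend on the gamma family.

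The substantive part is the heuristic, and here I would start from the rank-one case, where the relation is exact. Take $L=qk^T$. Then $S=\tfrac12(qk^T+kq^T)$ has nonzero eigenvalues $\tfrac12(\langle k,q\rangle\pm\|k\|\|q\|)$, so
\[
\lambda_+\,|\lambda_-|=\tfrac14\bigl(\|k\|^2\|q\|^2-\langle k,q\rangle^2\bigr).
\]
On the other hand, $T=\tfrac12(qk^T-kq^T)$ has eigenvalues $\pm i\,\tfrac12\|k\|\|q\|\sin\theta$, where $\theta$ is the angle between $k$ and $q$. The square of that imaginary part is exactly the right-hand side above. Hence $\sigma(T)=\sqrt{\lambda_+|\lambda_-|}$ on the nose for rank-one forms. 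This connects to Observation~\ref{obs:head-profiles}, which says trained profiles accumulate near the family $\Theta$ of rank-one profiles.

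Next I would try to extend this to $L=\sum_i q_ik_i^T$ under an approximate orthogonality hypothesis: the $2$-planes $\mathrm{span}(q_i,k_i)$ are nearly mutually orthogonal. Then $S$ and $T$ nearly block-diagonalize over these planes. Each block contributes one positive eigenvalue, one negative eigenvalue and one conjugate pair $\pm i\sigma_i$ with $\sigma_i^2=\lambda_{+,i}|\lambda_{-,i}|$. Consequently, the distribution of $\sigma(T)$ is the pushforward of the joint law of $(\lambda_+,|\lambda_-|)$ under the geometric mean. Combined with the approximate proportionality $\lambda_-\approx\rho\lambda_+$ of Observation~\ref{obs:pairing}, this pushforward is roughly a rescaling of a single lobe distribution by $\sqrt{\rho}$. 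For a scale family such as the gamma, rescaling multiplies the mode by the same factor, so
\[
\tau\approx\sqrt{\rho}\,\nu\approx\sqrt{|\mu|\nu}.
\]

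The main obstacle is justifying the near-orthogonal block structure. The planes $\mathrm{span}(q_i,k_i)$ are not canonically defined, because the factorization $K^TQ$ is only determined up to the action of $GL(H)$. I would first choose them via the singular value decomposition of $L$ and then measure empirically how far the blocks are from orthogonal, for instance through the relative Frobenius norm of the off-block entries of $S$ and $T$. The residual $1$--$5\%$ excess in the reported medians should then be attributable to these cross-block perturbations together with the imperfect proportionality of the two lobes.
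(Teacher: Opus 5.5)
Your verification plan is essentially what supports this statement in the paper. The statement is a measurement: gamma fits to the two lobes of $S$ and to $\sigma(T)$, the $550$ heads with a zero fitted mode discarded, and per-model medians of $\tau/\sqrt{|\mu|\nu}$. The only theory the paper offers is the exact rank-one identity you also derive, and the Discussion lists the relation as unexplained. One practical caution: keep exactly the $n$ largest positive and the $n$ most negative eigenvalues of $S$ (Theorem~\ref{thm:heads-have-balanced-attention}), and the $n$ nonzero values of $\sigma(T)$. The $N-2n$ numerically zero eigenvalues carry random signs and would drive the fitted modes to zero.

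Your heuristic, however, has a gap at the point where you invoke Observation~\ref{obs:pairing}. With exactly orthogonal blocks, block $i$ gives $\lambda_{+,i}=\tfrac{h_i}{2}(1+t_i)$, $|\lambda_{-,i}|=\tfrac{h_i}{2}(1-t_i)$ and $\sigma_i=\tfrac{h_i}{2}\sqrt{1-t_i^2}$, where $h_i=\|q_i\|\|k_i\|$ and $t_i=\cos\theta_i$. So $\sigma(T)$ is the geometric mean taken along the \emph{block} pairing of the two lobes. Observation~\ref{obs:pairing} only says that the two lists, each sorted separately, are nearly proportional. Your rescaling step needs $|\lambda_{-,i}|/\lambda_{+,i}=(1-t_i)/(1+t_i)$ to be nearly constant in $i$, i.e.\ nearly equal angles, and that does not follow.

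Every pair $(a,b)$ of positive numbers is realized by a single block ($h=a+b$, $t=(a-b)/(a+b)$). So you may take the $a_i$ and $b_i$ independent as in Theorem~\ref{thm:accumulation}, with $X$ gamma of shape $k$. The sorted lobes then become proportional (Lemma~\ref{lem:proportional-lobes}), yet $\sigma_i=\sqrt{a_ib_i}$ has half the log-variance of a lobe. For large $k$ its fitted shape is about $2k$ and its mode is $\sqrt{uv}\,(k-\tfrac34+O(1/k))$, against $\sqrt{|\mu|\nu}=\sqrt{uv}\,(k-1)$. Mis-pairing alone therefore shifts the ratio by about $\tfrac{1}{4(k-1)}$. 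For moderate $k$ this is the same order as the excess you want to attribute to cross-block terms. This particular model is in turn excluded by Observation~\ref{obs:shapes}, since it would give $k_T\approx 2\sqrt{k_+k_-}$.

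The hypothesis you actually need, and should test directly, is near-equal angles. Under it the relation is exact. Moreover the profile then equals the rank-one profile of Proposition~\ref{prop:rankOneLocus} with $t$ the common cosine, so it lies on $\Theta$ at any effective rank. That, not proximity to rank one, is the real link to Observation~\ref{obs:head-profiles}.

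On the obstacle you did flag: exact block orthogonality is non-generic. It makes $T^2$ scalar on each block and hence forces $[S,T^2]=0$, which fails for generic $K,Q$ once $n\geq 2$. The $GL(H)$ ambiguity, by contrast, is harmless. In any such decomposition the normalized $q_i$ and the normalized $k_i$ form orthonormal families, so the decomposition is a singular value decomposition of $L$. With singular-vector matrices $U,V$, your two hypotheses become checkable as ``$U^TV$ is nearly diagonal, with nearly constant diagonal.''
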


It is interesting to note this relation holds exactly for rank-one forms. 
If $L = h\,uv^T$ for $h>0$ with unit vectors $u, v$ and $t = u^Tv$, 
then $S$ has the nonzero eigenvalues $h(1 \pm t)/2$ and 
$T$ has the single repeated singular value $h\sqrt{1-t^2}/2$, 
which is their geometric mean.
Nonetheless trained bilinear forms are 
far from rank one (cf.~\S\ref{sec:effectiveRank}).

\begin{observation}\label{obs:shapes}
The fitted shape of $\sigma(T)$ is close
to the geometric mean of the two lobe shapes of $S$, with a modest
upward bias: the median of $k_T / \sqrt{k_+ k_-}$ over the retained
heads of each model lies between $1.02$ and $1.15$.
\end{observation}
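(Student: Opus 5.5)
Observation~\ref{obs:shapes} is an empirical statement, so the plan is a measurement protocol together with a heuristic explanation of why the ratio should be near $1$ and biased upward. The protocol reuses the pipeline behind Observation~\ref{obs:modes}. For each of the $9{,}512$ heads I form $L=L_A=K^TQ$, taking the $d=0$ base form for the RoPE models. I split it as $S=\tfrac12(L+L^T)$ and $T=\tfrac12(L-L^T)$.

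By Theorem~\ref{thm:balancedIntro}, $S$ has exactly $n$ positive and $n$ negative eigenvalues, up to numerical zeros. So I can take the lobes $\lambda_+(S)$ and $\lambda_-(S)$ to be the top $n$ eigenvalues on each side, which avoids any thresholding. For $T$, the eigenvalues come in pairs $\pm i\sigma$, and I keep the $\sigma>0$ from each pair. Rescaling each sample to unit mean, as in Figure~\ref{fig:goodness-of-fit}, I fit a gamma distribution by maximum likelihood to $\sigma(T)$, $\lambda_+(S)$ and $\lambda_-(S)$. I then discard heads where any fitted mode $(k-1)/\beta$ is zero, i.e.\ $k\le 1$, which leaves the $8{,}962$ retained heads. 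Finally I compute $k_T/\sqrt{k_+k_-}$ head by head and report the per-model medians, checking that they fall in $[1.02,1.15]$.

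For the heuristic, I model $L\approx\sum_i h_i u_iv_i^T$ with the rank-one pieces nearly mutually orthogonal. The rank-one computation after Observation~\ref{obs:modes} then pairs each $\sigma_i$ with a positive eigenvalue $\lambda_{+,i}$ and a negative one $\lambda_{-,i}$ satisfying $\sigma_i=\sqrt{\lambda_{+,i}\lambda_{-,i}}$.
\begin{itemize}
\item Suppose the two lobes are roughly independent with gamma laws of shapes $k_\pm$.
\item Then $\log\sigma=\tfrac12(\log\lambda_++\log\lambda_-)$ has variance $\tfrac14\bigl(\psi'(k_+)+\psi'(k_-)\bigr)$, where $\psi'$ is the trigamma function.
\item A gamma variable of shape $k$ has $\operatorname{Var}\log=\psi'(k)$, so matching log-variance defines $k_T$ by $\psi'(k_T)=\tfrac14\bigl(\psi'(k_+)+\psi'(k_-)\bigr)$.
\end{itemize}
Using $\psi'(k)\approx 1/k$ for large $k$, this gives $k_T\approx \tfrac{4k_+k_-}{k_++k_-}$, which is twice the harmonic mean. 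That is well above $\sqrt{k_+k_-}$. So the independence model overshoots, and what is needed is a strong positive correlation between the paired lobe eigenvalues. Observation~\ref{obs:pairing} (proportional lobes) supplies exactly that. In the fully correlated limit $\lambda_-=\rho\lambda_+$, the log-variance of $\sigma$ equals $\psi'(k_+)=\psi'(k_-)$, giving ratio exactly $1$. Partial decorrelation between these two extremes should then account for the modest upward bias.

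The main obstacle is that the pairing $\sigma_i\leftrightarrow(\lambda_{+,i},\lambda_{-,i})$ is only heuristic for high-rank forms, which are far from rank one by \S\ref{sec:effectiveRank}. Turning the heuristic into a quantitative statement would require interpolating between the correlated and independent regimes using the measured proportionality error. A second obstacle is the statistical one: shape estimates are unstable for samples of only $n=64$ points. I would address this by bootstrapping the per-head ratio and reporting the stability of the medians, rather than of individual fits.
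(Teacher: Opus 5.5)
Your measurement protocol is the paper's own. The observation is supported only by the experiment you describe: gamma fits to $\sigma(T)$, $\lambda_+(S)$ and $\lambda_-(S)$, retention of the $8{,}962$ heads whose three fitted modes are positive, and per-model medians of $k_T/\sqrt{k_+k_-}$. The paper lists the observation as unexplained.

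One practical fix: truncate $\sigma(T)$ to its top $n$ values, as you do for $S$. In the proof of Theorem~\ref{thm:heads-have-balanced-attention}, replace $B$ by the nondegenerate form $\langle b,a'\rangle-\langle a,b'\rangle$ on $H\oplus H$. The same argument then shows that $T$ has rank exactly $2n$ for a nondegenerate head. The remaining $N-2n$ numerically zero eigenvalues also appear with tiny positive imaginary parts. Keeping all $\sigma>0$ would include them and wreck the fit.

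The heuristic explanation has a genuine gap, at the step where Observation~\ref{obs:pairing} is said to supply the needed correlation.

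\emph{Why Observation~\ref{obs:pairing} cannot supply it.} That observation compares the two lobes after sorting each separately, so it only constrains their marginal laws. Lemma~\ref{lem:proportional-lobes} shows that sorted lobes of \emph{independent} samples already become proportional. So Observation~\ref{obs:pairing} gives no evidence against your independence model, which you yourself showed predicts a ratio near $2$.

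\emph{What your mechanism actually needs.} It needs correlation under the block pairing $\sigma_i\leftrightarrow(\lambda_{+,i},\lambda_{-,i})$, which is generally a different pairing from the sorted one. In your model $\lambda_{-,i}/\lambda_{+,i}=(1-t_i)/(1+t_i)$ with $t_i=\langle u_i,v_i\rangle$. So the requirement is that the angles $t_i$ vary little compared with the scales $h_i$. For instance, suppose $t_i$ is independent of $h_i$ and concentrated near $t_0$. Within your log-variance heuristic, a first-order computation gives $k_T/\sqrt{k_+k_-}\approx 1+\operatorname{Var}(t_i)/\bigl((1-t_0^2)^2\operatorname{Var}\log h_i\bigr)$. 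That is exactly a modest upward bias, but it rests on an input the paper never measures.

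\emph{The block model is also untested.} It requires $S$ and $T$ to share mutually orthogonal invariant $2$-planes. That is a structural condition quite separate from effective rank, so your ``far from rank one'' obstacle misplaces the issue. The sorted correlations of Observation~\ref{obs:sigma-lambda} are again marginal statements and do not test it.

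As written, your heuristic is a hypothesis to test, for example by measuring how far $S$ and $T$ are from sharing invariant planes. It is not yet an explanation supported by the paper's other observations.
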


\begin{observation}\label{obs:sigma-lambda}
The singular values of $T$ and the
absolute values of the eigenvalues of $S$ are strongly correlated
within heads.  Sorting both lists in decreasing order, the median
of their correlation over all heads of each model lies between
$0.986$ and $0.997$.
\end{observation}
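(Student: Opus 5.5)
Since this statement is an empirical observation, not a theorem, the plan is to establish it by a direct computation over all $9{,}512$ heads, with two checks. The first is a random baseline showing that the correlation is not an artifact of sorting. The second is a partial structural explanation drawing on results already in the paper.

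\emph{Computation.} For each head $A$ I form $L=L_A=K^TQ$, using the $d=0$ base form for the RoPE models as in \S\ref{sec:experimentsQK}. I then compute $S=\tfrac12(L+L^T)$ and $T=\tfrac12(L-L^T)$.

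The spectra should be computed in head-space coordinates rather than as $N\times N$ matrices. Write $L=K^TQ$ with $Q,K\in M_{n\times N}(\RR)$, and let $W=\binom{Q}{K}$, viewed as a $2n\times N$ matrix. Then $S=\tfrac12 W^TJW$ and $T=\tfrac12 W^T\Omega W$, where $J=\begin{pmatrix}0&I\\ I&0\end{pmatrix}$ and $\Omega=\begin{pmatrix}0&-I\\ I&0\end{pmatrix}$. The nonzero eigenvalues of $W^TJW$ coincide with those of $J\,WW^T$, which is $2n\times 2n$. The same holds for $\Omega$. By Theorem~\ref{thm:balancedIntro}, both lists therefore have exactly $2n$ generically nonzero entries.

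I record two lists for each head. The first is the absolute values $|\lambda(S)|$ sorted in decreasing order. The second is the singular values of $T$, which are the moduli of its eigenvalues and come in equal pairs, also sorted decreasingly. I use the $2n$ singular values, so that both lists have length $2n$ and no zero-padding is needed. I then compute the Pearson correlation of the two sorted vectors for each head and take the median over the heads of each model.

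\emph{Baseline and structural explanation.} I repeat the computation for Gaussian $Q,K$ with the same $n,N$. This measures how much of the correlation is forced simply by sorting two monotone lists.

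For the explanation I plan to combine three ingredients. The first is Observation~\ref{obs:pairing}, that $\lambda_-\approx\rho\lambda_+$. The second is the rank-one identity noted after Observation~\ref{obs:modes}: the singular value of $T$ is the geometric mean of the two nonzero eigenvalues of $S$. The third is the exact identity $\|S\|^2+\|T\|^2=\|L\|^2$.

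If the $2n$-dimensional operator $J\,WW^T$ decomposes approximately into near-rank-one $2\times2$ blocks, one for each pair of directions $(q_i,k_i)$, then each block contributes one positive eigenvalue, one negative eigenvalue, and a pair of singular values of $T$ of comparable size. This would make the sorted lists nearly proportional, and hence nearly perfectly correlated.

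\emph{Main obstacle.} The hard step is making that heuristic rigorous, or even quantitative. Trained forms have large effective rank, so no exact block decomposition exists. What I would try first is a perturbative bound. Diagonalize $WW^T$, and bound the correlation below in terms of the off-block-diagonal energy of $WW^T$ in the basis that simultaneously block-diagonalizes $J$ and $\Omega$, using Weyl-type eigenvalue perturbation inequalities. Failing that, the statement stands on the empirical computation, with the medians $0.986$ to $0.997$ reported alongside the Gaussian baseline.
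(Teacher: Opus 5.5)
Your plan matches the paper. The observation is purely empirical: it rests only on computing, for each head, the correlation of the decreasingly sorted singular values of $T$ against $|\lambda(S)|$ and taking per-model medians, and the paper lists it as unexplained, so your Gaussian baseline and block-decomposition heuristic go beyond what the paper offers.

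One small correction: Theorem~\ref{thm:balancedIntro} only concerns $S$. That $T=\tfrac12 W^T\Omega W$ also has rank exactly $2n$ follows from the same radical argument, with the nondegenerate form $\Omega$ in place of $J$.
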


\section{Why heads are balanced}\label{sec:balanced}

The most striking observation from the eigenvalue experiments
is that \emph{the symmetric part of every attention head has equally many positive and negative eigenvalues}
(Observation~\ref{obs:balance}).
This is explained by the following theorem.
Let $H$ be a real vector space of dimension $n$
equipped with an inner product $\langle \cdot,\cdot \rangle$.
For linear maps $Q, K \colon C \to H$,
let $A\colon C^\ell \to C^\ell$ denote the length $\ell$ attention head on $C$
constructed from $Q$ and $K$ as query and key maps.
Call the head $A$ \defn{nondegenerate}
if the stacked map
\[
    \binom{Q}{K} \colon C \to H\oplus H,
    \qquad
    x \mapsto (Qx, Kx)
\]
is surjective.

\begin{theorem}
\label{thm:heads-have-balanced-attention}
Let $A$ be a head in an attention block with query and key maps
$Q, K$.
If $A$ is nondegenerate,
then the symmetric bilinear form
\[
    S(x,y)
    = \tfrac12\bigl(\langle Kx, Qy\rangle
    + \langle Qx, Ky\rangle\bigr)
\]
has signature
\[
    (n_+, n_-, n_0) = (n,\, n,\, N - 2n).
\]
\end{theorem}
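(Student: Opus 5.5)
The plan is to factor $S$ through the stacked map and reduce to the hyperbolic form on $H\oplus H$. Write $\Phi=\binom{Q}{K}\colon C\to H\oplus H$, and define on $E=H\oplus H$ the symmetric bilinear form
\[
    h\bigl((q_1,k_1),(q_2,k_2)\bigr)=\tfrac12\bigl(\langle k_1,q_2\rangle+\langle q_1,k_2\rangle\bigr).
\]
By construction $S(x,y)=h(\Phi x,\Phi y)$, so $S=\Phi^\ast h$ is the pullback of $h$ along $\Phi$.

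\textbf{Step 1: $h$ is hyperbolic.} Choose an orthonormal basis $e_1,\dots,e_n$ of $H$ and use the basis $(e_i,0),(0,e_i)$ of $E$. The Gram matrix of $h$ is then
\[
    \tfrac12\begin{pmatrix}0&I\\ I&0\end{pmatrix}.
\]
The vectors $(e_i,\pm e_i)$ diagonalize it with eigenvalues $\pm\tfrac12$. So $h$ is nondegenerate of signature $(n,n,0)$, which is the hyperbolic form recalled after Sylvester's law.

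\textbf{Step 2: pull back along a surjection.} Since $\Phi$ is surjective, $W=\ker\Phi$ has dimension $N-2n$. Choose a complement $U$ with $C=U\oplus W$. Then $\Phi|_U\colon U\to E$ is an isomorphism.

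First, $W$ lies in the radical of $S$, because $S(w,y)=h(0,\Phi y)=0$. Conversely, if $x$ is in the radical of $S$, then $h(\Phi x,\Phi y)=0$ for all $y$. Surjectivity of $\Phi$ and nondegeneracy of $h$ give $\Phi x=0$. Hence the radical of $S$ is exactly $W$, and $n_0=N-2n$.

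Second, $S|_U$ is isomorphic to $h$ via $\Phi|_U$, so it has signature $(n,n,0)$. In a basis adapted to $U\oplus W$ the Gram matrix of $S$ is block diagonal, $\operatorname{diag}(S|_U,0)$. Its positive and negative counts are therefore those of $S|_U$, giving $(n_+,n_-,n_0)=(n,n,N-2n)$ by Sylvester's law.

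\textbf{Main obstacle.} The only delicate point is making sure the pullback preserves the counts $n_\pm$ and not just the rank. I handle this with the splitting $C=U\oplus W$ and block-diagonal form rather than by comparing eigenvalues directly, since Sylvester's law makes the signature basis-independent.

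The Zariski-density clause of Theorem~\ref{thm:balancedIntro} then follows separately. When $2n\le N$, surjectivity of $\Phi$ means some $2n\times 2n$ minor of the stacked $2n\times N$ matrix is nonzero. This is the complement of a proper algebraic subset of the space of pairs $(Q,K)$, so it is Zariski-dense.
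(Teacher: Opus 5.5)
Your proof is correct, but it reaches the balance $n_+=n_-$ by a different route than the paper.

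Both arguments share the key computation that the radical of $S$ is exactly $\ker\Phi$, using nondegeneracy of the form on $H\oplus H$ and surjectivity. The difference is what comes next.

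\textbf{Your route.} You use the \emph{signature} of $h$. You transport $(n,n,0)$ to a complement $U$ of the radical via the isomorphism $\Phi|_U$, which gives $S\cong h\oplus 0$ directly.

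\textbf{The paper's route.} It never uses the signature of $B$, only its nondegeneracy, and that only supplies $\operatorname{rank} S=2n$. The balance comes from Lemma~\ref{lem:rank-balance}. Since $K^TQ$ has rank at most $n$, the quadratic form of its symmetric part vanishes on $\ker K^TQ$, a subspace of dimension at least $N-n$. Hence $n_\pm\le n$, and together with $n_++n_-=2n$ this forces $n_+=n_-=n$.

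\textbf{What each buys.} Your argument is more self-contained and structural. It exhibits the decomposition ``hyperbolic plus zero form'' from the abstract explicitly, and it is close in spirit to the congruence argument the paper later uses in Lemma~\ref{lem:gram-form} ($G^{1/2}JG^{1/2}$ congruent to $J$). The paper's argument separates the roles of the two hypotheses. Low rank alone caps each of $n_\pm$ at $n$, which holds for any rank-$\le n$ matrix, while nondegeneracy only has to supply the rank; this is the point of the remark following Lemma~\ref{lem:rank-balance}.

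Your Zariski-density argument for the introductory clause is also fine. The set of surjective stacked maps is the complement of the common zero locus of the $2n\times 2n$ minors, which is proper when $2n\le N$. The paper states that clause without a separate proof.
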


To prove the theorem we start with an observation
about symmetric parts of low-rank matrices.

\begin{lemma}\label{lem:rank-balance}
Let $L \in M_N(\RR)$ with $\operatorname{rank} L \leq n$, and let
$(n_+, n_-, n_0)$ be the signature of its symmetric part
$S = \tfrac12(L + L^T)$.
Then $n_+ \leq n$ and $n_- \leq n$.
\end{lemma}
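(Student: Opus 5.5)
The plan is to bound $n_+$ by the dimension of a subspace on which $S$ is positive definite, and then show that such a subspace must meet $\ker L$ nontrivially as soon as its dimension exceeds $n$.

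Let $V \subset \RR^N$ be a subspace on which $S$ is positive definite. Equivalently, $x^T S x > 0$ for every nonzero $x \in V$. By Sylvester's law (or the spectral theorem), $n_+$ is the maximal dimension of such a subspace; for instance, the span of the eigenvectors of $S$ with positive eigenvalues is one. So it suffices to show that every such $V$ has $\dim V \le n$.

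Observe that $x^T S x = \tfrac12(x^T L x + x^T L^T x) = x^T L x$ for every $x$. Now suppose $\dim V > n$. Since $\operatorname{rank} L \le n$, we have $\dim \ker L \ge N - n$. Hence
\[
    \dim (V \cap \ker L) \ge \dim V + (N-n) - N > 0.
\]
Choose a nonzero $x \in V \cap \ker L$. Then $x^T S x = x^T (Lx) = 0$, contradicting the positive definiteness of $S$ on $V$. Therefore $n_+ \le n$.

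For the negative side, apply the same argument to $-L$, which has the same rank and symmetric part $-S$; this gives $n_- \le n$.

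There is no real obstacle here. The only point to be careful about is the identity $x^T S x = x^T L x$ and the variational characterization of $n_+$ as the maximal dimension of a positive definite subspace. Either one can be quoted from Sylvester's law of inertia as recalled in \S\ref{subsec:bilinear-forms}, or checked directly by restricting $S$ to the span of its positive eigenvectors.
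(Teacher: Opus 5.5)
Your proof is correct and is essentially the paper's argument: the quadratic form of $S$ vanishes on $\ker L$, which has dimension at least $N-n$, so $\ker L$ meets the span of the positive eigenvectors of $S$ only in $0$, forcing $n_+ \le n$, and similarly $n_- \le n$. The differences are cosmetic: you phrase it for an arbitrary subspace on which $S$ is positive definite and handle $n_-$ by passing to $-L$, whereas the paper works directly with the eigenvector spans for both signs.
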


\begin{proof}
Let $K = \ker L$, of dimension at least $N - n$.
For $x \in K$,
$2\, x^T S x = x^T L x + x^T L^T x = 0$, so the quadratic form of
$S$ vanishes on $K$.
Let $V_+$ be the span of the eigenvectors of $S$ with positive
eigenvalues, of dimension $n_+$.
Then $V_+ \cap K = 0$: a nonzero vector $x$ of the intersection
would satisfy both $x^T S x > 0$ and $x^T S x = 0$.
Hence $n_+ + (N - n) \leq N$, that is, $n_+ \leq n$; the same
argument with the negative eigenvectors gives $n_- \leq n$.
\end{proof}

\begin{remark}
Low rank alone does not force balance: a positive
semidefinite symmetric $L$ of rank $n$ has $S = L$ of signature
$(n, 0, N - n)$.
The nondegeneracy hypothesis of
Theorem~\ref{thm:heads-have-balanced-attention} is what forces
$S$ to have rank $2n$.
\end{remark}

\begin{proof}[Proof of Theorem~\ref{thm:heads-have-balanced-attention}]
The Gram matrix of $2S$ is $K^T Q + Q^T K$, the symmetric
part of $2\,K^T Q$; since $K^T Q$ factors through $H$, it
has rank at most $n$.
By Lemma~\ref{lem:rank-balance} it therefore suffices to
show that $S$ has rank $2n$.
Let $E = H \oplus H$ with the symmetric bilinear form
$B\bigl((a,b),(a',b')\bigr) = \langle a, b'\rangle
+ \langle b, a'\rangle$, which is nondegenerate, and let
$J \colon C \to E$ be the stacked map, so that
$B(Jx, Jy) = 2 S(x, y)$.
If $Jx = 0$ then $S(x, \cdot) = 0$; conversely, if
$S(x, \cdot) = 0$ then $B(Jx, \cdot)$ vanishes on $JC = E$, so
$Jx = 0$ by nondegeneracy of $B$.
The radical of $S$ is therefore $\ker J$, of dimension $N - 2n$ by
surjectivity, and $S$ has rank $2n$.
\end{proof}

\subsubsection{Reconciling Migliarini's count with Theorem~\ref{thm:heads-have-balanced-attention}}
Migliarini~\cite{migliarini2025selfhating}
studies one of the heads in \texttt{gpt2},
reaching a conclusion
which is apparently in conflict with our Theorem~\ref{thm:heads-have-balanced-attention}.
The matrix Migliarini
calls $\mathtt{W\_QK}$ is the operator
\[
    W_{QK} := Q^T\,K \in \End(C).
\]
Its symmetrization, which Migliarini calls
$\mathtt{W\_sym}$, is the matrix
\[
    W_{\mathrm{sym}}
    := \tfrac12\bigl(W_{QK}+W_{QK}^T\bigr).
\]
Migliarini reports that ``$33$ of $64$ eigenvalues'' of
$W_{\mathrm{sym}}$ are negative.
In fact Theorem~\ref{thm:heads-have-balanced-attention}
implies that $W_{\mathrm{sym}}$ has
precisely $64$ positive and $64$ negative eigenvalues.
Migliarini's count is actually the negative count
among the subset of the largest $64$ eigenvalues ordered by magnitude.

\section{Trained bilinear forms in attention heads}\label{sec:mechinterp}

In this section we attempt to understand what types of bilinear forms
arise in trained language models.
A different interpretation based on moments
is described in Appendix~\ref{app:moments}.
We introduce a map on real bilinear forms valued in the $3$-simplex:
\[
    \pi \colon M_N(\RR) \smallsetminus \{0\} \longrightarrow \Delta_3,
    \qquad
    L \longmapsto (a, b, c, d).
\]
The map $\pi$ is invariant under positive scaling and orthogonal
congruence $L \mapsto Q^T L Q$.

\subsection{The profile of a real bilinear form}
\label{subsec:simplex}

Let $L \in M_N(\RR)$ with $\|L\| = 1$, and write $L = S + T$ with
$S$ symmetric and $T$ antisymmetric.
Let $\lambda_+$ (resp.\ $\lambda_-$) list the positive eigenvalues (resp.\
the absolute values of the negative eigenvalues) of $S$ in
decreasing order, appending zeros so that the two lists have equal
length.
For a real vector $x \in \RR^m$
let $x_+ = \max(x,0)$ and $x_- = \max(-x,0)$
applied entry-wise.

\begin{definition}\label{defn:profileMap}
The \defn{profile} of $L$ is $\pi(L)=(a,b,c,d)$, where
\[
    a = \|T\|^2,
    \qquad
    b = 2\,\langle \lambda_+, \lambda_- \rangle,
    \qquad
    c = \|(\lambda_+ - \lambda_-)_+\|^2,
    \qquad
    d = \|(\lambda_+ - \lambda_-)_-\|^2.
\]
For nonzero $L \in M_N(\RR)$ we set $\pi(L) = \pi(L/\|L\|)$.
\end{definition}

The components of the profile are nonnegative and satisfy
$a + b + c + d = 1$.
The facet $\{a = 0\}$ is the triangle
    $\Delta = \{(b, c, d) \in \RR^2 : b, c, d \geq 0,\ b + c + d = 1\}$
containing the profiles of symmetric forms.

\begin{theorem}\label{thm:simplex-faces}\,
\begin{enumerate}
    \item $a = 0$ if and only if $L$ is symmetric, and $a = 1$ if
    and only if $L$ is antisymmetric;
    \item $b = 0$ if and only if $S$ is semidefinite, and $b = 1$
    if and only if $L$ is symmetric with $\lambda_+ = \lambda_-$;
    \item $d = 0$ if and only if $S = H + P$ for symmetric matrices
    $H$ and $P$ such that the spectrum of $H$ is symmetric about
    zero and $P$ is positive semidefinite; equivalently,
    $(\lambda_+)_j \geq (\lambda_-)_j$ for every $j$.
    In that case $H$ and $P$ may be chosen to commute with $S$ and
    with each other, with $\|P\|^2 = c\,\|L\|^2$.
    Moreover, $d = 1$ if and only if $L$ is symmetric negative
    semidefinite;
    \item $c = 0$ if and only if $S = H - P$ with $H$ as in (3) and
    $P$ positive semidefinite; equivalently,
    $(\lambda_-)_j \geq (\lambda_+)_j$ for every $j$; and then
    $\|P\|^2 = d\,\|L\|^2$.
    Moreover, $c = 1$ if and only if $L$ is symmetric positive
    semidefinite.
\end{enumerate}
\end{theorem}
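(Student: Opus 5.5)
Everything reduces to the identities
\[
\|L\|^2=\|S\|^2+\|T\|^2,\qquad \|S\|^2=\|\lambda_+\|^2+\|\lambda_-\|^2 .
\]
The first holds because symmetric and antisymmetric matrices are Frobenius-orthogonal. Normalize $\|L\|=1$ and set $w=\lambda_+-\lambda_-$. Then
\[
c+d=\|w_+\|^2+\|w_-\|^2=\|w\|^2=\|\lambda_+\|^2+\|\lambda_-\|^2-2\langle\lambda_+,\lambda_-\rangle=\|S\|^2-b,
\]
which gives $a+b+c+d=1$. Nonnegativity is clear, since the entries of $\lambda_\pm$ are nonnegative. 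Each item of the theorem is then read off from these formulas, working coordinate by coordinate in the sorted lists.

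\emph{Item (1).} $a=\|T\|^2=0$ iff $T=0$, i.e.\ $L$ is symmetric. Also $a=1$ iff $\|S\|=0$, i.e.\ $L$ is antisymmetric.

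\emph{Item (2).} $b=2\sum_j(\lambda_+)_j(\lambda_-)_j$ is a sum of nonnegative terms. Both lists are sorted decreasingly and padded with zeros, so if both $\lambda_+\ne0$ and $\lambda_-\ne0$ then their first entries are both positive and $b>0$. Hence $b=0$ iff one list vanishes, i.e.\ $S$ is semidefinite. For the other end, $b\le\|\lambda_+\|^2+\|\lambda_-\|^2=\|S\|^2\le1$. Equality in AM--GM forces $\lambda_+=\lambda_-$, and $\|S\|=1$ forces $T=0$; the converse is immediate.

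\emph{Items (3) and (4).} We have $d=0$ iff $w_-=0$, i.e.\ $(\lambda_+)_j\ge(\lambda_-)_j$ for all $j$. To build the decomposition, diagonalize $S$ orthogonally. Pair the $j$-th positive eigenvector $e_j^+$ with the $j$-th negative one $e_j^-$; when $(\lambda_-)_j=0$, take $e_j^-$ to be a new unit vector from the kernel of $S$, which is needed only in the degenerate situation. Set
\[
H=\sum_j(\lambda_-)_j\bigl(e_j^+e_j^{+T}-e_j^-e_j^{-T}\bigr),\qquad P=\sum_j\bigl((\lambda_+)_j-(\lambda_-)_j\bigr)e_j^+e_j^{+T}.
\]
Then $H$ has spectrum symmetric about zero, $P\succeq0$, and all three matrices are simultaneously diagonal, so they commute. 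Moreover $\|P\|^2=\|w_+\|^2=c$.

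\textbf{Main obstacle.} The converse of (3) is the hard step: if $S=H+P$ with $H$ spectrally symmetric and $P\succeq0$, not assumed to commute, show $\lambda_+\ge\lambda_-$ entrywise. Weyl's inequalities give $\lambda_j^{\downarrow}(S)\ge\lambda_j^{\downarrow}(H)$ for the decreasingly ordered eigenvalues, since $P\succeq0$. I would apply this to the top $j$ eigenvalues, which gives $(\lambda_+)_j\ge h_j$, where $h_j$ is the $j$-th largest positive eigenvalue of $H$. I would also apply it to the bottom eigenvalues, which gives $-(\lambda_-)_j\ge-h_j$, i.e.\ $(\lambda_-)_j\le h_j$. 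These combine to $(\lambda_+)_j\ge h_j\ge(\lambda_-)_j$, using that the spectrum of $H$ is symmetric. The zero-padding bookkeeping, particularly when the list lengths differ, needs care here.

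For the endpoint of (3): $d=1$ forces $a=b=c=0$, so $L$ is symmetric with $\lambda_+=0$ (since $\lambda_+\ge0$ and $w_+=0$ while $b=0$). Hence $L$ is negative semidefinite, and the converse is direct. Item (4) follows by applying (3) to $-L$, which swaps $\lambda_+\leftrightarrow\lambda_-$ and $c\leftrightarrow d$.
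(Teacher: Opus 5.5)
Your proposal is correct and follows essentially the same route as the paper: the same energy partition, the same eigenvector construction of $H$ and $P$, the same monotonicity $\lambda_k(S)\ge\mu_k$ for the converse of (3), and the reduction of (4) to (3) via $-L$. You obtain that monotonicity from Weyl rather than Courant--Fischer and chain it at indices $j$ and $N+1-j$ into $(\lambda_+)_j\ge h_j\ge(\lambda_-)_j$, where the paper runs a threshold-counting contradiction. The one blemish is the auxiliary kernel vector $e_j^-$ when $(\lambda_-)_j=0$: it is unnecessary, since those terms have coefficient zero and can simply be dropped as the paper does, and it need not exist, e.g.\ when $S$ is positive definite.
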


\begin{proof}
See Appendix~\ref{app:bimodal}.
\end{proof}

\begin{figure}[!htbp]
\centering
\includegraphics[width=\textwidth]{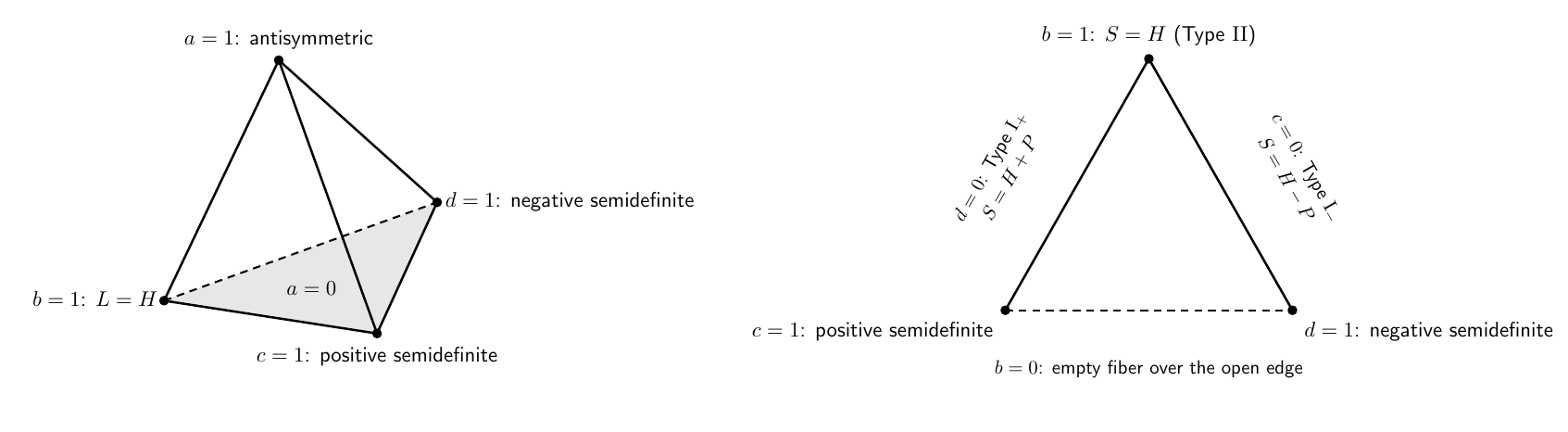}
\caption{The simplex $\Delta_3$ of profiles of
bilinear forms (left) and its face $\Delta$ of symmetric forms
(right). Here $H$ has spectrum symmetric about zero and $P$ is positive
semidefinite. The fiber over the dashed edge
$\{b = 0\}$ is empty away from its endpoints.}
\label{fig:simplex-faces}
\end{figure}

\begin{table}[!htbp]
\centering
\begin{tabular}{lll}
\toprule
& face & fiber of $\pi$ \\
\midrule
vertex & $a = 1$ & antisymmetric forms \\
vertex & $b = 1$ & symmetric forms with $\lambda_+ = \lambda_-$ \\
vertex & $c = 1$ & positive semidefinite forms \\
vertex & $d = 1$ & negative semidefinite forms \\
edge & $\{c = d = 0\}$ & forms with $\lambda_+ = \lambda_-$ \\
edge & $\{b = d = 0\}$ & forms with $S$ positive semidefinite \\
edge & $\{b = c = 0\}$ & forms with $S$ negative semidefinite \\
edge & $\{a = d = 0\}$ & symmetric forms $H + P$ \\
edge & $\{a = c = 0\}$ & symmetric forms $H - P$ \\
edge & $\{a = b = 0\}$ & empty away from the endpoints \\
facet & $\Delta=\{a = 0\}$ & symmetric forms \\
facet & $\{b = 0\}$ & forms with $S$ semidefinite;
relative interior empty \\
facet & $\{d = 0\}$ & forms with $S = H + P$ \\
facet & $\{c = 0\}$ & forms with $S = H - P$ \\
interior & & $T \neq 0$ and $S$ of neither form
$H + P$ nor $H - P$ \\
\bottomrule
\end{tabular}
\caption{The fibers of $\pi$ over the faces of
$\Delta_3$; $H$ denotes a symmetric matrix with spectrum symmetric
about zero and $P$ a positive semidefinite matrix.}
\label{tab:simplex-fibers}
\end{table}

\begin{lemma}\label{lem:profile-scale}
Let $L\in M_N(\RR)$ with symmetric part $S \neq 0$, and write
$\pi(L) = (a, b, c, d)$ and $\pi(S) = (0, b_S, c_S, d_S)$.
Then
    $(b, c, d) = (1 - a)\,(b_S, c_S, d_S)$.
\end{lemma}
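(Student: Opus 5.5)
The plan is to compare the two normalizations directly. The symmetric part $S$ of $L$ and the symmetric part of $S$ itself are the same matrix. The profile of $L$ is computed after dividing by $\|L\|$, while the profile of $S$ is computed after dividing by $\|S\|$. The point is that $b$, $c$, $d$ are all homogeneous of degree two in the eigenvalue lists $\lambda_\pm$.

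First, write $\hat L = L/\|L\|$. Its symmetric part is $S/\|L\|$ and its antisymmetric part is $T/\|L\|$. By Definition~\ref{defn:profileMap}, $a = \|T\|^2/\|L\|^2$, and the eigenvalue lists for $\hat L$ are $\lambda_\pm(S)/\|L\|$, where $\lambda_\pm(S)$ are the lists attached to $S$ itself, padded with zeros to a common length.

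Second, I will use orthogonality of symmetric and antisymmetric matrices in the Frobenius inner product, since $\tr(S^T T) = 0$. This gives $\|L\|^2 = \|S\|^2 + \|T\|^2$, so
\[
1 - a = \frac{\|S\|^2}{\|L\|^2}.
\]
This quantity is positive because $S \neq 0$, which is also why $\pi(S)$ is defined.

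Third, I compare the three components. Each of $b$, $c$, $d$ is a quadratic expression in the eigenvalue lists: $2\langle\lambda_+,\lambda_-\rangle$, and the squared norms of the positive and negative parts of $\lambda_+-\lambda_-$. The maps $x\mapsto x_\pm$ commute with positive scaling. Hence scaling the lists by $t>0$ multiplies each of $b$, $c$, $d$ by $t^2$. For $\hat L$ the lists are $\lambda_\pm(S)/\|L\|$, and for $\pi(S)$ they are $\lambda_\pm(S)/\|S\|$. These differ by the factor $t = \|S\|/\|L\|$. Therefore
\[
(b,c,d) = \frac{\|S\|^2}{\|L\|^2}\,(b_S,c_S,d_S) = (1-a)(b_S,c_S,d_S).
\]

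I do not expect a real obstacle here. The only points needing care are, first, that padding with zeros is the same in both computations, since the eigenvalue lists are proportional; and second, the cross-term vanishing that gives $\|L\|^2 = \|S\|^2+\|T\|^2$. As a consistency check, summing the components gives $b+c+d = (1-a)(b_S+c_S+d_S) = 1-a$, which agrees with $a+b+c+d=1$.
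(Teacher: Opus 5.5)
Your proposal is correct and follows the paper's own argument: $b,c,d$ are computed from the eigenvalues of $S$ divided by $\|L\|^2$, while $b_S,c_S,d_S$ are the same quantities divided by $\|S\|^2$. The factor between them is $\|S\|^2/\|L\|^2 = 1-a$, by the Frobenius orthogonality $\|L\|^2=\|S\|^2+\|T\|^2$.
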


\begin{proof}
The three quantities $b, c, d$ are computed from the eigenvalues
of $S$ and divided by $\|L\|^2$, whereas $b_S, c_S, d_S$ are the
same quantities divided by $\|S\|^2$.
The claim follows from
$\|S\|^2 / \|L\|^2 = 1 - \|T\|^2/\|L\|^2 = 1 - a$.
\end{proof}

\begin{proposition}\label{prop:rankOneLocus}
Let $L$ be a rank-one matrix of unit norm.
Then
$$
\pi(L) = \left( \frac{1-t^2}{2}, \frac{1-t^2}{2}, t^2_+, t^2_- \right),
\qquad \text{where } t = \tr(L).
$$
The set of profiles of rank-one matrices is equal to
$\Theta = \{a = b, \,cd = 0\} \subset \Delta_3$.
\end{proposition}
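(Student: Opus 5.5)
Write $L = h\,uv^T$ with $h>0$ and $u,v$ unit vectors; this is always possible for a rank-one matrix, by absorbing signs into $u$. Since $\|L\| = h\|u\|\|v\| = h$, unit norm forces $h=1$, so $L = uv^T$ and $t = \tr(L) = u^Tv \in [-1,1]$. The plan is to compute the profile directly from the spectra of $S$ and $T$, using the computation already recorded after Observation~\ref{obs:modes}.

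First, $S = \tfrac12(uv^T+vu^T)$ and $T = \tfrac12(uv^T - vu^T)$ both act on $\operatorname{span}(u,v)$ and vanish on its orthogonal complement.
\begin{itemize}
\item If $u = \pm v$, then $T=0$ and $S = \pm uu^T$. The profile is $(0,0,1,0)$ or $(0,0,0,1)$, which matches the formula at $t=\pm1$.
\item Otherwise, evaluate $S$ on the orthogonal basis $u+v$, $u-v$. This gives eigenvalues $(t+1)/2 \geq 0$ and $(t-1)/2 \leq 0$, so $\lambda_+ = ((1+t)/2)$ and $\lambda_- = ((1-t)/2)$.
\end{itemize}
From these values:
\begin{itemize}
\item $b = 2\cdot\tfrac{(1+t)(1-t)}{4} = \tfrac{1-t^2}{2}$.
\item $\lambda_+-\lambda_- = t$, so $c = t_+^2$ and $d = t_-^2$.
\item $a = \|T\|^2 = 1 - \|S\|^2 = 1 - \tfrac{(1+t)^2 + (1-t)^2}{4} = \tfrac{1-t^2}{2}$, using $\|L\|^2 = \|S\|^2 + \|T\|^2$ (orthogonality of the symmetric and antisymmetric parts).
\end{itemize}
Alternatively, compute $\|T\|^2 = \tfrac12(1 - t^2)$ directly from $\tr(T^TT)$.

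For the second claim, the profile formula shows $a=b$, and $cd = t_+^2t_-^2 = 0$, so the image lies in $\Theta$. For the converse, fix a point $(a,a,c,d)\in\Theta$. Since $2a+c+d=1$ and $cd=0$, exactly one of $c,d$ can be nonzero. Set $t = \sqrt{c}$ if $d=0$ and $t=-\sqrt{d}$ otherwise; then $t^2 = c+d = 1-2a$, which is consistent. Then choose unit vectors $u,v$ with $u^Tv = t$, which exist in $\RR^N$ for $N\geq 2$, or with $u=\pm v$ when $|t|=1$. The matrix $uv^T$ has this profile.

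No step is a real obstacle. The only care needed is the degenerate case $u=\pm v$, where the basis $u\pm v$ collapses, and the implicit assumption $N\geq 2$ for surjectivity onto $\Theta$ (when $N=1$ only the endpoints occur).
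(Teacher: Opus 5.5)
Your proof is correct and follows essentially the paper's route. The paper's orthonormal pair $e,f$ is your $u+v$, $v-u$ normalized, and both arguments read off the eigenvalues $\tfrac{1+t}{2}$ and $\tfrac{t-1}{2}$ of $S$ and the value $\|T\|^2=\tfrac{1-t^2}{2}$ on $\operatorname{span}(u,v)$, treating $u=\pm v$ separately.

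You go slightly further by writing out the surjectivity onto $\Theta$, which the paper only says ``follows easily,'' and your remark that this needs $N\geq 2$ is a fair caveat. One wording fix: change ``exactly one of $c,d$'' to ``at most one,'' since $c=d=0$ (the midpoint, $t=0$) is allowed and your case split already handles it.
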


\begin{proof}
Write $L = uv^T$ for unit vectors $u,v$.
If $u$ and $v$ are orthogonal then $L$ is nilpotent and
has no nonzero eigenvalues,
and otherwise the eigenspace of the nonzero eigenvalue is spanned by $u$
and the nonzero eigenvalue is $\langle u,v \rangle$.
Thus $t = \langle u,v \rangle$.

We bring $u$ to the first standard basis vector
by an orthogonal change of coordinates.
If $u = \pm v$ then the top-left entry of $L$ is $\pm1$ with all other entries zero.
If $u = v$ then $\lambda_+ = (1,0,\ldots,0)$, $\lambda_- = 0$, and
$\pi(L) = (0,0,1,0)$.
If $u= -v$ then $\lambda_+ = 0$, $\lambda_- = (1,0,\ldots,0)$, and
$\pi(L) = (0,0,0,1)$.
This verifies the formula when $t = \pm 1$.

Suppose $|t|<1$.
Define the vectors
$$
e = \frac{u+v}{\sqrt{2+2t}},
\qquad
f = \frac{v-u}{\sqrt{2-2t}}.
$$
These form an orthonormal set.
We may extend this to an orthonormal basis in which $L$
has top-left block equal to
$$
\frac12
\begin{pmatrix}
1+t & \sqrt{1-t^2}   \\
-\sqrt{1-t^2} & t-1  \\
\end{pmatrix}
$$
and elsewhere is zero.
This block is equal to
$$
S+T=
\frac12
\begin{pmatrix}
1+t & 0   \\
0 & t-1  \\
\end{pmatrix}
+
\frac12
\begin{pmatrix}
0& \sqrt{1-t^2}   \\
-\sqrt{1-t^2} & 0 \\
\end{pmatrix}.
$$
From this we find $\tr(T^TT)=\tfrac12(1-t^2)$,
$\lambda_+ = \frac12(1+t,0)$,
$\lambda_- = \frac12(1-t,0)$,
and the rest follows easily.
\end{proof}

\subsection{Profiles of trained attention heads}
\label{subsec:head-simplex}

In this section we study the attention heads of the fourteen models
with the profile map.
To attempt a partial explanation for Observation~\ref{obs:head-profiles},
we consider an ensemble of large symmetric matrices
whose positive and negative eigenvalue distributions have the same
shape up to rescaling.

\begin{theorem}\label{thm:accumulation}
Let $X$ be a positive random variable with finite,
nonzero second moment.
Fix $u,v>0$ and set $\rho=u/v$.
For each $n$ let $N_n \geq 2n$ and let $S_n \in M_{N_n}(\RR)$ be a
symmetric matrix with positive eigenvalues $vX_1,\ldots,vX_n$
and negative eigenvalues $-uX'_1,\ldots,-uX'_n$, where the $X_i$ and
$X'_i$ are independent copies of $X$.
Then the profile $\pi(S_n)$ converges almost surely to
\begin{equation}\label{eqn:limitProfiles}
    \frac{1}{1 + \rho^2}
    \bigl( 0,\; 2\rho,\; (1 - \rho)_+^2,\; (\rho - 1)_+^2 \bigr) \in p(\Theta).
\end{equation}
As $\rho$ ranges over $(0, 1]$ the limit traces the edge
$\{d = 0\}$ of $\Delta$ from $(0,1,0)$ to $(1,0,0)$, and as $\rho$
ranges over $[1, \infty)$ it traces the edge $\{c = 0\}$ from
$(1,0,0)$ to $(0,0,1)$.
\end{theorem}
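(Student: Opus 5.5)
Since $S_n$ is symmetric, $a=0$, and $\pi(S_n)$ depends only on the sorted eigenvalue lists. Let $X_{(1)}\ge\cdots\ge X_{(n)}$ and $X'_{(1)}\ge\cdots\ge X'_{(n)}$ be the order statistics of the two samples. Then $\lambda_+ = v(X_{(i)})_i$ and $\lambda_- = u(X'_{(i)})_i$; the zero eigenvalues only pad both lists with zeros and change nothing. Write $m_2=\EE X^2$. The strong law of large numbers gives
\[
\tfrac1n\|S_n\|^2=\tfrac1n\bigl(v^2\textstyle\sum X_i^2+u^2\sum X_i'^2\bigr)\to (u^2+v^2)m_2
\]
almost surely. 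The whole problem therefore reduces to computing the almost sure limits of the three normalized sums
\[
\tfrac1n\langle\lambda_+,\lambda_-\rangle=\tfrac{uv}{n}\textstyle\sum_i X_{(i)}X'_{(i)},\qquad \tfrac1n\|(\lambda_+-\lambda_-)_\pm\|^2 .
\]
I will show all of them behave as if $X_{(i)}=X'_{(i)}$.

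The key step, and the main obstacle, is the coupling estimate
\[
\tfrac1n\textstyle\sum_i (X_{(i)}-X'_{(i)})^2\to0 \quad\text{a.s.}
\]
This is the squared $W_2$ distance between the two empirical measures $\mu_n,\mu'_n$, because for measures on $\RR$ the monotone (sorted) coupling is optimal. By Glivenko--Cantelli, both $\mu_n$ and $\mu'_n$ converge weakly to the law of $X$ almost surely. By the strong law, their second moments converge to $m_2$. Weak convergence together with convergence of second moments implies $W_2$ convergence, so $W_2(\mu_n,\operatorname{law}X)\to0$ and $W_2(\mu'_n,\operatorname{law}X)\to0$. The triangle inequality then gives $W_2(\mu_n,\mu'_n)\to0$. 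This is exactly where the finite second moment hypothesis is used. One could instead argue by hand with quantile functions and truncation, but the $W_2$ route is cleanest.

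Given this estimate, set $D_i = X'_{(i)}-X_{(i)}$. Cauchy--Schwarz gives
\[
\bigl|\tfrac1n\textstyle\sum X_{(i)}X'_{(i)}-\tfrac1n\sum X_{(i)}^2\bigr|\le \bigl(\tfrac1n\sum X_{(i)}^2\bigr)^{1/2}\bigl(\tfrac1n\sum D_i^2\bigr)^{1/2}\to0,
\]
so $\tfrac1n\langle\lambda_+,\lambda_-\rangle\to uv\,m_2$. Next, $\lambda_+-\lambda_-=(v-u)X_{(i)}-uD_i$. The maps $x\mapsto x_\pm$ are $1$-Lipschitz, so the vector $\bigl((v-u)X_{(i)}\bigr)_\pm$ differs from $(\lambda_+-\lambda_-)_\pm$ by a vector of norm at most $u\|D\|$. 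Hence
\[
\tfrac1n\|(\lambda_+-\lambda_-)_+\|^2\to (v-u)_+^2m_2,\qquad \tfrac1n\|(\lambda_+-\lambda_-)_-\|^2\to (u-v)_+^2m_2 .
\]
Here I use the difference of squares $\bigl|\|x\|^2-\|y\|^2\bigr|\le\|x-y\|(\|x\|+\|y\|)$ after dividing by $n$.

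Dividing by $(u^2+v^2)m_2$, which is nonzero by hypothesis, and then dividing numerator and denominator by $v^2$ gives
\[
\bigl(0,\,2\rho,\,(1-\rho)_+^2,\,(\rho-1)_+^2\bigr)/(1+\rho^2),
\]
which is the claimed limit. It lies in $p(\Theta)$: taking $t=(1-\rho)/(1+\rho)$ in Proposition~\ref{prop:rankOneLocus} gives $(1-t^2)=4\rho/(1+\rho)^2$. Equivalently, one checks directly that the sum of coordinates is $1$ and that $cd=0$.

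For the tracing statement, when $\rho\in(0,1]$ we have $d=0$. The map $\rho\mapsto c=(1-\rho)^2/(1+\rho^2)$ is continuous and monotone, with $c\to1$ as $\rho\to0$ and $c=0$ at $\rho=1$. So the limit runs along $\{d=0\}$ from near $(0,1,0)$, the $c$-vertex in the paper's coordinates, to $(1,0,0)$, the $b$-vertex. The case $\rho\ge1$ is symmetric under $\rho\mapsto1/\rho$, which swaps $c$ and $d$.
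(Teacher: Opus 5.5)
Your proof is correct and follows the paper's route. Your $W_2$ coupling estimate is exactly Lemma~\ref{lem:proportional-lobes}, which the paper proves by hand using quantile step functions $h_n$, pointwise convergence at continuity points of $G$, the strong law for second moments, and Fatou; this is the one-dimensional proof that weak convergence plus convergence of second moments gives $W_2$ convergence. Your perturbation step via Cauchy--Schwarz and the $1$-Lipschitz maps $x\mapsto x_\pm$ is the same as the paper's, except that you normalize by $n$ rather than by $\|\alpha_n\|^2$.

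There is one slip. The value $t=(1-\rho)/(1+\rho)$ is the right parameter for the profile $\pi(S)$ of the symmetric part of a rank-one $L$ (the radial rescaling of Lemma~\ref{lem:profile-scale}), not for $p(\pi(L))$, where it gives $b=1-t^2=4\rho/(1+\rho)^2\neq 2\rho/(1+\rho^2)$. Take $t=(1-\rho)/\sqrt{1+\rho^2}$ instead, or rely on your direct check, which suffices because $p(\Theta)=\Delta\cap\{cd=0\}$.
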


For exactly proportional lobes, $\lambda_- = \rho\,\lambda_+$, direct
substitution in the profile definition gives
\eqref{eqn:limitProfiles}.
The proof consists of showing that for randomly sampled matrices,
this formula still holds in the high-dimensional limit.

\begin{proof}
Let $\alpha_n$ and $\beta_n$ denote the positive eigenvalues of
$S_n$ and the absolute values of its negative eigenvalues, each
listed in decreasing order, so that $\lambda_+ = \alpha_n$ and
$\lambda_- = \beta_n$.
By Lemma~\ref{lem:proportional-lobes} the sorted lobes become
proportional in the limit:
$\|\beta_n - \rho\, \alpha_n\| / \|\alpha_n\| \to 0$ almost surely.
Write $\beta_n = \rho\, \alpha_n + \varepsilon_n$ with
$\delta_n = \|\varepsilon_n\| / \|\alpha_n\| \to 0$.
By the Cauchy--Schwarz and triangle inequalities,
\[
    \Bigl|
        \frac{\langle \alpha_n, \beta_n \rangle}{\|\alpha_n\|^2}
        - \rho
    \Bigr|
    \leq \delta_n ,
    \qquad
    \Bigl|
        \frac{\|\beta_n\|^2}{\|\alpha_n\|^2}
        - \rho^2
    \Bigr|
    \leq \delta_n^2 + 2\rho\,\delta_n ,
\]
and, since $x \mapsto x_\pm$ is $1$-Lipschitz in each entry and
$\bigl( (1-\rho)\,\alpha_n \bigr)_\pm = (1-\rho)_\pm\, \alpha_n$,
\[
    \Bigl|
        \frac{\|(\alpha_n - \beta_n)_\pm\|}{\|\alpha_n\|}
        - (1 - \rho)_\pm
    \Bigr|
    \leq
    \frac{\|(\alpha_n - \beta_n) - (1-\rho)\,\alpha_n\|}{\|\alpha_n\|}
    = \delta_n .
\]
Since $\|S_n\|^2 = \|\alpha_n\|^2 + \|\beta_n\|^2$, dividing each of
$\|S_n\|^2$, $2\langle \alpha_n, \beta_n\rangle$ and
$\|(\alpha_n - \beta_n)_\pm\|^2$ by $\|\alpha_n\|^2$ gives quantities
that converge almost surely to $1 + \rho^2$, $2\rho$ and
$(1-\rho)_\pm^2$ respectively.
The profile map is homogeneous of degree zero in $S_n$, so
\[
    b
    = \frac{2\langle \alpha_n, \beta_n\rangle}{\|S_n\|^2}
    = \frac{2\langle \alpha_n, \beta_n\rangle / \|\alpha_n\|^2}
           {\|S_n\|^2 / \|\alpha_n\|^2} ,
\]
and likewise for $c$ and $d$; the factor $\|\alpha_n\|^2$ cancels,
and each ratio converges almost surely to the corresponding entry of
\eqref{eqn:limitProfiles}.
\end{proof}

\begin{figure}[!htbp]
\centering
\includegraphics[width=0.6\textwidth]{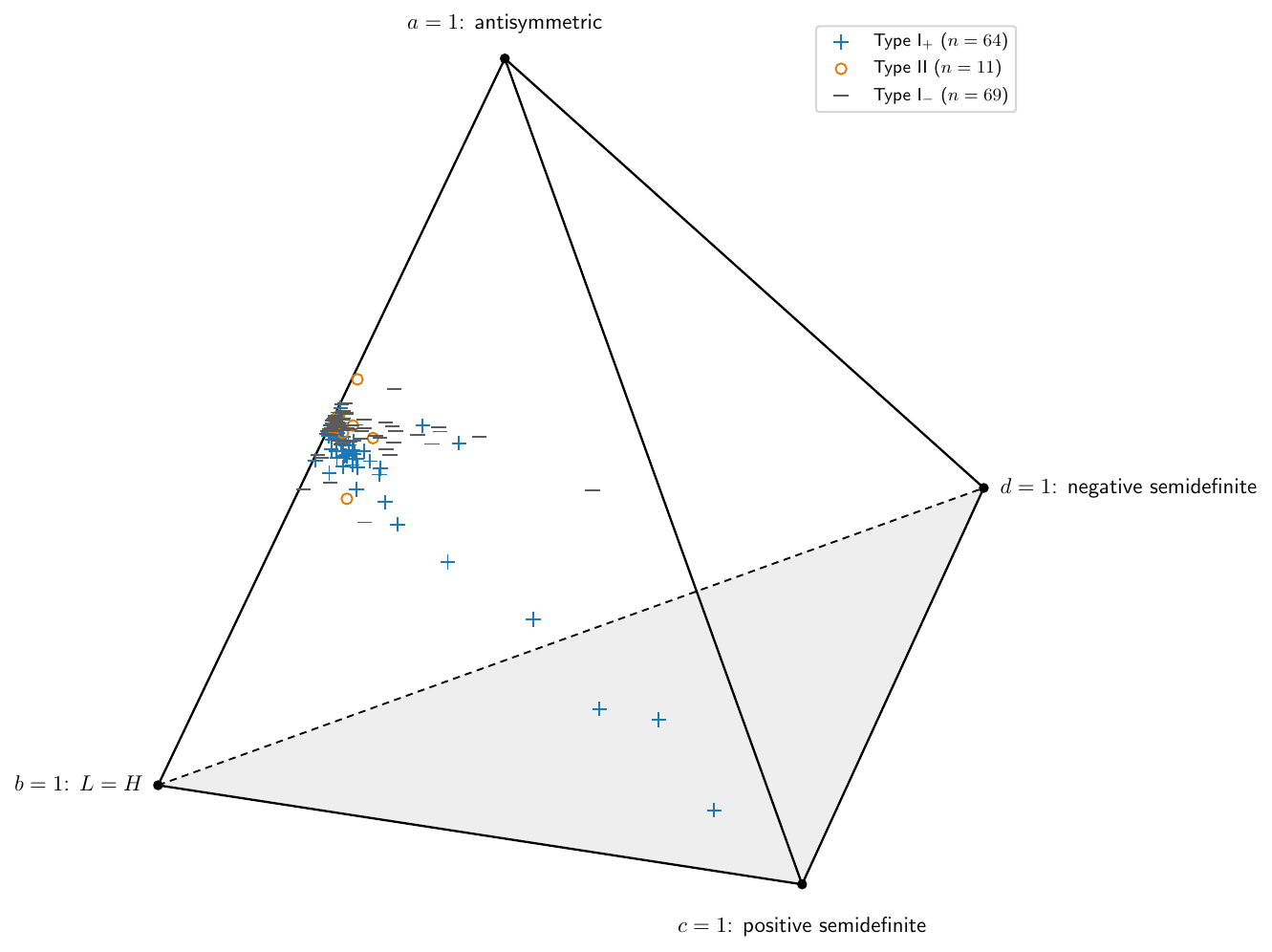}
\caption{Profiles of the heads of \texttt{gpt2} on the simplex
$\Delta_3$.}
\label{fig:abcd-simplex-scatter}
\end{figure}

\begin{figure}[!htbp]
\centering
\includegraphics[width=0.5\textwidth]{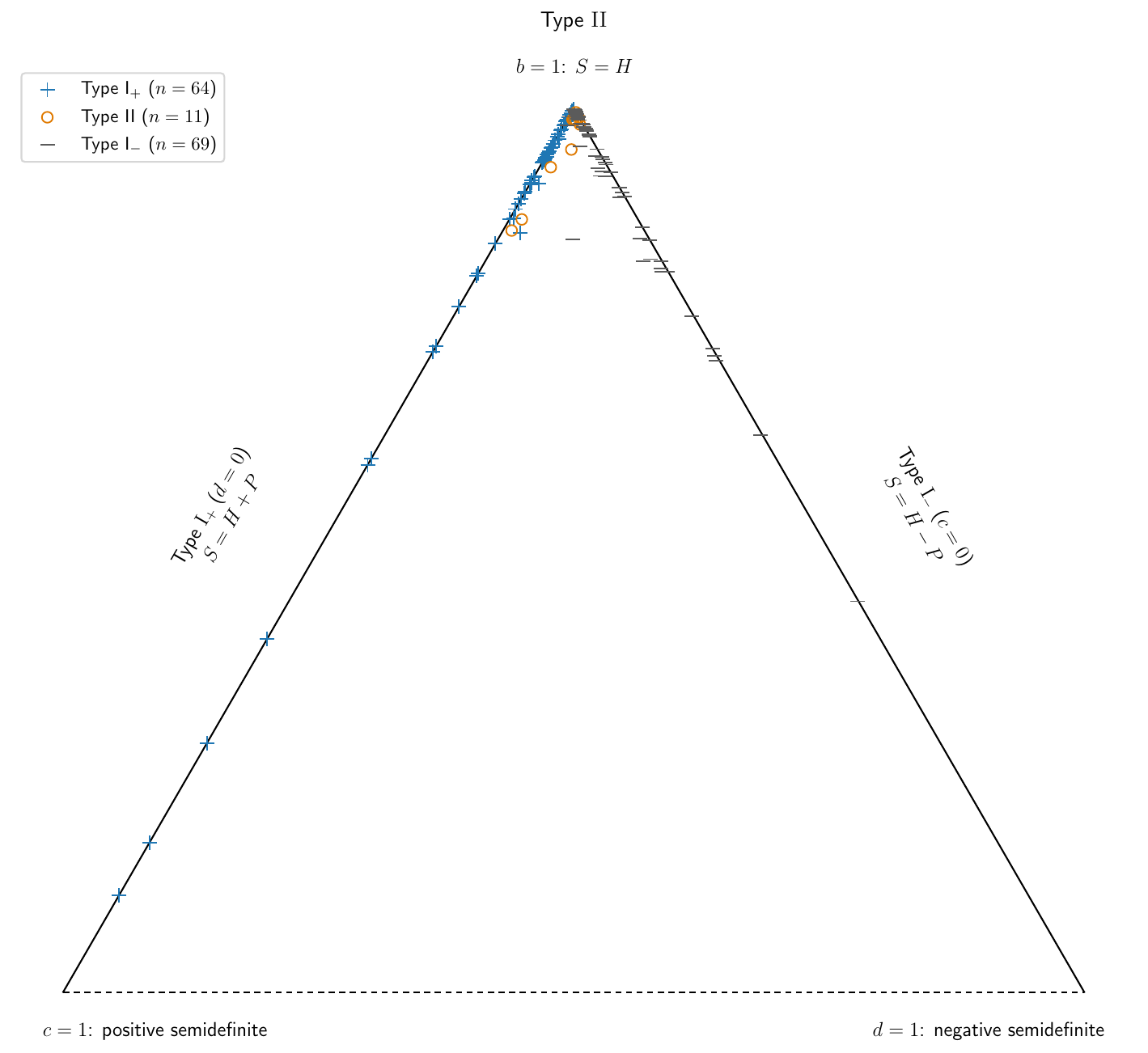}
\caption{Profiles of the symmetric parts of the heads of \texttt{gpt2} on $\Delta$.
Here $H$ has spectrum symmetric about zero and $P$ is positive semidefinite.}
\label{fig:bcd-simplex-scatter}
\end{figure}

\subsubsection{The hypothesis of Theorem~\ref{thm:accumulation}}\label{sec:proportionalityHypothesis}

Observation~\ref{obs:pairing} quantifies the approximate proportionality
of the two sorted spectral lobes at trained heads.
Figure~\ref{fig:accumulation-example} illustrates the theorem at
$\rho = \tfrac12$, where the limiting profiles of $S_n$ are
$(b, c, d) = (0.8, 0.2, 0)$.

\begin{figure}[!htbp]
\centering
\includegraphics[width=0.9\textwidth]{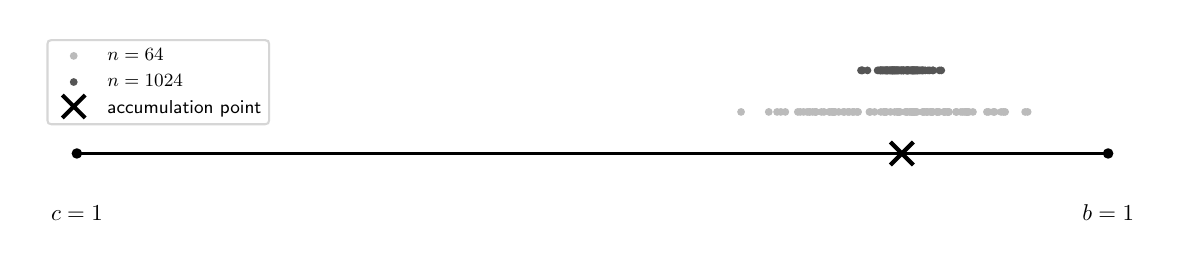}
\caption{Profiles of $100$ spectra sampled from
$\tfrac12(\gamma_{\mu,k} + \gamma_{\nu,k})$ with $\mu = -\tfrac12$,
$\nu = 1$, $k = 2$, for $n = 64$ and $n = 1024$ eigenvalues per
side, plotted by their coordinate $b$ along the edge $\{d = 0\}$,
drawn horizontally from the vertex $c = 1$ to the vertex $b = 1$,
together with the limit point $b = 0.8$ of
Theorem~\ref{thm:accumulation} ($\times$).}
\label{fig:accumulation-example}
\end{figure}

\subsubsection{Effective rank of trained bilinear forms}\label{sec:effectiveRank}

Let $\sigma_1 \geq \sigma_2 \geq \cdots$ be the singular values of $L$. 
There are many measures of effective rank used in the literature. 
One of them is the \emph{participation ratio} defined by 
$(\sum_i \sigma_i^2)^2 / \sum_i \sigma_i^4$. 
This equals $1$ for a rank-one form and $n$ for a flat spectrum. 
For the heads of the fourteen models, 
the participation ratio has median $48.5$ 
and lies below $2$ for fewer than $1\%$ of them.

\section{Discussion}\label{sec:discussion}

We have presented several observations on trained attention heads
showing that their bilinear forms have distinguished properties.
It is important to discern which properties are learned and which are structural.
Equal counts for positive and negative eigenvalues is
a structural consequence of
the low-rank query-key attention head construction
rather than a pattern learned from language.
Training shapes the distributions of these eigenvalues: across the
models studied, the symmetric part is stronger than a random baseline,
and its positive and negative eigenvalues follow closely related
distributions.
We have proposed metrics using the classification of real bilinear forms
which describe variation between trained heads.
They are useful metrics for head classification,
although they do not by themselves
identify what a head computes in an explicit mechanistic sense.

Several observations remain unexplained.  In particular, we do not know
why the eigenvalues of the symmetric part and the singular values of the
antisymmetric part have gamma-like profiles, or why their fitted
parameters and magnitudes are so closely related.  These patterns occur
across all fourteen models studied, but the present results do not determine
whether they arise from the parametrization, optimization, training data,
or an interaction among them.
The curious observation that profiles of trained bilinear forms accumulate near
the family of profiles of rank-one forms also remains only partially explained.

The empirical scope is also limited.  We examine the bilinear forms in
the heads of fourteen decoder-only models.
However an attention head acts within a larger system:
its output is combined with other heads and then processed by the neural
network in each transformer block.  Future work should examine how the
symmetric and antisymmetric parts and their spectral distributions
interact with these components.

\begin{table}[!htbp]
\centering
\small
\setlength{\tabcolsep}{12pt}
\begin{tabular}{@{}p{0.4\linewidth} p{0.35\linewidth} p{0.07\linewidth}@{}}
\toprule
Observation & Status & Source \\
\midrule
\ref{obs:balance}:
the symmetric part is balanced
& Proved: Theorem~\ref{thm:heads-have-balanced-attention}
&  \\[1ex]

\ref{obs:head-profiles}:
head profiles accumulate near rank-one profiles $\Theta \subset \Delta_3$
& Partially explained: Theorem~\ref{thm:accumulation}; the
proportionality of the lobes in its proof holds approximately at
the heads (\S\ref{subsec:head-simplex})
& \\[1ex]

\ref{obs:bimodal-gamma}:
$\lambda(S)$ has two gamma-like lobes
& Unexplained
&  \\[1ex]

\ref{obs:symmetric-bias}:
symmetric bias of attention heads
& Partially explained: \cite[Theorem~2.4]{saponati2025underlying}
for the bidirectional case with assumptions, unexplained for decoder-only
& \cite{saponati2025underlying} \\[1ex]

\ref{obs:pairing}:
the sorted spectral lobes are approximately proportional
& Linked to profile accumulation by Theorem~\ref{thm:accumulation};
see \S\ref{sec:proportionalityHypothesis}
&  \\[1ex]

\ref{obs:full-gamma}:
$|\lambda(L)|$ and $\sigma(T)$ have gamma-like profiles
& Unexplained
& \cite{darveshi2025spectral} \\[1ex]

\ref{obs:modes}:
$\tau \approx \sqrt{|\mu| \nu}$
& Unexplained
&  \\[1ex]

\ref{obs:shapes}:
$k_T \approx \sqrt{k_+ k_-}$
& Unexplained
&  \\[1ex]

\ref{obs:sigma-lambda}:
sorted $\sigma(T)$ and $|\lambda(S)|$ are strongly correlated
& Unexplained
&  \\
\bottomrule
\end{tabular}
\end{table}

\section*{Acknowledgements}

The ideas underlying
Theorem~\ref{thm:heads-have-balanced-attention}
and Proposition~\ref{prop:qkv-homogenizes-to-bilinear}
arose out of extended conversations with ChatGPT 5.5.
The observation that $\Theta$ is the set of profiles of rank-one forms
was made by ChatGPT 6.
Theorem~\ref{thm:accumulation} and its proof,
as well as the constructions in Appendix~\ref{app:moments},
arose out of extended conversations with Claude Fable 5.
The appendices (with the exception of Appendix~\ref{app:moments}),
figures, and experimental code,
as well as the statements and proofs of
Proposition~\ref{prop:near-identity}
and Lemma~\ref{lemma:pairing-bound}
were generated by Claude Fable 5.
The mathematical assertions in the appendices
were formalized in Lean with the help of Claude Fable 5.
The author is responsible for the final text and released code.

\section*{Supplementary information}

All experimental code, Lean formalizations, as well as additional data visualizations are publicly available.\footnote{\url{https://aodesky.github.io/attention-bilinear-forms/}}

\appendix

\section{Random matrices}\label{app:random-matrices}

This appendix collects the symmetric/skew energy identities and the
expected energy splits for random matrices used in
\S\ref{sec:experimentsQK} to justify the random-baseline value
$\mathscr{S}\approx 1/2$ in Figure~\ref{fig:qk-multimodel}.
\begin{lemma}[Symmetric and skew energy identities; cf.~{\cite[Lemma~S1.14 and Eq.~(S118)]{saponati2025underlying}}]
\label{lemma:symmetric-skew-energy-baseline}
Let $L\in M_N(\RR)$, and write
$S=\tfrac12(L+L^T)$, $T=\tfrac12(L-L^T)$. Then
$\|L\|^2=\|S\|^2+\|T\|^2$
and
    $\|S\|^2
    =
    \tfrac12\|L\|^2+\tfrac12\operatorname{tr}(L^2)$,
    $\|T\|^2
    =
    \tfrac12\|L\|^2-\tfrac12\operatorname{tr}(L^2)$.
Consequently,
\[
    \frac{\|S\|^2}{\|L\|^2}
    =
    \frac12+\frac{\operatorname{tr}(L^2)}{2\|L\|^2},
    \qquad
    \frac{\|T\|^2}{\|L\|^2}
    =
    \frac12-\frac{\operatorname{tr}(L^2)}{2\|L\|^2}.
\]
\end{lemma}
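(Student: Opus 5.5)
The statement is purely algebraic, so I will expand everything in terms of the Frobenius inner product $\langle X,Y\rangle=\tr(X^TY)$ on $M_N(\RR)$. Because $L=S+T$, the Frobenius norm splits as $\|L\|^2=\|S\|^2+\|T\|^2+2\langle S,T\rangle$, and I first show the cross term is zero. Using $S^T=S$, $T^T=-T$ and the cyclicity of the trace,
\[
\langle S,T\rangle=\tr(S^TT)=\tr(ST)=\tr\bigl((ST)^T\bigr)=\tr(T^TS)=-\tr(TS)=-\tr(ST),
\]
so $\tr(ST)=0$. This gives $\|L\|^2=\|S\|^2+\|T\|^2$.

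For the second identity I compute $\|S\|^2$ directly. Since $S$ is symmetric, $\|S\|^2=\tr(S^2)$. Expanding,
\[
\tr(S^2)=\tfrac14\tr\bigl((L+L^T)^2\bigr)=\tfrac14\bigl(\tr(L^2)+\tr(LL^T)+\tr(L^TL)+\tr\bigl((L^T)^2\bigr)\bigr).
\]
Now $\tr\bigl((L^T)^2\bigr)=\tr\bigl((L^2)^T\bigr)=\tr(L^2)$, and $\tr(LL^T)=\tr(L^TL)=\|L\|^2$ by cyclicity. So $\|S\|^2=\tfrac12\|L\|^2+\tfrac12\tr(L^2)$.

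The same expansion for $T=\tfrac12(L-L^T)$, using $\|T\|^2=\tr(T^TT)=-\tr(T^2)$, gives $\|T\|^2=\tfrac12\|L\|^2-\tfrac12\tr(L^2)$. Alternatively, this follows by subtracting the formula for $\|S\|^2$ from the orthogonal decomposition in the first step.

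The stated ratios then come from dividing both identities by $\|L\|^2$, which is valid whenever $L\neq 0$; for $L=0$ the ratios are undefined and the identities hold trivially. The only point needing care is the sign bookkeeping for $T^T=-T$; the rest is routine.
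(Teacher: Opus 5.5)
Your proof is correct and follows essentially the same route as the paper. You show the cross term $\tr(ST)$ vanishes by the same transpose-and-cyclicity argument, then expand $\|S\|^2$ in terms of $\|L\|^2$ and $\tr(L^2)$. The paper phrases that expansion via $\langle L,L^T\rangle=\tr(L^2)$ and gets $\|T\|^2$ by subtraction, which you also mention as an alternative; your explicit remark that the ratios need $L\neq 0$ is a small addition the paper leaves implicit.
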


\begin{proof}
The subspaces of symmetric and antisymmetric matrices are orthogonal with
respect to the Frobenius inner product $\langle X,Y\rangle
=\operatorname{tr}(XY^T)$: for $S$ symmetric and $T$ antisymmetric,
\[
    \langle S,T\rangle
    =
    \operatorname{tr}(ST^T)
    =
    -\operatorname{tr}(ST)
    =
    -\operatorname{tr}\bigl((ST)^T\bigr)
    =
    -\operatorname{tr}(T^TS^T)
    =
    \operatorname{tr}(TS)
    =
    \operatorname{tr}(ST),
\]
so $\operatorname{tr}(ST)=0$ and $\langle S,T\rangle=0$.
Since $L=S+T$, it follows that
$\|L\|^2=\|S\|^2+\|T\|^2$. Expanding,
\[
    \|S\|^2
    =
    \tfrac14\bigl(\langle L,L\rangle+2\langle L,L^T\rangle
        +\langle L^T,L^T\rangle\bigr).
\]
Since $\langle L,L\rangle=\langle L^T,L^T\rangle=\|L\|^2$ and
$\langle L,L^T\rangle=\operatorname{tr}(L^2)$, we obtain
$\|S\|^2=\tfrac12\|L\|^2+\tfrac12\operatorname{tr}(L^2)$. The formula
for $\|T\|^2$ follows by subtraction from $\|L\|^2$.
\end{proof}

\begin{corollary}[Iid square random matrix baseline]
\label{cor:iid-square-symmetric-skew-baseline}
Let $L\in M_N(\RR)$ be a random matrix whose entries are independent,
mean-zero, with common variance $\sigma^2$. Then
\[
    \EE\|S\|^2
    =
    \frac{N(N+1)}{2}\sigma^2,
    \qquad
    \EE\|T\|^2
    =
    \frac{N(N-1)}{2}\sigma^2,
\]
so
\[
    \frac{\EE\|S\|^2}{\EE\|L\|^2}
    =
    \frac{N+1}{2N},
    \qquad
    \frac{\EE\|T\|^2}{\EE\|L\|^2}
    =
    \frac{N-1}{2N}.
\]
If the entries of $L$ are iid Gaussian, then also
$\EE\!\left[\|S\|^2/\|L\|^2\right]=(N+1)/(2N)$ and
$\EE\!\left[\|T\|^2/\|L\|^2\right]=(N-1)/(2N)$.
\end{corollary}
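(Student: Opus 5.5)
The identities in Lemma~\ref{lemma:symmetric-skew-energy-baseline} reduce everything to the two expectations $\EE\|L\|^2$ and $\EE\operatorname{tr}(L^2)$. I will compute these first, then read off $\EE\|S\|^2$ and $\EE\|T\|^2$, and treat the statement about the expected ratio separately at the end.

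For the first expectation, linearity and the variance hypothesis give
\[
\EE\|L\|^2=\sum_{i,j}\EE L_{ij}^2=N^2\sigma^2 .
\]
For the second, write $\operatorname{tr}(L^2)=\sum_{i,j}L_{ij}L_{ji}$.
\begin{itemize}
\item The diagonal terms $i=j$ contribute $\EE L_{ii}^2=\sigma^2$ each, so $N\sigma^2$ in total.
\item The off-diagonal terms $i\neq j$ have $\EE L_{ij}L_{ji}=\EE L_{ij}\,\EE L_{ji}=0$, by independence and mean zero.
\end{itemize}
Hence $\EE\operatorname{tr}(L^2)=N\sigma^2$. Substituting into the formulas for $\|S\|^2$ and $\|T\|^2$ gives
\[
\EE\|S\|^2=\tfrac12(N^2+N)\sigma^2,\qquad \EE\|T\|^2=\tfrac12(N^2-N)\sigma^2 .
\]
Dividing each by $\EE\|L\|^2=N^2\sigma^2$ yields the stated ratios of expectations.

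The Gaussian claim concerns an expectation of a ratio, not a ratio of expectations, so it needs a separate argument. Because symmetric and antisymmetric matrices are Frobenius-orthogonal, I view $L$ as a vector in $\RR^{N^2}$ and split it into the orthogonal coordinates $(L_{ii})_i$, $\bigl((L_{ij}+L_{ji})/\sqrt2\bigr)_{i<j}$ and $\bigl((L_{ij}-L_{ji})/\sqrt2\bigr)_{i<j}$. This is an orthogonal change of coordinates, so for iid $N(0,\sigma^2)$ entries the new coordinates are again iid $N(0,\sigma^2)$. In these coordinates
\begin{itemize}
\item $\|S\|^2$ is the sum of squares of the first two groups, which contain $N(N+1)/2$ coordinates;
\item $\|T\|^2$ is the sum of squares of the third group, which contains $N(N-1)/2$ coordinates.
\end{itemize}
I expect the main obstacle to be justifying $\EE[\|S\|^2/\|L\|^2]$ without computing any distribution explicitly. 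I will use exchangeability of the $N^2$ iid coordinates $g_k$. Each ratio $g_k^2/\|g\|^2$ has the same expectation, and these $N^2$ ratios sum to $1$, so each has expectation $1/N^2$. The ratio $\|S\|^2/\|L\|^2$ is a sum of $N(N+1)/2$ of them, so its expectation is $(N+1)/(2N)$. The $T$ statement then follows because the two ratios sum to $1$. An alternative route is the Beta distribution of the ratio of chi-square variables, but exchangeability avoids that computation.
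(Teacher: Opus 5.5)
Your proof is correct and follows essentially the paper's route. For the first part, the paper computes $\EE\|S\|^2$ entrywise from $\EE S_{ij}^2=\tfrac12\sigma^2$, whereas you go through $\EE\operatorname{tr}(L^2)$ and Lemma~\ref{lemma:symmetric-skew-energy-baseline}; both rest on the same vanishing cross term $\EE L_{ij}L_{ji}=0$, and your version is exactly how the paper handles Corollary~\ref{cor:random-qk-product-symmetric-skew-baseline}. For the Gaussian case the paper uses the same isotropy and orthogonal-subspace argument but simply asserts that the expected fraction of squared norm equals dimension over $N^2$; your exchangeability step supplies the justification it leaves implicit, with $\sigma>0$ tacitly assumed so that the ratio is a.s.\ defined.
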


\begin{proof}
For $i<j$, $S_{ij}=S_{ji}=\tfrac12(L_{ij}+L_{ji})$ with $L_{ij},L_{ji}$
independent of variance $\sigma^2$, so $\EE S_{ij}^2=\tfrac12\sigma^2$.
Each off-diagonal pair contributes $\sigma^2$ in expectation; there are
$N(N-1)/2$ such pairs. The diagonal contribution is $\sum_i\EE S_{ii}^2=
\sum_i\EE L_{ii}^2=N\sigma^2$. Hence $\EE\|S\|^2=N\sigma^2+
N(N-1)/2\cdot\sigma^2=N(N+1)\sigma^2/2$. The skew case is similar with
no diagonal contribution.

For the iid Gaussian case, $L$ is an isotropic Gaussian vector in the
$N^2$-dimensional inner-product space $M_N(\RR)$. The symmetric and
antisymmetric subspaces are orthogonal of dimensions $N(N+1)/2$ and
$N(N-1)/2$ respectively, so the expected fraction of squared norm in
each subspace equals its dimension divided by $N^2$.
\end{proof}

\begin{corollary}[Random QK-product baseline]
\label{cor:random-qk-product-symmetric-skew-baseline}
Let $K,Q\in M_{n\times N}(\RR)$ be independent random matrices with
independent, mean-zero entries of variances $\sigma_K^2$ and $\sigma_Q^2$.
Set $L=K^TQ$ and $S,T$ as before. Then
\[
    \EE\|L\|^2=nN^2\sigma_K^2\sigma_Q^2,
    \qquad
    \EE\|S\|^2
    =
    \frac{nN(N+1)}{2}\sigma_K^2\sigma_Q^2,
\]
\[
    \EE\|T\|^2
    =
    \frac{nN(N-1)}{2}\sigma_K^2\sigma_Q^2,
\]
so
\[
    \frac{\EE\|S\|^2}{\EE\|L\|^2}
    =
    \frac{N+1}{2N},
    \qquad
    \frac{\EE\|T\|^2}{\EE\|L\|^2}
    =
    \frac{N-1}{2N}.
\]
\end{corollary}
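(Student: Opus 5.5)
We compute $\EE\|L\|^2$ and $\EE\operatorname{tr}(L^2)$ directly, then read off $\EE\|S\|^2$ and $\EE\|T\|^2$ from Lemma~\ref{lemma:symmetric-skew-energy-baseline}. Linearity of expectation applied to $\|S\|^2=\tfrac12\|L\|^2+\tfrac12\operatorname{tr}(L^2)$ reduces everything to these two second-moment computations.

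First, the Frobenius norm. Write $L_{ij}=\sum_{r=1}^n K_{ri}Q_{rj}$. Then
\[
    \EE L_{ij}^2=\sum_{r,s}\EE[K_{ri}K_{si}]\,\EE[Q_{rj}Q_{sj}],
\]
using independence of $K$ from $Q$. Since entries are independent and mean zero, $\EE[K_{ri}K_{si}]=\delta_{rs}\sigma_K^2$, and similarly for $Q$. Hence $\EE L_{ij}^2=n\sigma_K^2\sigma_Q^2$. Summing over the $N^2$ pairs $(i,j)$ gives $\EE\|L\|^2=nN^2\sigma_K^2\sigma_Q^2$.

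Second, the trace. We have
\[
    \operatorname{tr}(L^2)=\sum_{i,j}L_{ij}L_{ji}=\sum_{i,j}\sum_{r,s}K_{ri}Q_{rj}K_{sj}Q_{si}.
\]
Taking expectations and using independence of $K$ and $Q$, each term factors as $\EE[K_{ri}K_{sj}]\,\EE[Q_{rj}Q_{si}]$. The first factor is $\delta_{rs}\delta_{ij}\sigma_K^2$ and the second is $\delta_{rs}\delta_{ij}\sigma_Q^2$. Only the terms with $r=s$ and $i=j$ survive, so $\EE\operatorname{tr}(L^2)=nN\sigma_K^2\sigma_Q^2$.

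Combining the two computations gives
\[
    \EE\|S\|^2=\tfrac12 nN^2\sigma_K^2\sigma_Q^2+\tfrac12 nN\sigma_K^2\sigma_Q^2=\tfrac{nN(N+1)}{2}\sigma_K^2\sigma_Q^2,
\]
and subtracting from $\EE\|L\|^2$ gives $\EE\|T\|^2=\tfrac{nN(N-1)}{2}\sigma_K^2\sigma_Q^2$. The stated ratios of expectations follow by dividing by $\EE\|L\|^2$.

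No step is a real obstacle. The only point needing care is the index bookkeeping in the trace term: the pairing forcing $i=j$ comes from the fact that $K_{ri}$ and $K_{sj}$ are independent, mean-zero entries unless they coincide. Note that we never need the entries to be identically distributed, only independent, mean-zero, and of common variance within $K$ and within $Q$.
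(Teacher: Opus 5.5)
Your proposal is correct and follows essentially the same route as the paper: compute $\EE L_{ij}^2 = n\sigma_K^2\sigma_Q^2$ entrywise, show that $\EE\operatorname{tr}(L^2)=nN\sigma_K^2\sigma_Q^2$ because only diagonal terms survive, and then read off $\EE\|S\|^2$ and $\EE\|T\|^2$ from Lemma~\ref{lemma:symmetric-skew-energy-baseline}. The paper simply states $\EE(L_{ab}L_{ba})=0$ for $a\neq b$, whereas you carry out the full index expansion; this is only a difference in level of detail.
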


\begin{proof}
Writing $L_{ab}=\sum_{r=1}^n (K)_{ra}(Q)_{rb}$ and using independence,
$\EE L_{ab}=0$ and $\EE L_{ab}^2=n\sigma_K^2\sigma_Q^2$. Summing over
$a,b\in\{1,\ldots,N\}$ gives $\EE\|L\|^2=nN^2\sigma_K^2\sigma_Q^2$.

By Lemma~\ref{lemma:symmetric-skew-energy-baseline},
$\EE\|S\|^2=\tfrac12\EE\|L\|^2+\tfrac12\EE\operatorname{tr}(L^2)$.
Expanding $\operatorname{tr}(L^2)=\sum_{a,b}L_{ab}L_{ba}$: for $a\neq b$,
$\EE(L_{ab}L_{ba})=0$ by independence and centering; for $a=b$,
$\EE L_{aa}^2=n\sigma_K^2\sigma_Q^2$. Hence
$\EE\operatorname{tr}(L^2)=nN\sigma_K^2\sigma_Q^2$, and
$\EE\|S\|^2=nN(N+1)\sigma_K^2\sigma_Q^2/2$. The skew formula follows by
subtraction.
\end{proof}

The preceding corollary computes the ratio of expectations
$\EE\|S\|^2/\EE\|L\|^2$. Under the additional hypothesis that the
entries are Gaussian, the same value is also the expectation of the
normalized symmetric energy --- and the low-rank head-projection
constraint $\mathrm{rank}(L)\leq n$ does not change this expectation.

\begin{proposition}[Expected symmetric energy for random low-rank QK products]
\label{prop:expected-symmetric-energy-low-rank-qk}
Let $K,Q\in M_{n\times N}(\RR)$ be independent random matrices with iid
centered Gaussian entries, where $1\leq n\leq N$. Set $L=K^TQ\in M_N(\RR)$,
$S=\tfrac12(L+L^T)$, $T=\tfrac12(L-L^T)$, and define
    $\mathscr{S}=\frac{\|S\|^2}{\|L\|^2},
    \mathscr{A}=\frac{\|T\|^2}{\|L\|^2}$.
Then
\[
    \EE[\mathscr{S}]=\frac{N+1}{2N},
    \qquad
    \EE[\mathscr{A}]=\frac{N-1}{2N},
\]
independently of the head dimension $n$.
\end{proposition}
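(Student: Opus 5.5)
The plan is to use orthogonal invariance of the Gaussian ensemble to reduce $\mathscr{S}$ to a ratio whose numerator averages out cleanly. By Lemma~\ref{lemma:symmetric-skew-energy-baseline},
\[
    \mathscr{S}=\tfrac12+\tfrac12\,\frac{\tr(L^2)}{\|L\|^2},
\]
so it suffices to show $\EE[\tr(L^2)/\|L\|^2]=1/N$. The denominator $\|L\|^2=\tr(Q^TKK^TQ)$ is unchanged under $L\mapsto O^TLO$ for $O\in O(N)$. The numerator $\tr(L^2)$ is unchanged as well. The Gaussian law of $(K,Q)$ is invariant under $(K,Q)\mapsto(KO,QO)$, and this map sends $L\mapsto O^TLO$.

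The key idea is to average over the orthogonal group. Let $O$ be Haar-distributed and independent of $(K,Q)$. Then $L$ and $O^TLO$ have the same law, so
\[
    \EE\Bigl[\frac{\tr(L^2)}{\|L\|^2}\Bigr]
    =\EE\Bigl[\frac{\tr\bigl((O^TLO)^2\bigr)}{\|L\|^2}\Bigr]
    =\EE\Bigl[\frac{\tr(L^2)}{\|L\|^2}\Bigr],
\]
which is trivial and gains nothing, because $\tr(L^2)$ is itself conjugation invariant. Averaging $L$ alone therefore will not work. Instead I would randomize $K$ and $Q$ asymmetrically: replace $Q$ by $QO$ and leave $K$ fixed. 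The law of $(K,QO)$ equals the law of $(K,Q)$. Under this change $L\mapsto LO$, and $\|LO\|=\|L\|$ is preserved.

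So the quantity equals $\EE_{K,Q}\bigl[\EE_O[\tr(LOLO)]/\|L\|^2\bigr]$, conditioning on $(K,Q)$. The inner expectation is a standard Weingarten computation:
\[
    \EE_O\bigl[\tr(LOLO)\bigr]=\sum L_{ab}L_{cd}\,\EE[O_{bc}O_{da}].
\]
This uses the second-moment formula $\EE[O_{ij}O_{kl}]=\delta_{ik}\delta_{jl}/N$, which gives
\[
    \EE_O\bigl[\tr(LOLO)\bigr]=\frac1N\sum_{a,b}L_{ab}L_{ab}=\frac{\|L\|^2}{N}.
\]
Hence, for $L\neq 0$,
\[
    \EE_O\Bigl[\frac{\tr((LO)^2)}{\|LO\|^2}\Bigr]=\frac1N
\]
pointwise. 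Taking the outer expectation gives $1/N$, so $\EE[\mathscr{S}]=(N+1)/(2N)$ and $\EE[\mathscr{A}]=1-\EE[\mathscr{S}]=(N-1)/(2N)$. Nothing in the argument depended on $n$.

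The remaining care points are measure-theoretic. First, $L=0$ occurs with probability zero, since $K$ and $Q$ have full rank almost surely and the product $K^TQ$ is nonzero almost surely. Second, the ratio is bounded, with $|\tr(L^2)|\le\|L\|^2$ by Cauchy--Schwarz, so Fubini applies. The main step to get right is the index bookkeeping in the Weingarten second moment. I would double-check it on the sanity case $L=I$: the formula gives $\EE\tr(O^2)=1$, which is correct since $\tr(O^2)$ counts Haar-averaged fixed structure with expectation $1$.
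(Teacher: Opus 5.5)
Your proof is correct. It shares the paper's reduction (via Lemma~\ref{lemma:symmetric-skew-energy-baseline}) to showing $\EE[\tr(L^2)/\|L\|^2]=1/N$, but it uses the orthogonal invariance differently.

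The paper works directly with the law of the normalized matrix $X=L/\|L\|$:
\begin{itemize}
\item invariance under permutations of rows and columns gives $\EE[X_{ij}^2]=1/N^2$;
\item invariance under flipping the sign of one column of $K$ gives $\EE[X_{ij}X_{ji}]=0$ for $i\neq j$;
\item hence $\EE\tr(X^2)=\sum_i\EE[X_{ii}^2]=1/N$.
\end{itemize}

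You instead move the randomness onto an auxiliary Haar matrix $O$ acting on $Q$ alone, condition on $(K,Q)$, and use the second-moment formula $\EE[O_{ij}O_{kl}]=\delta_{ik}\delta_{jl}/N$. This gives the deterministic identity $\EE_O\tr((LO)^2)=\|L\|^2/N$ for every fixed $L$, and your index bookkeeping is right.

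Both computations rest on the same two facts: equal second moments and vanishing cross moments, which are themselves proved by permutation and sign-flip symmetries. The difference is where they are applied. The paper's version is more elementary: it needs only finitely many discrete symmetries, with no Haar measure or Weingarten calculus. Your version gives a stronger, pointwise statement: for every nonzero $L\in M_N(\RR)$, the Haar average of the symmetric energy of $LO$ is exactly $(N+1)/(2N)$. Consequently the conclusion holds for any almost surely nonzero random $L$ whose law is invariant under one-sided multiplication $L\mapsto LV$, $V\in O(N)$. For example, $L=K^TQ$ with $K$ arbitrary and independent of a right-invariant $Q$. The independence from $n$ is also immediate in your version.

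Your measure-theoretic remarks are exactly what is needed: $L\neq 0$ almost surely, and $|\tr(L^2)|\le\|L\|^2$ by Cauchy--Schwarz, which justifies Fubini. The opening false start (averaging $O^TLO$) is harmless. The only loose phrase is the justification in your sanity check; there $\EE\tr(O^2)=\sum_{a,b}\delta_{ab}/N=1$ follows directly from the same second-moment formula.
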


\begin{proof}
By Lemma~\ref{lemma:symmetric-skew-energy-baseline},
    $\mathscr{S}
    =
    \frac12+\frac{\operatorname{tr}(L^2)}{2\|L\|^2}$,
so it suffices to show that
$\EE[\operatorname{tr}(L^2)/\|L\|^2]=1/N$.
Since $K$ and $Q$ have iid Gaussian entries, their joint law is
invariant
under right multiplication of either factor by any $V\in O_N(\RR)$.
Therefore the law of $L=K^TQ$ is invariant under
$L\longmapsto U^TLV$
for every $U,V\in O_N(\RR)$. Set $X=L/\|L\|$, so $\|X\|=1$ almost
surely. By permutation invariance of the row and column indices, the
expectations $\EE[X_{ij}^2]$ are all equal, and since
$\sum_{i,j}X_{ij}^2=1$,
    $\EE[X_{ij}^2]=\frac{1}{N^2}$
for every $i,j$. For $i\neq j$, changing the sign of the $i$-th column of
$K$ flips the $i$-th row of $L=K^TQ$, hence flips $X_{ij}$ while
leaving
$X_{ji}$ unchanged; this preserves the law of $X$, so
$\EE[X_{ij}X_{ji}]=0$. Hence
\[
    \EE\!\left[\frac{\operatorname{tr}(L^2)}{\|L\|^2}\right]
    =
    \EE[\operatorname{tr}(X^2)]
    =
    \sum_{i,j}\EE[X_{ij}X_{ji}]
    =
    \sum_{i=1}^N\EE[X_{ii}^2]
    =
    \frac{1}{N},
\]
and therefore $\EE[\mathscr{S}]=\tfrac12+\tfrac1{2N}=(N+1)/(2N)$.
Since $\mathscr{A}=1-\mathscr{S}$, we get $\EE[\mathscr{A}]=(N-1)/(2N)$.
\end{proof}

\begin{lemma}\label{lem:proportional-lobes}
Let $X$ be a positive random variable with finite, nonzero second
moment.  Fix scales $u,v>0$ and set $\rho=u/v$.
For each $n$, let $\alpha_n\in\RR^n$ be the decreasing
sorting of $vX_1,\ldots,vX_n$ and let $\beta_n\in\RR^n$ be the
decreasing sorting of $uX'_1,\ldots,uX'_n$, where the $X_i$ and $X'_i$
are independent copies of $X$.
Then the sorted lobes become proportional with ratio $\rho$:
\[
    \frac{\|\beta_n - \rho\, \alpha_n\|}{\|\alpha_n\|}
    \longrightarrow 0
    \qquad \text{almost surely as } n \to \infty .
\]
\end{lemma}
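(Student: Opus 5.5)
We reduce the claim to a statement about the empirical quantile functions of $X$. After dividing by $v$, write $\alpha_n = v\,a_n$ and $\beta_n = u\,a'_n$, where $a_n$ and $a'_n$ are the decreasing sortings of $X_1,\dots,X_n$ and of $X'_1,\dots,X'_n$. Then $\beta_n-\rho\alpha_n = u(a'_n-a_n)$ and $\|\alpha_n\| = v\|a_n\|$. So it suffices to prove that almost surely
\[
    \frac{\|a'_n - a_n\|^2}{n} \to 0
    \qquad\text{and}\qquad
    \frac{\|a_n\|^2}{n} \to \EE[X^2] > 0 .
\]
The second limit is the strong law of large numbers applied to $X_i^2$, since $\|a_n\|^2=\sum_i X_i^2$ regardless of the sorting. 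Nonzero second moment makes the denominator positive. So the whole content is the first limit.

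For the first limit we interpret $\tfrac1n\|a'_n-a_n\|^2$ as the squared $2$-Wasserstein distance between the empirical measures $\mu_n=\tfrac1n\sum\delta_{X_i}$ and $\mu'_n=\tfrac1n\sum\delta_{X'_i}$. On the real line the optimal coupling is the monotone (sorted) one, so $\tfrac1n\sum_j (a'_{n,j}-a_{n,j})^2 = W_2(\mu_n,\mu'_n)^2$. By the triangle inequality $W_2(\mu_n,\mu'_n)\le W_2(\mu_n,\mu)+W_2(\mu'_n,\mu)$, where $\mu$ is the law of $X$. It is therefore enough to show that $W_2(\mu_n,\mu)\to 0$ almost surely.

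This uses the standard characterization that $W_2$ convergence is equivalent to weak convergence together with convergence of second moments. Almost surely $\mu_n\to\mu$ weakly by Varadarajan's theorem (or Glivenko--Cantelli), and $\int x^2\,d\mu_n\to\EE X^2$ by the strong law. Both events have probability one, as do the corresponding events for $\mu'_n$, so the intersection does too.

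The main obstacle is justifying this step if the paper wants a self-contained argument rather than citing optimal transport. For a direct proof I would fix $M>0$ and truncate: write $x = x\wedge M + (x-M)_+$. The truncated sorted vectors differ in squared norm by at most $n\,M^2\sup_t|F_n(t)-F'_n(t)|$-type terms. These are controlled by Glivenko--Cantelli through the quantile representation $\tfrac1n\sum_j(\cdot)^2=\int_0^1 (G_n-G'_n)^2$, with $G_n,G'_n$ the empirical quantile functions, which converge a.e.\ to the true quantile function. The tails contribute at most $2\cdot\tfrac1n\sum_i (X_i-M)_+^2+2\cdot\tfrac1n\sum_i(X'_i-M)_+^2$. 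By the strong law these converge to $4\,\EE[(X-M)_+^2]$, which tends to $0$ as $M\to\infty$ by dominated convergence. Letting $n\to\infty$ and then $M\to\infty$ along a countable sequence gives the claim almost surely.
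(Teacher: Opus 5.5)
Your proof is correct, and its skeleton is the paper's. Both identify $\tfrac1n\|a_n-a'_n\|^2$ with $\int_0^1(g_n-g'_n)^2$ for the empirical quantile step functions. Both pass through the population quantile function $G$ by the triangle inequality. Both thereby reduce the lemma to $g_n\to G$ in $L^2(0,1)$ almost surely, which is exactly your statement $W_2(\mu_n,\mu)\to0$.

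The difference is in how that step is justified. You cite the optimal-transport characterization ($W_2$ convergence iff weak convergence plus convergence of second moments) together with Varadarajan or Glivenko--Cantelli. The paper proves this one-dimensional case by hand in three steps: $h_n(t)\to G(t)$ at continuity points of $G$ via the strong law for the empirical tail $\overline F_n$; $\int h_n^2\to m_2$ by the strong law; and Fatou's lemma applied to $h_nG$, which gives $\limsup_n\int(h_n-G)^2\le 0$.

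The paper's version is elementary and self-contained. Yours is shorter. Because Glivenko--Cantelli gives a single null set uniform in $x$, it also avoids the Fubini-type quantifier exchange (from ``almost surely for each $t$'' to ``almost surely, for almost every $t$'') that the paper uses implicitly before invoking Fatou.

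Your truncation fallback is also sound and gives a third proof of the key step. Truncation commutes with sorting, and the truncated samples lie in $(0,M]$. Hence $\tfrac1n\|a_n\wedge M-a'_n\wedge M\|^2\le M\int_0^M|F_n-F'_n|\,dx\le M^2\sup_x|F_n(x)-F'_n(x)|$. For the tails, $(p-q)^2\le p^2+q^2$ for $p,q\ge 0$ gives the bound you state. This route replaces the Fatou argument with an explicit tail estimate.
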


\begin{proof}[Proof of Lemma~\ref{lem:proportional-lobes}]
We may write $\beta_n=\rho\,\alpha_n'$, where $\alpha_n'$ is the
decreasing sorting of $vX'_1,\ldots,vX'_n$.  Since the claim is
unchanged when both lists are divided by $v$, we may assume $v=1$.
Let $m_2=\mathbf{E}[X^2]$, which is finite and nonzero by assumption.

Let $\overline F(x) = \mathbf{P}(X > x)$, and let
$G \colon (0,1) \to (0,\infty)$ be the decreasing generalized
inverse
$G(t) = \inf \{ x > 0 : \overline F(x) \leq t \}$, so that
$G(t) > x$ if and only if $t < \overline F(x)$.
Let $U$ be a random variable with $\mathbf{P}(U < t) = t$ for
$t \in [0, 1]$.
Then
$\mathbf{P}(G(U) > x) = \mathbf{P}(U < \overline F(x))
= \overline F(x)$, so $G(U)$ has the law of $X$ and
\[
    \int_0^1 G(t)^2\, dt = m_2 .
\]
Given $n$ independent samples $X_1, \ldots, X_n$ of $X$ with
decreasing sorting $x_1 \geq \cdots \geq x_n$, define the step
function $h_n(t) = x_{\lceil nt \rceil}$ on $(0,1)$, so that
$\int_0^1 h_n^2 = \frac1n \sum_i x_i^2$.
We claim that $h_n \to G$ in $L^2(0,1)$ almost surely, and prove
it in three steps.

First, $h_n(t) \to G(t)$ almost surely for each fixed $t$ at
which $G$ is continuous, hence for all but countably many
$t \in (0,1)$.
Let $\overline F_n(x) = n^{-1} \#\{i : X_i > x\}$, so that
$h_n(t) > x$ exactly when $\overline F_n(x) \geq \lceil nt \rceil
/ n$.
Fix $\varepsilon > 0$.
Since $G$ is continuous at $t$ we have
$\overline F(G(t) + \varepsilon) < t$, so by the strong law of
large numbers
$\overline F_n(G(t) + \varepsilon) \to
\overline F(G(t) + \varepsilon) < t$ almost surely; eventually
$\overline F_n(G(t) + \varepsilon) < t \leq \lceil nt \rceil / n$
and hence $h_n(t) \leq G(t) + \varepsilon$.
The lower bound is symmetric, using
$\overline F(G(t) - \varepsilon) > t$.

Second, by the strong law of large numbers,
\[
    \int_0^1 h_n(t)^2\, dt
    = \frac{1}{n} \sum_{i=1}^n X_i^2
    \longrightarrow m_2
    = \int_0^1 G(t)^2\, dt
    \qquad \text{almost surely.}
\]

Third, expanding
$\int (h_n - G)^2 = \int h_n^2 + \int G^2 - 2 \int h_n G$
and applying Fatou's lemma to the nonnegative functions
$h_n G$, which converge to $G^2$ almost everywhere by the first
step and so give $\liminf \int h_n G \geq \int G^2$, we
conclude
\[
    \limsup_n \int_0^1 (h_n - G)^2\, dt
    \;\leq\; m_2 + m_2 - 2 m_2 = 0,
\]
which proves the claim.

Now let $g_n$ and $g_n'$ be the step functions of the two samples
defining $\alpha_n$ and $\alpha_n'$.
Then
\[
    \frac{\|\alpha_n - \alpha_n'\|^2}{n}
    = \int_0^1 (g_n - g_n')^2
    \leq 2 \int_0^1 (g_n - G)^2 + 2 \int_0^1 (g_n' - G)^2
    \longrightarrow 0,
    \qquad
    \frac{\|\alpha_n\|^2}{n}
    = \int_0^1 g_n^2
    \longrightarrow m_2 > 0,
\]
almost surely, and therefore
$\|\beta_n - \rho\,\alpha_n\| / \|\alpha_n\|
= \rho\, \|\alpha_n' - \alpha_n\| / \|\alpha_n\| \to 0$.
\end{proof}

\section{Balanced bimodal spectra}\label{app:bimodal}

This appendix collects identities for the normalized spectra studied in
\S\ref{sec:experimentsQK} and \S\ref{sec:mechinterp}.
All matrix norms below are Frobenius norms,
$\|L\| = \sqrt{\operatorname{tr}(L^TL)}$.

Fix $m \geq 1$ and set $n = 2m$.
Let $\Lambda_m$ denote the set of unit vectors
$\lambda = (\lambda_1 \leq \cdots \leq \lambda_n) \in \RR^n$
with $\lambda_m < 0 < \lambda_{m+1}$.
For $\lambda \in \Lambda_m$ define
$\lambda_+, \lambda_- \in \RR^m$ by
\[
    (\lambda_+)_j = \lambda_{n+1-j},
    \qquad
    (\lambda_-)_j = -\lambda_j,
    \qquad
    j = 1,\ldots,m,
\]
so that $\lambda_+$ and $\lambda_-$ list the positive entries and the
absolute values of the negative entries of $\lambda$, both in
decreasing order. Define
\[
    \mathscr{I}(\lambda) = \|\lambda_+\|^2 - \|\lambda_-\|^2,
    \qquad
    C(\lambda) = \langle \lambda_+, \lambda_- \rangle,
    \qquad
    \mathscr{P}(\lambda) = 1 - \|\lambda_+ - \lambda_-\|.
\]
Finally, define the linear map $\iota \colon \RR^n \to \RR^n$ by
$(\iota\lambda)_i = -\lambda_{n+1-i}$.

\begin{lemma}\label{lem:bimodal-identities}
For $\lambda \in \Lambda_m$, abbreviating
$\mathscr{I}=\mathscr{I}(\lambda)$ and $C = C(\lambda)$:
\begin{enumerate}
    \item $\|\lambda_+\|^2 = \tfrac12(1+\mathscr{I})$ and
        $\|\lambda_-\|^2 = \tfrac12(1-\mathscr{I})$;
    \item $2\,\|\lambda_+\|\,\|\lambda_-\| = \sqrt{1-\mathscr{I}^2}$;
    \item $\|\lambda_+-\lambda_-\|^2 = 1 - 2C$, and hence
        $\mathscr{P}(\lambda) = 1 - \sqrt{1-2C}$;
    \item $\sum_{j=1}^m \bigl((\lambda_+)_j + i(\lambda_-)_j\bigr)^2
        = \mathscr{I} + 2iC$.
\end{enumerate}
\end{lemma}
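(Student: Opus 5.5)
The plan is to derive all four identities from two basic facts: $\lambda$ is a unit vector, and its entries split into the positive lobe $\lambda_+$ and the negative lobe $\lambda_-$.

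For $\lambda\in\Lambda_m$ we have $\lambda_m<0<\lambda_{m+1}$ and the entries are sorted increasingly. Hence the entries $\lambda_1,\ldots,\lambda_m$ are exactly $-(\lambda_-)_1,\ldots,-(\lambda_-)_m$ in some order. Likewise $\lambda_{m+1},\ldots,\lambda_n$ are exactly the entries of $\lambda_+$. It follows that
\[
1=\|\lambda\|^2=\|\lambda_+\|^2+\|\lambda_-\|^2 .
\]
Combining this with the definition $\mathscr{I}=\|\lambda_+\|^2-\|\lambda_-\|^2$ and solving the $2\times 2$ linear system gives (1).

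For (2), multiply the two formulas from (1):
\[
\|\lambda_+\|^2\|\lambda_-\|^2=\tfrac14(1+\mathscr{I})(1-\mathscr{I})=\tfrac14\bigl(1-\mathscr{I}^2\bigr).
\]
Both norms are nonnegative, so taking nonnegative square roots gives (2). Here $|\mathscr{I}|\le 1$ holds automatically, because by (1) it is a difference of two nonnegative numbers summing to $1$.

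For (3), expand
\[
\|\lambda_+-\lambda_-\|^2=\|\lambda_+\|^2+\|\lambda_-\|^2-2\langle\lambda_+,\lambda_-\rangle=1-2C .
\]
The formula $\mathscr{P}(\lambda)=1-\sqrt{1-2C}$ then follows by substituting this into the definition of $\mathscr{P}$.

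For (4), expand each square:
\[
\bigl((\lambda_+)_j+i(\lambda_-)_j\bigr)^2=(\lambda_+)_j^2-(\lambda_-)_j^2+2i(\lambda_+)_j(\lambda_-)_j .
\]
Summing over $j$, the real parts give $\|\lambda_+\|^2-\|\lambda_-\|^2=\mathscr{I}$ and the imaginary parts give $2\langle\lambda_+,\lambda_-\rangle=2C$.

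The only step needing any care is the first one: confirming from the strict inequalities $\lambda_m<0<\lambda_{m+1}$ that $\lambda_+$ and $\lambda_-$ partition the entries of $\lambda$ up to sign. Everything after that is a routine polarization computation.
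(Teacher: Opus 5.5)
Your proof is correct and follows the paper's argument essentially step for step: $\|\lambda_+\|^2+\|\lambda_-\|^2=\|\lambda\|^2=1$ combined with the definition of $\mathscr{I}$ gives (1), multiplying gives (2), and expanding the squares gives (3) and (4). One small point: the identity $\|\lambda_+\|^2+\|\lambda_-\|^2=\|\lambda\|^2$ already follows from the index definitions $(\lambda_+)_j=\lambda_{n+1-j}$ and $(\lambda_-)_j=-\lambda_j$, so the strict sign conditions $\lambda_m<0<\lambda_{m+1}$ are not actually needed for it.
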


\begin{proof}
Since the entries of $\lambda_+$ and $\lambda_-$ are the absolute
values of the entries of $\lambda$,
$\|\lambda_+\|^2 + \|\lambda_-\|^2 = \|\lambda\|^2 = 1$;
combining with the definition of $\mathscr{I}$ gives (1), and (2)
follows by multiplying the two expressions in (1).
For (3), expand
$\|\lambda_+-\lambda_-\|^2
= \|\lambda_+\|^2 + \|\lambda_-\|^2 - 2\langle\lambda_+,\lambda_-\rangle
= 1 - 2C$.
For (4), expand the square: the real part is
$\|\lambda_+\|^2 - \|\lambda_-\|^2 = \mathscr{I}$ and the imaginary
part is $2\langle\lambda_+,\lambda_-\rangle = 2C$.
\end{proof}

Note that $0 \leq 2C \leq 1$
($C \geq 0$ since the entries of $\lambda_\pm$ are nonnegative,
and $2C \leq 1$ by part (3)),
so $0 \leq \mathscr{P} \leq 1$.

\begin{lemma}\label{lem:bimodal-involution}
The map $\iota$ is an orthogonal, symmetric involution of $\RR^n$ and
maps $\Lambda_m$ to $\Lambda_m$, with
$\lambda_+(\iota\lambda) = \lambda_-(\lambda)$ and
$\lambda_-(\iota\lambda) = \lambda_+(\lambda)$.
Consequently
$\mathscr{I}\circ\iota = -\mathscr{I}$,
$C\circ\iota = C$, and
$\mathscr{P}\circ\iota = \mathscr{P}$.
Moreover, for $\lambda \in \Lambda_m$,
\[
    \|\lambda - \iota\lambda\|^2 = 2\,\|\lambda_+-\lambda_-\|^2,
    \qquad\text{so}\qquad
    \mathscr{P}(\lambda)
    = 1 - \tfrac{1}{\sqrt 2}\|\lambda - \iota\lambda\|;
\]
in particular $\mathscr{P}(\lambda) = 1$ if and only if
$\iota\lambda = \lambda$.
\end{lemma}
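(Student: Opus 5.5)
The statement to prove is Lemma~\ref{lem:bimodal-involution}. I would verify each claim directly from the definition $(\iota\lambda)_i = -\lambda_{n+1-i}$.

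\emph{Involution, orthogonality and symmetry.} Let $P$ be the reversal permutation matrix. Then $\iota = -P$. The matrix $P$ is symmetric, orthogonal and satisfies $P^2 = \id$, so $\iota$ inherits all three properties; in particular $\iota^2 = P^2 = \id$.

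\emph{$\iota$ preserves $\Lambda_m$.} Take $\lambda \in \Lambda_m$.
\begin{enumerate}
\item Since $\iota$ is orthogonal, $\iota\lambda$ is again a unit vector.
\item Monotonicity survives: $\lambda_{n+1-i}$ is decreasing in $i$, so $-\lambda_{n+1-i}$ is increasing.
\item For the sign condition, $(\iota\lambda)_m = -\lambda_{m+1} < 0$ and $(\iota\lambda)_{m+1} = -\lambda_m > 0$.
\end{enumerate}

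\emph{The lobes swap.} For $j = 1,\ldots,m$,
\[
    \lambda_+(\iota\lambda)_j = (\iota\lambda)_{n+1-j} = -\lambda_j = \lambda_-(\lambda)_j,
    \qquad
    \lambda_-(\iota\lambda)_j = -(\iota\lambda)_j = \lambda_{n+1-j} = \lambda_+(\lambda)_j .
\]
The three invariance statements then follow immediately from their definitions. $\mathscr{I}$ is a difference of squared norms, so it changes sign under the swap. $C$ is an inner product, which is symmetric, so it is unchanged. $\mathscr{P}$ depends only on $\|\lambda_+ - \lambda_-\|$, which is symmetric under the swap.

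\emph{The distance identity.} Split the index set into the lower half $i \le m$ and the upper half $i > m$.
\begin{itemize}
\item For $i = j \le m$: $\lambda_i - (\iota\lambda)_i = \lambda_j + \lambda_{n+1-j} = (\lambda_+)_j - (\lambda_-)_j$.
\item For $i = n+1-j$ with $j \le m$: $\lambda_i - (\iota\lambda)_i = \lambda_{n+1-j} + \lambda_j$, which is the same quantity.
\end{itemize}
Each coordinate of $\lambda_+ - \lambda_-$ therefore appears exactly twice in $\lambda - \iota\lambda$, so $\|\lambda - \iota\lambda\|^2 = 2\|\lambda_+ - \lambda_-\|^2$. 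Substituting this into $\mathscr{P} = 1 - \|\lambda_+ - \lambda_-\|$ gives the stated formula for $\mathscr{P}(\lambda)$. Hence $\mathscr{P}(\lambda) = 1$ exactly when $\lambda = \iota\lambda$.

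The only step requiring any care is the index bookkeeping in this last identity, namely checking that the two halves pair up correctly because $n = 2m$.
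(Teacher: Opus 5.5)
Your proof is correct and follows essentially the same route as the paper: you verify everything directly from $(\iota\lambda)_i=-\lambda_{n+1-i}$, and you obtain the displacement identity by pairing coordinates $j$ and $n+1-j$ of $\lambda-\iota\lambda$, each equal to $(\lambda_+)_j-(\lambda_-)_j$. The only differences are cosmetic: you check $\lambda_m<0<\lambda_{m+1}$ by explicit indices where the paper uses a multiset argument, and you read the $\mathscr{P}(\lambda)=1$ criterion directly off the displayed formula where the paper appeals to Lemma~\ref{lem:bimodal-identities}(3).
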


\begin{proof}
The matrix of $\iota$ has entries $-\delta_{j,\,n+1-i}$, which is
symmetric and orthogonal, and $\iota^2 = \id$.
If $\lambda$ is increasing then so is $\iota\lambda$, and the entries
of $\iota\lambda$ form the same multiset as the entries of $-\lambda$,
so $\iota(\Lambda_m) = \Lambda_m$.
The positive entries of $\iota\lambda$ are
$-\lambda_1 \geq \cdots \geq -\lambda_m$,
whence $\lambda_+(\iota\lambda) = \lambda_-(\lambda)$; the other
identity is symmetric, and the transformation rules for $\mathscr{I}$,
$C$, $\mathscr{P}$ follow.
For the displacement identity, the $j$-th and $(n+1-j)$-th coordinates
of $\lambda - \iota\lambda$ both equal
$\lambda_j + \lambda_{n+1-j} = (\lambda_+)_j - (\lambda_-)_j$
up to sign, so
$\|\lambda-\iota\lambda\|^2 = 2\sum_j
\bigl((\lambda_+)_j-(\lambda_-)_j\bigr)^2$.
The final statement follows from Lemma~\ref{lem:bimodal-identities}(3)
since $\mathscr{P}(\lambda)=1$ iff $\lambda_+=\lambda_-$.
\end{proof}

We prove Theorem~\ref{thm:simplex-faces}, using the notation of its
statement.
The argument is organized by the dimension of the face: the
partition of the energy first, then the facets, then the faces of
lower dimension, whose fibers are intersections of the facet
fibers.

\begin{proof}[Proof of Theorem~\ref{thm:simplex-faces}]
Rescaling, we may assume $\|L\| = 1$.

\emph{The partition.}
The entries of $\lambda_+$ and $\lambda_-$ are the absolute values
of the eigenvalues of $S$ together with the appended zeros, so
$\|\lambda_+\|^2 + \|\lambda_-\|^2 = \|S\|^2$;
expanding
$\|\lambda_+ - \lambda_-\|^2 = \|\lambda_+\|^2 + \|\lambda_-\|^2
- 2\langle\lambda_+,\lambda_-\rangle$
and using
$\|\lambda_+ - \lambda_-\|^2
= \|(\lambda_+ - \lambda_-)_+\|^2
+ \|(\lambda_+ - \lambda_-)_-\|^2$ gives
\[
    2\langle\lambda_+,\lambda_-\rangle
    + \|(\lambda_+ - \lambda_-)_+\|^2
    + \|(\lambda_+ - \lambda_-)_-\|^2
    = \|S\|^2.
\]
Adding $a = \|T\|^2$ and using
$\|S\|^2 + \|T\|^2 = \|L\|^2 = 1$
(Lemma~\ref{lemma:symmetric-skew-energy-baseline})
yields $a + b + c + d = 1$.
Nonnegativity is clear since the entries of $\lambda_+$ and
$\lambda_-$ are nonnegative.

\emph{The facets.}
We prove the vanishing conditions of (1)--(4); the vertex
conditions are proved below with the other zero-dimensional faces.
If $S = 0$ then $b = c = d = 0$, $S$ is semidefinite, and the
decompositions of (3) and (4) hold with $H = P = 0$; so assume
$S \neq 0$.

(1): $a = \|T\|^2$ vanishes if and only if $T = 0$.

(2): If $S$ is semidefinite then $\lambda_+ = 0$ or
$\lambda_- = 0$, so $b = 2\langle \lambda_+, \lambda_-
\rangle = 0$.
Conversely, if $S$ is indefinite then $(\lambda_+)_1 > 0$ and
$(\lambda_-)_1 > 0$; since the lists $\lambda_+$ and $\lambda_-$ are sorted in
decreasing order,
$\langle \lambda_+, \lambda_- \rangle
\geq (\lambda_+)_1 (\lambda_-)_1 > 0$, so $b > 0$.

(3): First, $d = \|(\lambda_+ - \lambda_-)_-\|^2$ vanishes iff
$(\lambda_+)_j \geq (\lambda_-)_j$ for every $j$, which
is the stated rank condition.

Assume the rank condition.
Choose orthonormal eigenvectors of $S$:
for each $j$ with $(\lambda_+)_j > 0$ let $v_j$ be an eigenvector
for the eigenvalue $(\lambda_+)_j$, and for each $j$ with
$(\lambda_-)_j > 0$ let $w_j$ be an eigenvector for
$-(\lambda_-)_j$, taking each eigenvalue with its multiplicity.
Since $(\lambda_-)_j > 0$ implies $(\lambda_+)_j > 0$, every
$w_j$ is matched with a $v_j$. Define
\[
    H = \sum_{j \,:\, (\lambda_-)_j > 0}
    (\lambda_-)_j \bigl(v_j v_j^T - w_j w_j^T\bigr),
    \qquad
    P = S - H = \sum_{j \,:\, (\lambda_+)_j > 0}
    \bigl((\lambda_+)_j - (\lambda_-)_j\bigr) \, v_j v_j^T.
\]
The spectrum of $H$ consists of the pairs
$\pm(\lambda_-)_j$ together with zeros, so it is symmetric
about zero; $P$ is positive semidefinite since
$(\lambda_+)_j \geq (\lambda_-)_j$; and
$\|P\|^2 = \sum_j \bigl((\lambda_+)_j - (\lambda_-)_j\bigr)^2
= \|(\lambda_+ - \lambda_-)_+\|^2 = c$.
All three matrices are diagonal in a common eigenbasis of $S$, so
they commute.

Conversely, assume $S = H + P$.
Let $\mu_1 \geq \cdots \geq \mu_N$ be the eigenvalues of $H$;
symmetry of the spectrum gives $\mu_{N+1-k} = -\mu_k$.
Writing $\lambda_k(S)$ for the $k$-th largest eigenvalue of $S$,
the Courant--Fischer min--max theorem
\cite[Theorem~4.2.6]{horn2013matrix} gives
\[
    \lambda_k(S)
    = \max_{\dim V = k}\, \min_{\substack{x \in V \\ \|x\| = 1}}
    x^T (H + P)\, x
    \;\geq\;
    \max_{\dim V = k}\, \min_{\substack{x \in V \\ \|x\| = 1}}
    x^T H x = \mu_k,
\]
since $x^T P x \geq 0$ for every $x$.
Fix $t > 0$. If $\lambda_k(S) < -t$ then $\mu_k < -t$, so
$\mu_{N+1-k} = -\mu_k > t$ and hence
$\lambda_{N+1-k}(S) \geq \mu_{N+1-k} > t$.
As $k \mapsto N+1-k$ is injective,
\begin{equation}\label{eqn:counting-dominance}
    \#\{k : \lambda_k(S) < -t\}
    \;\leq\;
    \#\{k : \lambda_k(S) > t\}
    \qquad \text{for every } t > 0.
\end{equation}
Now suppose the rank condition fails, say
$(\lambda_+)_j < (\lambda_-)_j$, and choose $t$ with
$(\lambda_+)_j < t < (\lambda_-)_j$.
Since $\lambda_+$ is nonincreasing, the eigenvalues of $S$
exceeding $t$ are among $(\lambda_+)_i$ for $i < j$, so the
right side of \eqref{eqn:counting-dominance} is at most $j - 1$;
since $(\lambda_-)_i \geq (\lambda_-)_j > t$ for
$i \leq j$, the left side is at least $j$. This contradicts
\eqref{eqn:counting-dominance}.

(4): Apply (3) to $-L$, which preserves $a$ and $b$, exchanges $c$
with $d$ and $\lambda_+$ with $\lambda_-$, and negates $H$ and $P$.

\emph{The vertices.}
Each vertex is the face on which the other three coordinates
vanish.
If $a = 1$ then $c = d = 0$ gives $\lambda_+ = \lambda_-$ by (3) and (4),
and $b = 0$ makes $S$ semidefinite by (2), so $\lambda_+ = \lambda_- = 0$
and $S = 0$: $L$ is antisymmetric.
Conversely, an antisymmetric $L$ has $S = 0$, so
$a = \|T\|^2 = 1$.
If $b = 1$ then $L$ is symmetric with $\lambda_+ = \lambda_-$ by (1), (3),
and (4); the converse is immediate from the definitions and the
partition.
If $c = 1$ then $L$ is symmetric by (1) and $S$ is semidefinite by
(2); were $S$ negative semidefinite, then $\lambda_+ = 0$ and
$d = \|\lambda_-\|^2 = \|S\|^2 = 1$, contradicting $d = 0$; so $L$ is
positive semidefinite.
Conversely, a symmetric positive semidefinite $L$ has $T = 0$ and
$\lambda_- = 0$, so $a = b = d = 0$ and $c = 1$.
The case $d = 1$ follows by applying this to $-L$, which
exchanges $\lambda_+$ with $\lambda_-$ and hence $c$ with $d$.
\end{proof}

\section{Moments}\label{app:moments}

Let $S \neq 0$ be the symmetric component of the bilinear form of a
head and let $\widehat S = S/\|S\|$,
where $\|S\| = \sqrt{\operatorname{tr}(S^2)}$ is the Frobenius norm.
Every eigenvalue of $\widehat S$ has absolute value less than one,
since $\widehat S$ has unit Frobenius norm and more than one nonzero
eigenvalue; hence $I - \widehat S^2$ is invertible. Define
\[
    O = \operatorname{tr}\bigl(\widehat S\,(I-\widehat S^2)^{-1}\bigr),
    \qquad
    E = \operatorname{tr}\bigl(\widehat S^2(I-\widehat S^2)^{-1}\bigr).
\]
In terms of the eigenvalues $\lambda_i$ of $\widehat S$,
\[
    O = \sum_i \frac{\lambda_i}{1-\lambda_i^2}
      = \sum_{k \text{ odd}} \sum_i \lambda_i^k,
    \qquad
    E = \sum_i \frac{\lambda_i^2}{1-\lambda_i^2}
      = \sum_{\substack{k \geq 2 \\ k \text{ even}}} \sum_i \lambda_i^k:
\]
$O$ is the sum of all odd moments of the spectrum and $E$ the sum of
all even moments
(Lemma~\ref{lem:OE-equivalences};
Lemma~\ref{lem:gram-form} expresses $O$ and $E$ directly in terms of
the key and query matrices).
Replacing $S$ by $-S$ changes the sign of $O$ and leaves $E$
unchanged.

We keep the notation of Appendix~\ref{app:bimodal}: $\Lambda_m$,
the involution $\iota$, and the sorted lists $\lambda_\pm$.

\begin{lemma}\label{lem:OE-equivalences}
Every $\lambda \in \Lambda_m$ satisfies $|\lambda_i| < 1$ for all $i$.
Define
\[
    O(\lambda) = \sum_i \frac{\lambda_i}{1-\lambda_i^2},
    \qquad
    E(\lambda) = \sum_i \frac{\lambda_i^2}{1-\lambda_i^2},
    \qquad
    R(\lambda) = \sum_i \frac{1}{1-\lambda_i}.
\]
Then:
\begin{enumerate}
    \item $O(\lambda)
        = \sum_{\substack{k \geq 1 \\ k \text{ odd}}} \sum_i \lambda_i^k$
        and
        $E(\lambda)
        = \sum_{\substack{k \geq 2 \\ k \text{ even}}} \sum_i \lambda_i^k$,
        both series converging absolutely;
    \item $O = \tfrac12(R - R\circ\iota)$ and
        $n + E = \tfrac12(R + R\circ\iota)$;
        in particular $O\circ\iota = -O$ and $E\circ\iota = E$;
    \item if $S$ is a symmetric matrix whose spectrum, with
        multiplicity, is the entries of $\lambda$, then
        \[
            O(\lambda) = \operatorname{tr}\bigl(S(I-S^2)^{-1}\bigr),
            \quad
            E(\lambda) = \operatorname{tr}\bigl(S^2(I-S^2)^{-1}\bigr),
            \quad
            R(\lambda) = \operatorname{tr}\bigl((I-S)^{-1}\bigr).
        \]
\end{enumerate}
\end{lemma}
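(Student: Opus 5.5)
The argument is entirely spectral: I will reduce each of the three assertions to elementary facts about the eigenvalues $\lambda_i$ of the symmetric matrix $S$, or of the Frobenius-normalized matrix from which $\lambda$ arises.

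\emph{Strict bound $|\lambda_i|<1$.} Since $\lambda\in\Lambda_m$ is a unit vector, $\sum_i\lambda_i^2=1$, so $|\lambda_i|\le 1$. Equality $|\lambda_i|=1$ would force every other entry to vanish. This is impossible, because membership in $\Lambda_m$ requires $\lambda_m<0<\lambda_{m+1}$, so at least two entries are nonzero. Hence $\max_i|\lambda_i|=:r<1$, and all the denominators $1-\lambda_i^2$ and $1-\lambda_i$ are nonzero.

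\emph{Part (1).} For $|x|\le r<1$ the geometric series gives
\[
\frac{x}{1-x^2}=\sum_{k\ \mathrm{odd}}x^k,\qquad \frac{x^2}{1-x^2}=\sum_{k\ge2\ \mathrm{even}}x^k,
\]
and both series are dominated by $\sum_k r^k<\infty$. Summing over the finitely many $i$ and interchanging the finite sum with the absolutely convergent series yields (1), including the absolute convergence claim.

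\emph{Part (2).} First I compute $R\circ\iota$. By definition $(\iota\lambda)_i=-\lambda_{n+1-i}$, so reindexing gives
\[
R(\iota\lambda)=\sum_i\frac{1}{1+\lambda_i}.
\]
Then I use the partial fraction identities
\[
\tfrac12\Bigl(\frac1{1-x}-\frac1{1+x}\Bigr)=\frac{x}{1-x^2},\qquad \tfrac12\Bigl(\frac1{1-x}+\frac1{1+x}\Bigr)=\frac{1}{1-x^2}=1+\frac{x^2}{1-x^2}.
\]
Summing over the $n$ indices gives $O=\tfrac12(R-R\circ\iota)$ and $n+E=\tfrac12(R+R\circ\iota)$. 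Lemma~\ref{lem:bimodal-involution} shows that $\iota$ is an involution preserving $\Lambda_m$, so $R\circ\iota\circ\iota=R$. Substituting this into the two formulas immediately gives $O\circ\iota=-O$ and $E\circ\iota=E$.

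\emph{Part (3).} Diagonalize $S=U\,\mathrm{diag}(\lambda)\,U^T$ with $U$ orthogonal. The matrices $I-S^2$ and $I-S$ are then $U\,\mathrm{diag}(1-\lambda_i^2)\,U^T$ and $U\,\mathrm{diag}(1-\lambda_i)\,U^T$, which are invertible by the strict bound above. A rational function $f$ of $S$ that is defined at every eigenvalue equals $U\,\mathrm{diag}(f(\lambda_i))\,U^T$, so its trace is $\sum_i f(\lambda_i)$. Applying this with $f(x)=x/(1-x^2)$, $x^2/(1-x^2)$, and $1/(1-x)$ gives the three trace formulas.

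No step is a real obstacle. The only point requiring care is the strict inequality $|\lambda_i|<1$, which comes from $\lambda$ having at least two nonzero entries; I will state that explicitly rather than leave it implicit.
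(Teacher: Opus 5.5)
Your proposal is correct and follows the paper's proof essentially step for step. It uses the same unit-norm argument for $|\lambda_i|<1$, termwise geometric series for (1), the reindexing $R(\iota\lambda)=\sum_i(1+\lambda_i)^{-1}$ together with the partial-fraction identities and $\iota^2=\id$ for (2), and the spectral mapping of rational functions for (3). Your explicit domination by $r=\max_i|\lambda_i|<1$ is only a slightly tidier justification of the absolute convergence claim.
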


\begin{proof}
If $|\lambda_i| = 1$ for some $i$ then, since $\|\lambda\| = 1$, all
other entries vanish, contradicting the presence of both a negative
and a positive entry; and $|\lambda_i| \leq \|\lambda\| = 1$. Hence
$|\lambda_i| < 1$.
For (1), apply
$\sum_{k \text{ odd}} t^k = t/(1-t^2)$ and
$\sum_{k \geq 2 \text{ even}} t^k = t^2/(1-t^2)$,
valid for $|t| < 1$, to each entry and sum over $i$.
For (2), the entries of $\iota\lambda$ form the same multiset as the
entries of $-\lambda$ (Lemma~\ref{lem:bimodal-involution}), so
$R(\iota\lambda) = \sum_i (1+\lambda_i)^{-1}$, and the claims follow
from
\[
    \frac12\left(\frac{1}{1-t}-\frac{1}{1+t}\right)
    = \frac{t}{1-t^2},
    \qquad
    \frac12\left(\frac{1}{1-t}+\frac{1}{1+t}\right)
    = 1 + \frac{t^2}{1-t^2}.
\]
The parity statements follow from (2) since
$\iota^2 = \id$.
Part (3) is the spectral mapping
$\operatorname{tr} f(S) = \sum_i f(\lambda_i)$
applied to $f(t) = t/(1-t^2)$, $t^2/(1-t^2)$, and $(1-t)^{-1}$.
\end{proof}

\begin{lemma}\label{lem:gram-form}
Let $K, Q \in M_{n\times N}(\RR)$,
let $S = \tfrac12(K^TQ + Q^TK)$, and let
\[
    M = \begin{pmatrix} K \\ Q \end{pmatrix}
    \colon \RR^N \to \RR^{2n},
    \qquad
    G = MM^T,
    \qquad
    J = \begin{pmatrix} 0 & I_n \\ I_n & 0 \end{pmatrix}.
\]
Then $S = \tfrac12 M^TJM$ and
\[
    \operatorname{tr}(S^k)
    = \operatorname{tr}\bigl((\tfrac12 JG)^k\bigr)
    \qquad (k \geq 1);
\]
in particular
$\|S\|^2 = \tfrac14\operatorname{tr}\bigl((JG)^2\bigr)$.
If moreover $M$ is surjective and $n \geq 1$,
then the nonzero spectrum of $S$, with multiplicity,
is the spectrum of $\tfrac12 JG$;
the sorted normalized nonzero spectrum $\lambda$ of $S$
lies in $\Lambda_n$, and with $H = JG/(2\|S\|)$,
\[
    O(\lambda) = \operatorname{tr}\bigl(H(I-H^2)^{-1}\bigr),
    \qquad
    E(\lambda) = \operatorname{tr}\bigl(H^2(I-H^2)^{-1}\bigr),
    \qquad
    R(\lambda) = \operatorname{tr}\bigl((I-H)^{-1}\bigr).
\]
\end{lemma}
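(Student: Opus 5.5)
The identity $S=\tfrac12 M^TJM$ is a direct block computation. Since $J$ is symmetric, $M^TJM$ is symmetric. Writing $M^T=(K^T\; Q^T)$, we get
\[
M^TJM = K^T Q + Q^T K,
\]
because $J$ swaps the two blocks. The trace formula then follows from cyclicity of the trace. Grouping the $k$ factors as
\[
(\tfrac12 M^TJM)^k = \tfrac1{2^k} M^T (JMM^T)^{k-1} J M,
\]
we obtain $\operatorname{tr}(S^k)=2^{-k}\operatorname{tr}\bigl((JMM^T)^{k-1}JMM^T\bigr)=\operatorname{tr}\bigl((\tfrac12 JG)^k\bigr)$. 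More cleanly, $M^T(JM)$ and $(JM)M^T$ have the same nonzero eigenvalues with multiplicity, which is the standard $AB$ versus $BA$ fact. Taking $k=2$ and using $\|S\|^2=\operatorname{tr}(S^2)$ for symmetric $S$ gives the Frobenius identity.

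For the surjective case, I will use the $AB$/$BA$ fact in its characteristic-polynomial form. With $A=M^T$ ($N\times 2n$) and $B=JM$ ($2n\times N$), we have
\[
t^{2n}\det(tI_N-AB)=t^N\det(tI_{2n}-BA).
\]
So the nonzero spectrum of $2S=AB$, with algebraic multiplicity, equals the nonzero spectrum of $JG=BA$, and multiplicities agree since $S$ is diagonalizable. It remains to show that $\tfrac12 JG$ has no zero eigenvalue. Surjectivity of $M$ makes $G=MM^T$ positive definite, hence invertible, and $J$ is invertible, so $JG$ is invertible. Therefore the spectrum of $\tfrac12 JG$ is exactly the nonzero spectrum of $S$, consisting of $2n$ real numbers. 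The eigenvalues are real because $JG$ is similar to the symmetric matrix $G^{1/2}JG^{1/2}$.

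To place the normalized spectrum in $\Lambda_n$, Theorem~\ref{thm:heads-have-balanced-attention} gives exactly $n$ positive and $n$ negative nonzero eigenvalues. Independently, $G^{1/2}JG^{1/2}$ is congruent to $J$, which has signature $(n,n,0)$ by Sylvester's law. Dividing by $\|S\|$, which is nonzero since $n\ge1$ and $S$ has rank $2n$, gives a unit vector. Its sorted entries have exactly $n$ negative and $n$ positive values, so $\lambda_n<0<\lambda_{n+1}$ and $\lambda\in\Lambda_n$, with the $m$ of Appendix~\ref{app:bimodal} equal to $n$ here.

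The final formulas follow from the spectral mapping theorem. $H=JG/(2\|S\|)$ is diagonalizable, being similar to a symmetric matrix, and its eigenvalues are exactly the entries of $\lambda$. By Lemma~\ref{lem:OE-equivalences} these satisfy $|\lambda_i|<1$, so $I-H^2$ and $I-H$ are invertible. Hence $\operatorname{tr} f(H)=\sum_i f(\lambda_i)$ for $f(t)=t/(1-t^2)$, $t^2/(1-t^2)$, and $1/(1-t)$. By definition these sums are $O(\lambda)$, $E(\lambda)$, and $R(\lambda)$.

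The only delicate point is keeping track of multiplicities when transferring the spectrum between $S$ and $\tfrac12 JG$. The characteristic-polynomial identity handles this, together with diagonalizability on both sides.
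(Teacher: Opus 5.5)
Your proof is correct and follows essentially the same route as the paper. Both arguments use block multiplication and cyclicity of the trace, the $AB$/$BA$ eigenvalue correspondence with $A=M^T$ and $B=JM$, and the congruence of $G^{1/2}JG^{1/2}$ with $J$ together with the similarity $JG=G^{-1/2}\bigl(G^{1/2}JG^{1/2}\bigr)G^{1/2}$ (giving real spectrum, diagonalizability and the $(n,n)$ sign count), and finish with the spectral mapping for $H$. Your explicit observation that $JG$ is invertible, so the nonzero spectrum of $\tfrac12 JG$ is its whole spectrum, spells out a step the paper leaves implicit, and citing Theorem~\ref{thm:heads-have-balanced-attention} for the sign count is a valid alternative.
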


\begin{proof}
Block multiplication gives
$M^TJM = K^TQ + Q^TK = 2S$.
By the cyclic property of the trace,
\[
    \operatorname{tr}\bigl((M^TJM)^k\bigr)
    = \operatorname{tr}\bigl(M^T(JG)^{k-1}JM\bigr)
    = \operatorname{tr}\bigl((JG)^{k-1}JMM^T\bigr)
    = \operatorname{tr}\bigl((JG)^k\bigr).
\]
For the second claim, recall that for any
$A \in M_{N\times 2n}(\RR)$ and $B \in M_{2n\times N}(\RR)$
the nonzero eigenvalues of $AB$ and $BA$ coincide with multiplicity;
applying this to $A = M^T$ and $B = \tfrac12 JM$ identifies the
nonzero spectrum of $S$ with the nonzero spectrum of
$\tfrac12 JMM^T = \tfrac12 JG$.
If $M$ is surjective then $G$ is positive definite.
The matrix $B = G^{1/2}JG^{1/2}$ is symmetric and congruent
to $J$, so by Sylvester's law of inertia it has exactly $n$ positive
and $n$ negative eigenvalues; and
$JG = G^{-1/2}B\,G^{1/2}$
is similar to $B$, hence diagonalizable with the same real spectrum.
Consequently the nonzero spectrum of $S$ consists of $n$ positive and
$n$ negative eigenvalues, so its sorted normalization $\lambda$ lies
in $\Lambda_n$ and, by Lemma~\ref{lem:OE-equivalences}, every entry
of $\lambda$ has absolute value less than one.
The matrix $H$ is diagonalizable with spectrum $\lambda$, and for a
diagonalizable matrix the trace of a rational function with poles off
the spectrum is the sum of its values on the spectrum, which gives
the three displayed identities.
\end{proof}

\section{Supplementary Figures}\label{app:score-figures}

This appendix collects supplementary figures for the experiments in
\S\ref{sec:experimentsQK}.

\begin{figure}[!htbp]
\centering
\includegraphics[width=\textwidth]{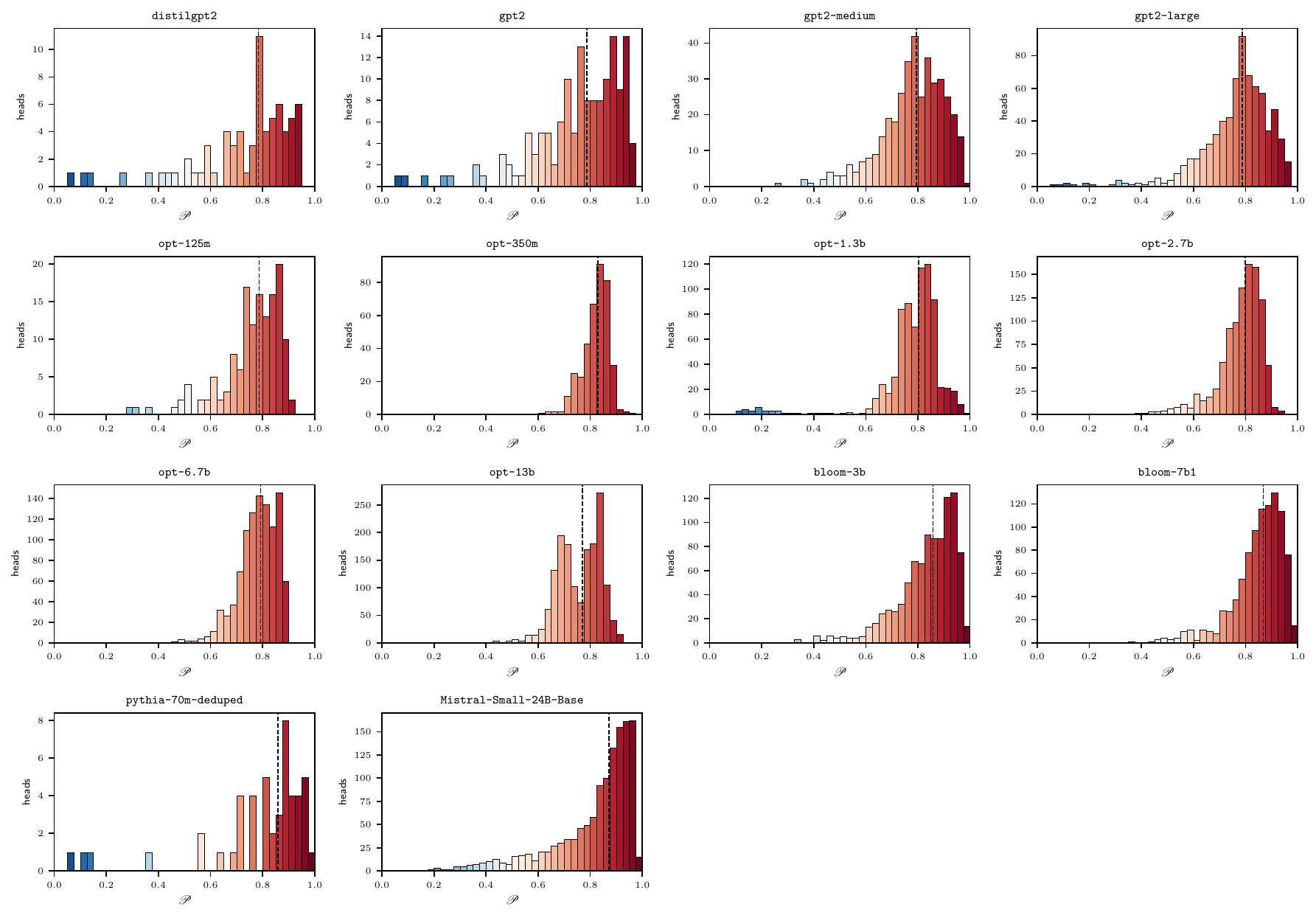}
\caption{Distribution of the trained-head pairing scores
$\mathscr{P}$ in Figure~\ref{fig:qk-random-baseline}.
Dashed lines mark model medians.}
\label{fig:pairing-histogram}
\end{figure}

\FloatBarrier
\clearpage
\begin{figure}[p]
\centering
\includegraphics[width=\textwidth,height=0.87\textheight,keepaspectratio]{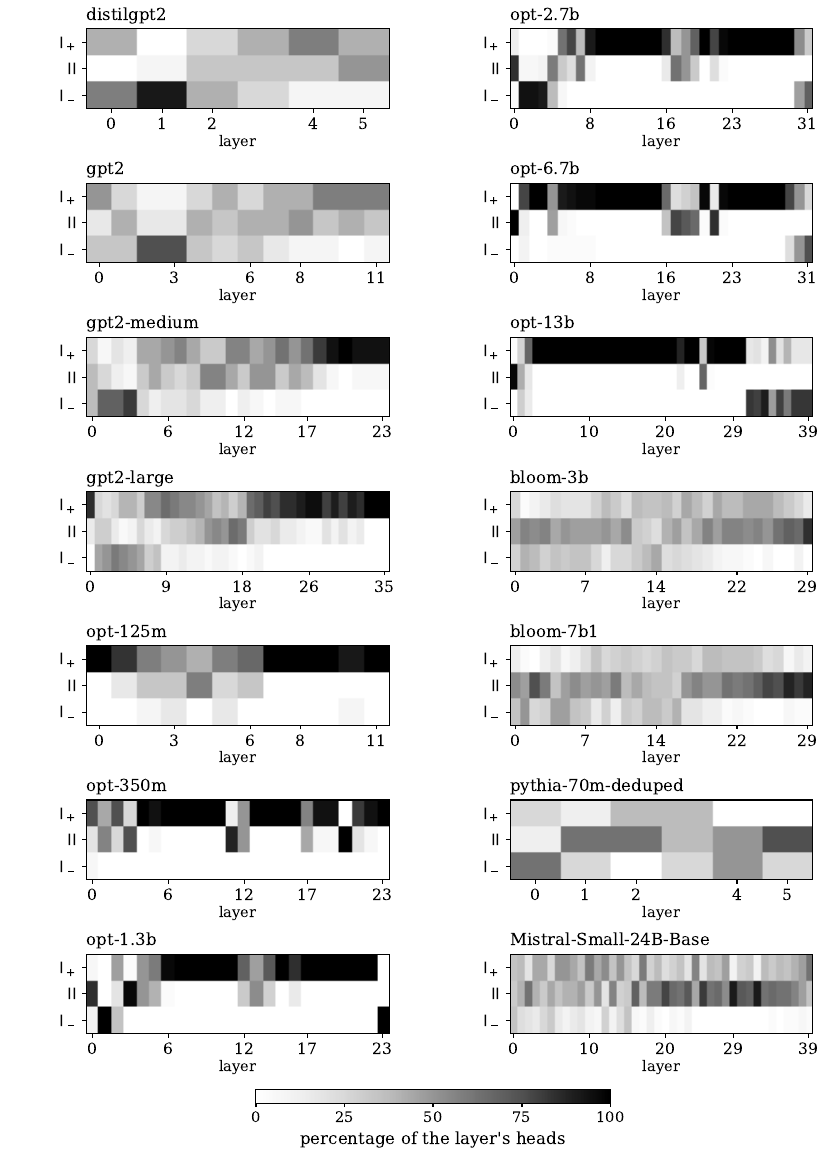}
\caption{Parity by layer in all fourteen models. Each cell shows the
percentage of a layer's heads of the indicated type, with a shared scale
from white ($0\%$) to black ($100\%$). Types use the symmetric profile:
Type~$\mathrm{II}$ has $c_S+d_S\leq1/50$; otherwise $c_S>d_S$ gives
Type~$\mathrm{I}_+$ and the remaining heads are Type~$\mathrm{I}_-$.
Layers are indexed from zero; RoPE models use relative position zero.}
\label{fig:parity-by-layer}
\end{figure}
\FloatBarrier

\begin{figure}[!htbp]
\centering
\includegraphics[width=0.9\textwidth,height=0.87\textheight,keepaspectratio]{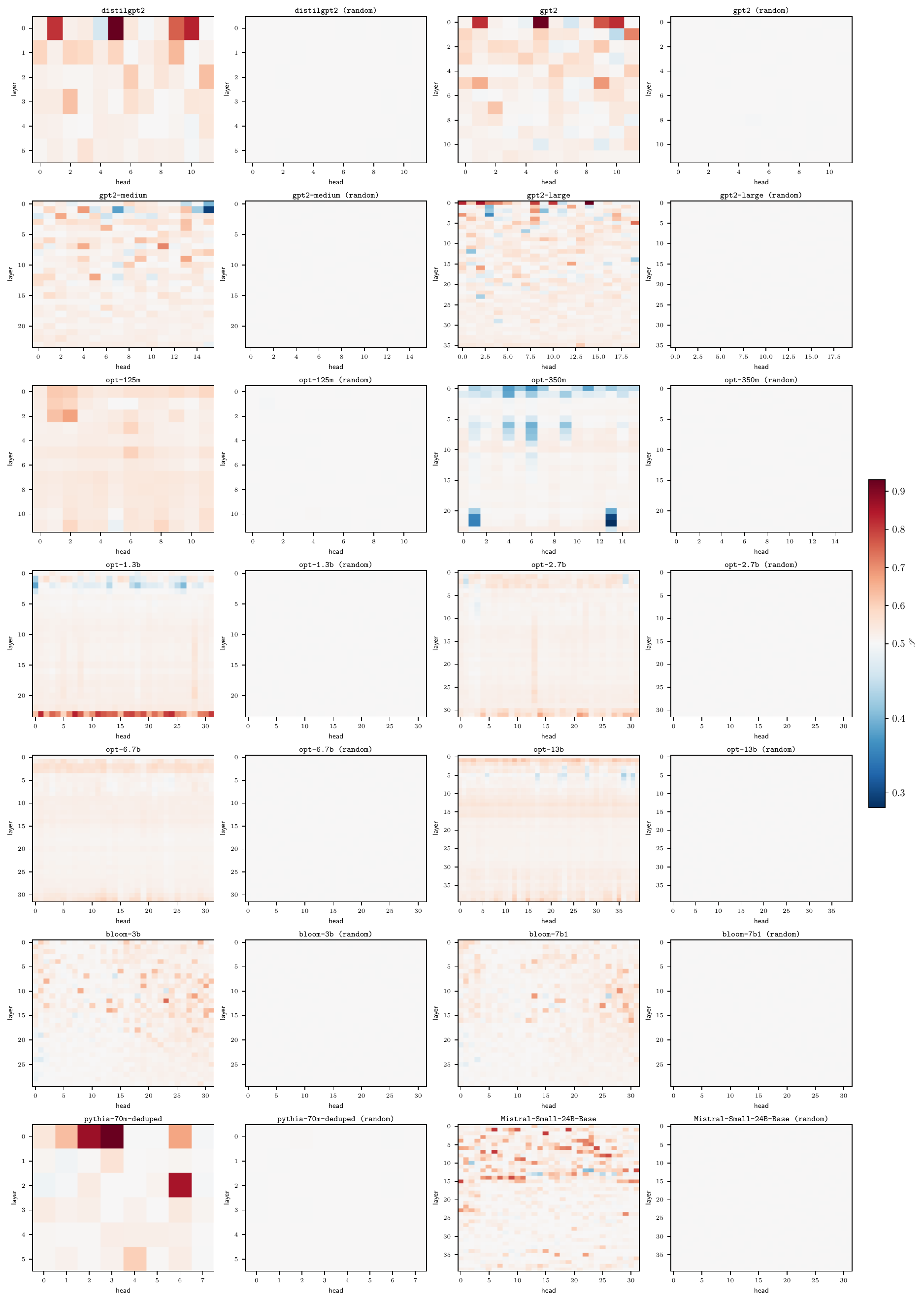}
\caption{Symmetric energy $\mathscr{S}=\|S\|^2/\|L\|^2$ at each
attention head. For each model, the trained heads (left) are paired
with independent random products $L=K^TQ$ (right), with iid
standard normal $K,Q\in\RR^{n\times N}$ of the same dimensions.
Rows index layers and columns index heads, both starting at zero.
The shared color scale is white at $1/2$, red above and blue below;
the random baseline has expectation $(N+1)/(2N)$.
RoPE models use relative position $d=0$.}
\label{fig:qk-multimodel}
\end{figure}

\begin{figure}[!htbp]
\centering
\includegraphics[width=\textwidth,height=0.87\textheight,keepaspectratio]{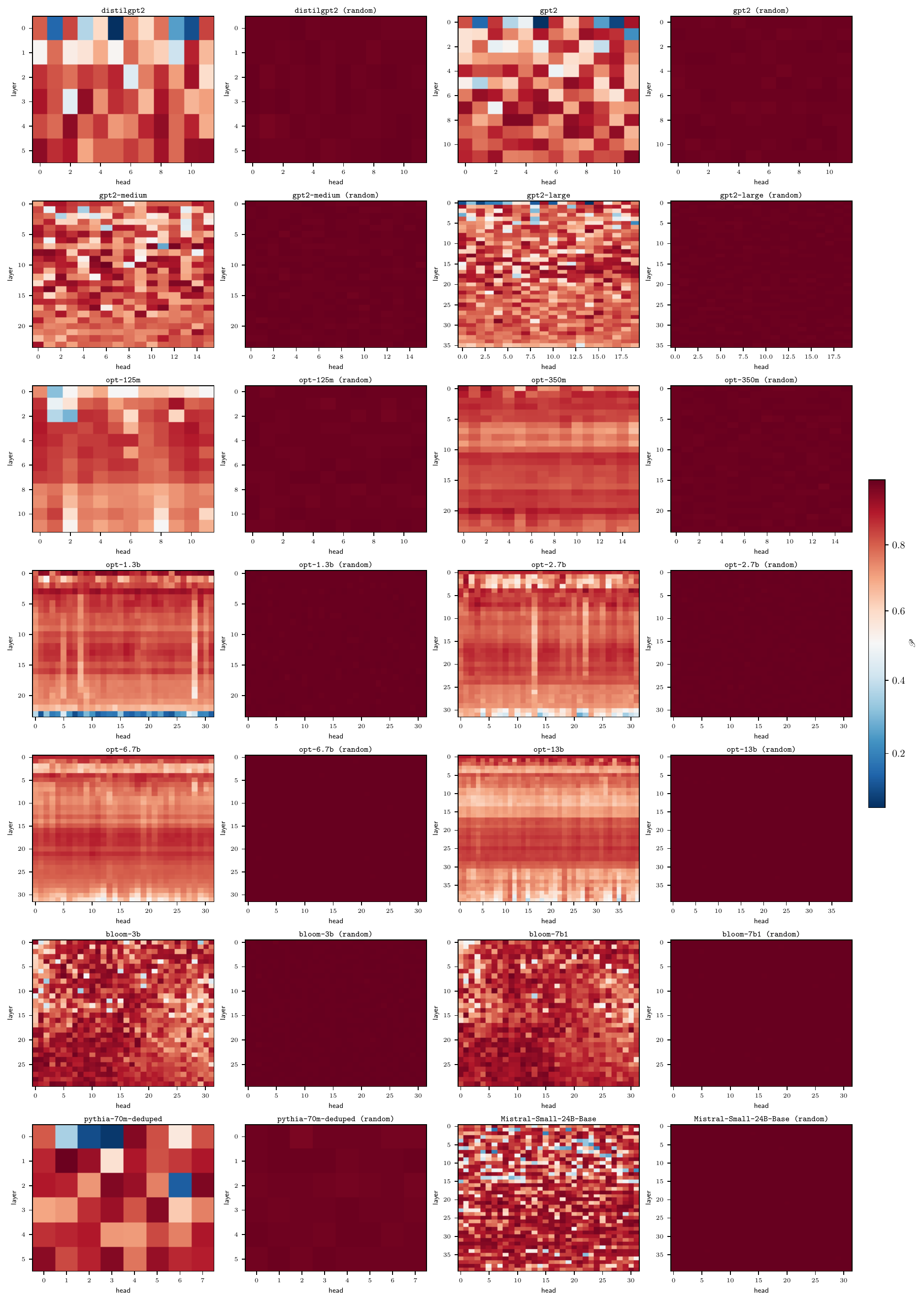}
\caption{Pairing score
$\mathscr{P}=1-\|\lambda_+-\lambda_-\|/\|S\|$ at each attention head.
Each model's trained heads (left) are paired with independent Gaussian
QK products of the same dimensions (right), as in
Figure~\ref{fig:qk-multimodel}. Rows index layers and columns index
heads. A score of one means the symmetric spectrum is symmetric about
zero. The shared color scale is white at $1/2$, red above and blue
below. RoPE models use relative position $d=0$.}
\label{fig:qk-random-baseline}
\end{figure}
\FloatBarrier

\bibliographystyle{abbrv}
\bibliography{draft}

\begin{thebibliography}{10}

\bibitem{biderman2023pythia}
S.~Biderman, H.~Schoelkopf, Q.~G. Anthony, H.~Bradley, K.~O'Brien, E.~Hallahan,
  M.~A. Khan, S.~Purohit, U.~S. Prashanth, E.~Raff, et~al.
\newblock Pythia: A suite for analyzing large language models across training
  and scaling.
\newblock In {\em International Conference on Machine Learning}, pages
  2397--2430. PMLR, 2023.

\bibitem{bigscience2022bloom}
{BigScience Workshop}, T.~L. Scao, A.~Fan, C.~Akiki, E.~Pavlick, S.~Ili{\'c},
  D.~Hesslow, R.~Castagn{\'e}, A.~S. Luccioni, F.~Yvon, M.~Gall{\'e}, J.~Tow,
  A.~M. Rush, S.~Biderman, et~al.
\newblock {BLOOM}: A 176b-parameter open-access multilingual language model.
\newblock {\em arXiv preprint arXiv:2211.05100}, 2022.

\bibitem{darveshi2025spectral}
S.~Darveshi.
\newblock Spectral taxonomy of {QK} circuits in transformer models.
\newblock LessWrong, 2025.
\newblock
  \url{https://www.lesswrong.com/posts/Yig9fc7wAxKqG63Do/spectral-taxonomy-of-qk-circuits-in-transformer-models}.

\bibitem{horn2013matrix}
R.~A. Horn and C.~R. Johnson.
\newblock {\em Matrix Analysis}.
\newblock Cambridge University Press, 2nd edition, 2013.

\bibitem{migliarini2025selfhating}
M.~Migliarini.
\newblock The self-hating attention head: A deep dive in {GPT-2}.
\newblock LessWrong / AI Alignment Forum, 2025.
\newblock
  \url{https://www.lesswrong.com/posts/wxPvdBwWeaneAsWRB/the-self-hating-attention-head-a-deep-dive-in-gpt-2-1}.

\bibitem{mistralai2025small3}
{Mistral AI Team}.
\newblock {Mistral Small 3}.
\newblock \url{https://mistral.ai/news/mistral-small-3/}, Jan. 2025.

\bibitem{press2022train}
O.~Press, N.~A. Smith, and M.~Lewis.
\newblock Train short, test long: Attention with linear biases enables input
  length extrapolation.
\newblock In {\em International Conference on Learning Representations}, 2022.

\bibitem{radford2019language}
A.~Radford, J.~Wu, R.~Child, D.~Luan, D.~Amodei, and I.~Sutskever.
\newblock Language models are unsupervised multitask learners.
\newblock {\em OpenAI Technical Report}, 2019.

\bibitem{sanh2019distilbert}
V.~Sanh, L.~Debut, J.~Chaumond, and T.~Wolf.
\newblock {DistilBERT}, a distilled version of {BERT}: smaller, faster, cheaper
  and lighter.
\newblock {\em arXiv preprint arXiv:1910.01108}, 2019.

\bibitem{saponati2025underlying}
M.~Saponati, P.~Sager, P.~V. Aceituno, T.~Stadelmann, and B.~Grewe.
\newblock The underlying structures of self-attention: symmetry,
  directionality, and emergent dynamics in transformer training.
\newblock {\em arXiv preprint arXiv:2502.10927}, 2025.

\bibitem{su2024roformer}
J.~Su, M.~Ahmed, Y.~Lu, S.~Pan, B.~Wen, and Y.~Liu.
\newblock {RoFormer}: Enhanced transformer with rotary position embedding.
\newblock {\em Neurocomputing}, 568:127063, 2024.

\bibitem{vaswani2017attention}
A.~Vaswani, N.~Shazeer, N.~Parmar, J.~Uszkoreit, L.~Jones, A.~N. Gomez,
  L.~Kaiser, and I.~Polosukhin.
\newblock Attention is all you need.
\newblock {\em arXiv preprint arXiv:1706.03762}, 2017.

\bibitem{zhang2022opt}
S.~Zhang, S.~Roller, N.~Goyal, M.~Artetxe, M.~Chen, S.~Chen, C.~Dewan, M.~Diab,
  X.~Li, X.~V. Lin, et~al.
\newblock {OPT}: Open pre-trained transformer language models.
\newblock {\em arXiv preprint arXiv:2205.01068}, 2022.

\end{thebibliography}

\end{document}